\documentclass[11pt]{article}




\usepackage{fullpage}



\usepackage[utf8]{inputenc} 
\usepackage[T1]{fontenc}    
\usepackage{times}
\usepackage{enumitem}
\usepackage[colorlinks = true]{hyperref}       
\usepackage{float}
\usepackage[numbers]{natbib}

\usepackage{datetime}
\usepackage{bbm}
\usepackage[normalem]{ulem}
\usepackage{authblk}
\usepackage{url}            
\usepackage{booktabs}       
\usepackage{amsthm}
\usepackage{wrapfig, blindtext}
\usepackage{amsfonts}       
\usepackage{bbold}
\usepackage{nicefrac}       
\usepackage{xcolor}
\usepackage{hyperref}
\usepackage{amsmath}
\usepackage{amssymb}
\usepackage{graphicx}
\usepackage{algorithm}
\usepackage{algorithmic}
\usepackage{wrapfig}
\usepackage{microtype}      
\usepackage{comment}



\theoremstyle{plain}
\newtheorem{theorem}{Theorem}[section]
\newtheorem{proposition}[theorem]{Proposition}
\newtheorem{lemma}[theorem]{Lemma}
\newtheorem{corollary}[theorem]{Corollary}
\theoremstyle{definition}
\newtheorem{definition}[theorem]{Definition}

\theoremstyle{remark}
\newtheorem{remark}[theorem]{Remark}

\definecolor{myblue}{RGB}{80,80,160}
\definecolor{mygreen}{RGB}{80,160,80}
\definecolor{light-gray}{gray}{0.4}

\newcommand{\nr}[1]{\textbf{\textcolor{magenta}{[nr:#1]}}}

\newcommand{\com}[1]{\small\color{light-gray}{#1}\color{black}\normalsize\\}
\newcommand{\algoname}{GRUB}

\title{Maximizing and Satisficing in Multi-armed Bandits with Graph Information}

%


\author[1]{Parth~K.~Thaker}
\author[1]{Mohit~Malu}
\author[2]{Nikhil~Rao}
\author[1]{Gautam~Dasarathy}

\affil[1]{School of Electrical, Computer and Energy Engineering, Arizona State University}
\affil[2]{Microsoft}

\begin{document}

\maketitle

\begin{abstract}
Pure exploration in multi-armed bandits has emerged as an important framework for modeling decision making and search under uncertainty. In modern applications however, one is often faced with a tremendously large number of options and even obtaining one observation per option may be too costly rendering traditional pure exploration algorithms ineffective. Fortunately, one often has access to similarity relationships amongst the options that can be leveraged. In this paper, we consider the pure exploration problem in stochastic multi-armed bandits where the similarities between the arms is captured by a graph and the rewards may be represented as a smooth signal on this graph. In particular, we consider the problem of finding the arm with the maximum reward (i.e., the maximizing problem) or one that has sufficiently high reward (i.e., the satisficing problem) under this model. We propose novel algorithms \textbf{\algoname{}} (GRaph based UcB) and $\zeta$-\textbf{\algoname{}} for these problems and provide theoretical characterization of their performance which specifically elicits the benefit of the graph side information. We also prove a lower bound on the data requirement that shows a large class of problems where these algorithms are near-optimal. We complement our theory with experimental results that show the benefit of capitalizing on such side information.

\end{abstract}

\section{Introduction}
\label{sec:intro}
The multi-armed bandit has emerged as an important paradigm for modeling sequential decision making and learning under uncertainty. Practical applications include design policies for sequential experiments~\cite{robbins1952some}, combinatorial online leaning tasks~\cite{chen2014combinatorial}, collaborative learning on social media networks~\cite{ kolla2018collaborative, audibert2010best}, latency reduction in cloud systems~\cite{joshi2017efficient} and many others~\cite{ NIPS2015_ab233b68, zhou2014optimal, tekin2018multi, kandasamy2016multi}. In the traditional multi-armed bandit problem, the goal of the agent is to sequentially choose among a set of actions or arms to maximize a desired performance criterion or reward. This objective demands a delicate tradeoff between exploration (of new arms) and exploitation (of promising arms). An important variant of the reward maximization problem is the  identification of arms with the highest (or near-highest) expected reward. This \emph{best arm identification}~\cite{mannor2004sample, even2006action} problem, which is one of pure exploration, has a wide range of important applications like identifying and testing drugs to treat infectious diseases like COVID-19, finding relevant users to run targeted ad campaigns, hyperparameter optimization in neural networks and recommendation systems. The broad range of applications of this paradigm is unsurprising given its ability to essentially model any optimization problem of black-box functions on discrete (or discretizable) domains with noisy observations.



While pure exploration problems in bandits show considerable promise, there are significant hurdles to their practical usage. In modern applications, one is often faced with a tremendously large number of options (sometimes in the order of hundreds of millions) that need to be considered for decision making. In such cases, playing (i.e., obtaining a random sample from) each bandit arm  even once could be intractable. This renders traditional approaches to pure exploration ineffective. Fortunately, in several applications, the arms and their rewards are related to each other and information about the reward of one arm may be deduced from plays of similar arms. In this paper, we consider the pure exploration problem in stochastic multi-armed bandits where the similarities between arms is captured by a graph and the rewards may be represented as a smooth signal on this graph. Such graph side information is available in a wide range of applications: search and recommendation systems have graphs that capture similarities between items~\citep{guo2010structured, rao2015collaborative, wu2020graph, dasarathy2017computational}; drugs, molecules and their interactions can be represented on a graph \citep{drkg2020}; targeted advertising considers users connected to each other in a social network \citep{jamali2009trustwalker}, and hyperparameters for training neural network are often inter-related \citep{young2018hyperspace}. It is worth noting that such graphs are sometimes intrinsic to the problem (e.g., spatial coordinates or social/computer networks), or may be inferred based on a similarity metrics defined on arm features; a recent line of work considers constructing such graphs to enable more effective learning \citep[see e.g.,][]{Zhang2022GALAXY,kushnir2020Diffusion}. 


\paragraph{Our Contributions:}
\label{sec:contribs}
We consider the pure exploration in multi-arm bandits problem when a graph that captures similarities between the arms is available. In particular, we consider the problem of finding the arm with the maximum reward (i.e., the maximizing problem) or one that has sufficiently high reward (i.e., the satisficing problem\footnote{named after  Herbert Simon's celebrated alternative model of decision making \cite{simon1955behavioral}}) under the assumption that  arm rewards are smooth with respect to a known graph. Our main contributions may be summarized as follows: 

{\bf (a)} We devise a novel algorithm \algoname{} for the best arm identification problem (i.e., the maximizing problem) that specifically exploits the {\em homophily} (strong connections imply similar average rewards) on the graph (Section~\ref{sec:sampling_algo}).

\vspace{-1mm}
{\bf (b)} We provide a theoretical characterization of the performance of \algoname{}. To this end, we define a novel measure $\mathfrak{I}$ that we dub the ``\textit{influence factor}'' which depends on the resistance distance of the underlying graph. This measure captures the benefit of the graph side information and plays a central role in the analysis of \algoname{}.
In the traditional (graph-free) best arm identification problem, the sample complexity is know to scale as $\sum_{i = 1}^n \frac{1}{\Delta_i^2}$, where $\Delta_i$ is the gap between the expected rewards of the best arm and arm $i$. On the other hand, we show that \algoname{} roughly has a complexity that scales like $\sum_{i \in \mathcal{H}} \frac{1}{\Delta_i^2}$ samples where the set $\mathcal{H}$ is a set dependent on the influence factor, which contains arms which are hard to distinguish from optimal arm. For a broad range of problems $|\mathcal{H}| \ll  n$, yielding significant improvement over traditional best arm identification algorithms (Section \ref{sec:sample_complexity_base}). 

\vspace{-1mm}
{\bf(c)} In Section~\ref{sec:lower-bounds}, we provide lower bounds on the minimum number of samples required for identification of the optimal arm when a graph encoding arm similarities is available. This shows the near-optimality  of \algoname{} for an important class of representative problems.

\vspace{-1mm}
{\bf(d)} In many real world scenarios, the aim of finding the absolute best arm can often be too costly or even intractable. In these situations, it may be more appropriate to solve the \emph{satisficing} problem, where the algorithm returns an arm that is good enough. We propose a variant of \algoname{}, dubbed $\zeta$-\algoname{} for this important setting in Section~\ref{sec:zeta_best_arm}

\vspace{-1mm}
{\bf(e)} Finally, in Section~\ref{sec:exp}, we complement our theoretical results with an empirical evaluation of our algorithms. We further provide algorithmic improvements to \algoname{} and discuss novel sampling policies for best arm identification in the presence of graph information.

\subsection{Related Work}
\label{sec:related}


The textbook~\cite{lattimore2020bandit} is an excellent resource for the general problem of multi-armed bandits. The pure exploration variant of the bandit problem is more recent, and has also received considerable attention in the literature~ \cite{bubeck2009pure, bubeck2011pure, garivier2019nonasymptotic, gabillon2012best, audibert2010best, jamieson2014best}. These lines of work treat the bandit arms or actions as independent entities and playing a particular arm yields no information about any other arm. This leads to great difficulty in scaling such methods, since in the problem setups with large number of arms, attempting to play \textit{all} arms is not practical. We resolve this precise roadblock by introducing a convenient way of of appending graph side information into the mix which provably accelerates the process of sub-optimal arm elimination (potentially without playing it even once!)

A recent line of work ~\cite{li2016collaborative, lattimore2020bandit, gupta2020multiarmed, yang2020laplacianregularized, gentile2014online, ma2015active} has proposed the leveraging of structural side-information for the multi-armed bandit problem for regret minimization. Such topology-based bandit methods work under the assumption that pulling an arm reveals information about other, correlated arms ~\cite{gupta2020multiarmed, shamir2011variant}, which help in developing better regret methods. Similarly, spectral bandits \cite{kocak2020best, yang2020laplacianregularized, valko2014spectral} assume user features are modelled as signals defined on an underlying graph, and use this to assist in learning. The works~\cite{pmlr-v162-atsidakou22a} and~\cite{Online2015Yifan} consider similar graph information models, albiet at a degraded level. The authors in~\cite{lejeune2020thresholding} use the graphs to improve the regret bounds in a thresholding bandit setting. Work revolving around spectral bandits utilize the \textit{spectrum} of the graph laplacian. In contrast, we focus on the \textit{combinatorial properties} of the graphs to devise algorithms and analyse them. Another line of work \cite{dasarathy2015s2, wang2019distance, lipor2018quantile,ma2013optimality}  considers search problems on graphs under a different model and there is an opportunity for future work to combine these techniques. 
 
Most of the aforementioned works focus on regret minimization in the presence of graph information. The problem of pure exploration with similarity graphs has received far less attention. The authors in \cite{kocak2020best} were the first to attempt at filling this gap  for the spectral bandit setting. They provide an information-theoretic lower bound and a gradient-based algorithm to estimate this lower bound to sample the arms. The authors provide performance guarantees for the algorithm, but these results only indirectly capture the benefit brought by the graph; our results on the other hand are based on a novel complexity measure that explicitly elicits the benefit of having the graph side information. 

Note that, similarity graph information considered in this work is fundamentally different from linear rewards assumption in contextual/linear bandits. In the linear bandits problem, the reward behavior is assumed to be low dimensional and this is crucial for the improved regret bounds and sample complexity guarantees~\cite{lattimore2020bandit, soare2014bestarm}. In the current work we do not make any assumptions on low dimensionality of the rewards but still show improvements in sample complexity provided a good arm-similarity graph is available. We show a toy example in Appendix~\ref{app:as_linear_bandit} where a low dimensional linear bandit cannot be competitive with the corresponding graph-bandit setting.

\section{Problem Setup and Notation}\label{sec:prob_intro}
We consider an $n$-armed bandit problem with the set of arms given by $[n] \triangleq \{1,2,3,\dots, n\}$. Each arm $i \in [n]$ is associated with a $\sigma$-sub-Gaussian distribution $\nu_i$. That is, $\mathbb{E}_{X\sim \nu_i} \left[ \exp\left( s(X - \mu_i) \right) \right] \leq \exp\left( \frac{\sigma^2 s^2}{2} \right)$ 
$\forall s \in \mathbb{R}$, where $\mu_i = \mathbb{E}_{\nu_i}\left[ X \right]$ is said to be the (expected or mean) reward associated to arm $i$. We will let $\pmb{\mu}\in \mathbb{R}^n$ denote  the vector of all the arm rewards. A ``play'' of an arm $i$ is simply an observation of an independent sample from $\nu_i$; this can be thought of as a noisy observation of the corresponding mean $\mu_i$. 
The goal of the best-arm identification problem is to identify, from such noisy samples, the arm $a^\ast\triangleq \arg\max_{i\in [n]} \mu_i$ that has the maximum expected reward, denoted by $\mu^\ast$. For each arm $i\in [n]$, we will let $\Delta_i \triangleq \mu^\ast - \mu_i$  denote the sub-optimality of the arm.

As discussed in Section~\ref{sec:intro}, our goal is to consider the best-arm identification where one has additional access to information about the similarity of the arms under consideration. In particular, we model this side information as a weighted undirected graph $G = (V_G, E_G, A_G)$ where the vertex set, $V_G = [n]$, is identified with the set of arms, the edge set $E_G \subseteq {[n] \choose 2}$, and adjacency matrix $A_G\in \mathbb{R}^{n\times n}$ describes the weights of the edges $E$ between the arms which captures the similarity in means of connected arms; the higher the weight, the more similar the rewards from the corresponding arms. We will let $L_G = D_G - A_G$ denote the combinatorial Laplacian\footnote{All our results continue to hold if this is replaced with the normalized, random walk, or generalized Laplacian.} of the graph \cite{chung1997spectral}, where $D_G = {\rm diag}(A_G\times \mathbb{1}_n)$ is a diagonal matrix containing the weighted degrees of the vertices. We will suppress the dependence on $G$ when the context is clear. 
Subsequently, we show that if one has access to this graph and the vector of rewards $\pmb{\mu}$ is \emph{smooth} with respect to the graph (that is, highly similar arms have highly similar rewards), then one can solve the pure exploration problem extremely efficiently.
We will capture the degree of smoothness of $\pmb{\mu}$ with respect to the graph using the following seminorm\footnote{$L_G$ is not positive definite, and can be verified to have as many zero eigenvalues as the number of connected components in $G$}: 
\begin{equation}
  \left\| \pmb{\mu} \right\|_G^2 \triangleq \langle\pmb{\mu}, L_G\pmb{\mu}\rangle = \sum_{\{i,j\}\in E_G}A_{ij} (\mu_i - \mu_j)^2.  
\end{equation}
The second equality above can be verified by a straightforward calculation. Also, notice that $\left\| \pmb{\mu} \right\|_G$ being small implies $\mu_i\approx \mu_j$ for $(i, j)\in E$. In such scenario we say that the mean vector $\pmb{\mu}$ is smooth over graph $G$. This observation has inspired the use of the Laplacian in several lines of work to enforce smoothness on the vertex-valued functions \cite{ando2007learning, valko2014spectral, zhu2005semi, lejeune2020thresholding}. For $\epsilon >0$, we say that arms (rewards) are $\epsilon$-smooth with respect to a graph $G$ if $\left\| \pmb{\mu} \right\|_G \leq \epsilon$. 

Let $\mathcal{C}(G) \subset 2^{[n]}$ denote the set of all connected components and let $k(G) \triangleq \left| \mathcal{C}(G) \right|$ denote the number of connected components of the graph $G$. For a vertex $i \in [n]$, we will let $C_i(G) \in \mathcal{C}(G)$ denote the connected component that contains $i$. When the context is clear we sometimes let $C_i(G)$ also refer all the nodes in the connected component. We say a graph $G = ([n],E)$ has $k$-\textit{isolated cliques} if it can be divided into fully connected sub-graphs $G_i = (V_i,E_i)$ such that $V_i \subseteq [n], E_i = {V_i \choose 2}$ for all $i \in [k], V_i\cap V_j = \emptyset, E_i\cap E_j = \emptyset $ for all $i,j \in [k]$, and $\bigcup_{i=1}^k V_i = [n],\bigcup_{i=1}^k E_i = E$. Notice that we only have one clique if $G$ is fully connected.


To solve the best-arm identification problem, we need a sampling policy to sequentially and interactively select the next arm to play, and a stopping criterion. For any time $t\in \mathbb{N}$, the sampling policy $\pmb{\pi}_t = \{\pi_s\}_{s\leq t}$ is a function that maps $t$ to an arm in $[n]$ given the history of observations up to time $t-1$. With slight abuse of notation, we will let $\pi_t$ denote the arm chosen by an agent at time $t$. Let $r_{t, \pi_t}$ denote the random reward observed at time $t$ from arm $\pi_t$. We use $t_i(\pmb{\pi}_t)$ (referred as $t_i$ for simplicity) to denote the number of times arm $i$ is played under the sampling policy $\pmb{\pi}_t$. In this paper we tackle the following problems:
\textit{\paragraph{P1 (Best arm identification):}Given $n$ arms and an arbitrary graph $G$ capturing similarity between the arms, can we design a policy $\pmb{\pi}_T$ that exploits the similarity  to find the best arm efficiently?}
\textit{\paragraph{P2 ($\zeta$-best arm identification):}Under the setting in \textbf{P1}, can we design a similarity exploiting policy $\pmb{\pi}_T$ so as to find an arm belonging to the set $B(\zeta)\triangleq \{i \in [n] : | \mu_i - \mu_{a^\ast} | \leq \zeta \}$ efficiently?}

\section{The \textbf{\algoname} Algorithm}\label{sec:sampling_algo}


We now introduce \algoname{} (GRaph based Upper Confidence Bound), a novel but natural algorithm for best arm identification in the presence of graph side information. We begin with an intuitive description of how \algoname{} incorporates the graph side information into an \textit{upper confidence bound} (UCB) strategy. 
Most UCB algorithms~\cite{lattimore2020bandit, valko2014spectral} compute the estimates of mean and variance, and use these to eliminate arms that have been deduced to be sub-optima.  The key idea behind \algoname{} is that the arm similarity information allows us to create high-quality estimates of mean rewards and confidence intervals for arms that have not been (sufficiently) sampled yet. In what follows, we describe the building blocks of \algoname{}.





\subsection{Leveraging Graph Side Information}\label{sec:arm_elim}

We introduce two key ideas that lie at the heart of the \algoname{} algorithm. First, at each step, \algoname{} computes a regularized estimate of the means of \emph{all the arms}; the regularization based on the graph Laplacian essentially promotes the smoothness of the mean vector on the given graph. This allows the algorithm to estimate means of arms it has {\em never sampled}. 
To do this, at any given time step $T$, the algorithm solves the following  Laplacian-regularized least-squares optimization program: 
\begin{align}\label{eq:mu_estimate}
    \hat{\pmb{\mu}}_T = \mathop{\arg\min}_{\pmb{\mu}\in \mathbb{R}^{n}}~\left\{~ \left[\sum_{t=1}^T (r_{t, \pi_t} - \mu_{\pi_t})^2 \right] + \rho\langle\pmb{\mu}, L_G\pmb{\mu}\rangle\right\},
\end{align}
where $\rho >0$ is a tunable parameter.  
Equation \eqref{eq:mu_estimate} admits a closed form solution of the form 
$$\hat{\pmb{\mu}}_T =  \left(\sum_{t=1}^T\mathbf{e}_{\pi_t}\mathbf{e}_{\pi_t}^\top +\rho L_G\right)^{-1}\left(\sum_{t=1}^T \mathbf{e}_{\pi_t}r_{t, \pi_t}\right),$$
provided the matrix $V_T \triangleq \sum_{t=1}^T\mathbf{e}_{\pi_t}\mathbf{e}_{\pi_t}^\top +\rho L_G$ is invertible; 
$\mathbf{e}_i$ denotes the $i$-th standard basis vector for the Euclidean space $\mathbb{R}^{n}$. In Appendix~\ref{app:support} we show that invertibility holds if and only if the sampling policy yields at least one sample per connected component of $G$. This is a rather mild condition that we arrange for explicitly in our algorithm, given that we know the graph $G$. In what follows we assume that every connected component of graph $G$ is sampled at least once. This regularized mean estimation procedure yields an estimate of the mean that is both in agreement with observations and smooth on the graph -- thereby allowing information sharing among similar arms.




The second key idea of our algorithm is the utilization of the graph $G$ in tracking the confidence bounds of {\em all the arms simultaneously}. Intuitively, for identifying the best arm, we must be reasonably certain about the sub-optimality of the other arms. This in turn would require the algorithm to track a high-probability confidence bound on the means of all the arms. In the traditional (graph-free) best arm identification problem, the confidence interval of an arm's mean estimate depends on the number of times the arm has been played. Requiring multiple plays of all suboptimal arms for obtaining high confidence bounds is potentially disastrous when the number of arms is very large. In our setup, we show that the knowledge of the similarity graph greatly improves this situation. In particular, we show that a play of any arm not only tightens its own confidence interval, but also has an impact on the confidence intervals of {\em all connected arms}. To quantify the benefit of graph information for the confidence bounds, we will define a novel quantity for each arm -- the effective number of plays. 
%
%
%
%
\begin{definition}[Effective Number of Plays]\label{def:effective_samples}
Let $\rho>0$ and $\{t_i\}_{i=1}^n$ denote the number of plays of each of the $n$ arms when a sampling policy $\pmb{\pi}_T$ is employed for $T$ time steps. Suppose that for each connected component $C \in \mathcal{C}(G)$, there is at least one arm $i_C\in C$ such that $t_{i_C}>0$. Then the effective number of plays for each arm $i\in [n]$ is defined as
   \(t_{\text{eff}, i} \triangleq \left[\left(N_T+\rho L_G\right)^{-1}\right]_{ii}^{-1}\),
where $N_T$ is a diagonal matrix of $\{t_i\}_{i=1}^n$, and $L_G$ denotes the Laplacian of the given graph $G$.
\end{definition}
Effective number of plays $t_{\text{eff}, i}$ for any arm $i$ is influenced by two factors: (a) the number of samples of arm $i$ itself, and (b) the number of samples of any arm in the connected component $j\in C(i), j\neq i$. It can be shown that for any arm $i$, $t_{\text{eff}, i}$  depends on the number of connections of node $i$ in graph $G$ and its value increases as the connectivity of the node increases. The choice of the terminology for this quantity is justified by the following lemma, which provides a high confidence bound for the mean estimate of each arm . 
\begin{lemma}[Concentration inequality]\label{lem:variance_estimate}
For any $T>k(G)$, the following holds with probability at least $1-\delta$:
\begin{align}\label{eq:hp_bounds_proof}
    |\hat{\mu}^{i}_T - \mu_{i}| \leq \sqrt{\frac{1}{t_{\text{eff}, i}}}\left(2\sigma\sqrt{14\log{\left(\frac{2w_i(\pmb{\pi}_T)}{\delta}\right)}} + \rho\|\pmb{\mu}\|_G\right),~~~\forall i\in [n]
\end{align}
where $w_i(\pmb{\pi}_T) = a_0nt_{\text{eff}, i}^2$ for any constant $a_0>0$, $\hat{\mu}^i_T$ is the $i$-th coordinate of the estimate from \eqref{eq:mu_estimate} 
\end{lemma}

Notice that the \textit{effective number of plays} has a similar role as the number of plays in traditional pure exploration algorithms~\cite{even2006action}. Indeed, in the absence of graph information, $t_{{\rm eff},i}$ reduces to $t_i$, the total number of plays of individual arms. Lemma~\ref{lem:variance_estimate} recovers high confidence bounds for standard best-arm identification problem~\cite{even2006action}. It should be noted that while our work is the first to identify this interpretable quantity explicitly, the result of Lemma~\ref{lem:variance_estimate} in other forms has appeared before in the literature~\cite{NIPS2011_e1d5be1c, valko2014spectral, yang2020laplacianregularized}. 



We introduce our algorithm \algoname{} for best arm identification when the arms can be approximately cast as nodes on a graph. \algoname{} uses insights from graph-based mean estimation~\eqref{eq:mu_estimate} and upper confidence bound estimation~\eqref{eq:hp_bounds_proof} for its elimination policies to search for the optimal arm. 

\algoname{} accepts as input a graph $G$ on $n$ arms (and its Laplacian $L_G$), a regularization parameter $\rho >0$, a smoothness parameter  $\epsilon > 0$, and an error tolerance parameter $\delta \in (0,1)$. It  is composed of the following major blocks.\\
{\bf Initialization: }
First, \algoname{} identifies the clusters in the $G$ using a \texttt{Cluster-Identification} routine. Any algorithm that can efficiently partition a graph can be used here, e.g METIS~\cite{Karypis98afast}.
\algoname{} then samples one arm from each cluster. This ensures $V_T \succ 0$, which enables \algoname{} to estimate $\hat{\pmb{\mu}}_T$ using the closed form solution of  eq.~\eqref{eq:mu_estimate}. A great advantage of \algoname{} is that the initialization phase only requires steps equal to the number of disconnected components in the graph. This is in direct contrast with traditional best arm identification algorithms, which require atleast one sample from every arm initially.\\
%
%
%
%
%
%
%
%
{\bf Sampling policy: } At each round, \algoname{} obtains a sample from the arm returned by the routine {\tt Sampling-Policy}, which cyclically samples arms from different clusters while ensuring that no arm is resampled before all arms in consideration have the same number of samples. This is distinct from standard cyclic sampling policies that is traditionally used for best arm identification~\cite{even2006action}, but any of them may be modified readily to provide a cluster-aware sampling policy for \algoname{}. In our experiments, we show that replacing cyclic sampling with more statistics- and structure-aware sampling greatly improves performance; a theoretical analysis of these is a promising avenue for future work. One of the major advantage of \algoname{} is the lite nature of the computation. Every loop just requires a rank-1 inverse update which can be performed very efficiently and it does not need any subroutines, unlike~\citep{kocak2020best} \\
{\bf Bad arm elimination : } 
At any time $t$, let $A$ be the set of all arms in consideration for being optimal. Using the uncertainty bound from~\eqref{eq:hp_bounds_proof}, \algoname{} uses the following criteria for sub-optimal arm elimination.
%
%
At each iteration, \algoname{} identifies an arm $a_{\max}\in A$, $a_{\max} = \underset{i\in A}{\arg\max} \left[\hat{\mu}^i_{t} - \beta_i(t)\sqrt{t_{\text{eff},i}^{-1}}\right]$, 
where $\beta_i(t)= \left(2\sigma\sqrt{14\log{\left(\frac{2na_0t_{\text{eff}, i}^2}{\delta}\right)}} + \rho\epsilon\right)$,  with the \textit{highest lower bound} on its mean estimate.
   Following this, \algoname{} removes arms from the set $A$ according to the following elimination policy,
\begin{align}\label{eq:elimination_routine}
    A\leftarrow \left\{\mathbf{a}\in A~ | ~\hat{\mu}^{a_{\max}}_{t} - \hat{\mu}^a_{t} \leq \beta_a(t)\sqrt{t_{\text{eff},a}^{-1}} + \beta_{a_{\max}}(t)\sqrt{t_{\text{eff},a_{\max}}^{-1}}\right\}.
\end{align}




Note that~\algoname{} does not require any optimization innerloop as in~\cite{kocak2020best}. This potentially provides \algoname{} with a significant computation advantage, especially when the dimensionality of the problem is very large. The pseudocode for \algoname{} can be found in Appendix~\ref{app:generic_sample_complexity}.

\begin{algorithm}[H]
            \caption{\algoname}\label{alg:GRUB}
            \begin{algorithmic}
                \STATE {\bfseries Input:} Regularization parameter $\rho$, Smoothness parameter $\epsilon$, Error bound $\delta$, Total arms $n$, Laplacian $L_G$, Sub-gaussianity parameter $\sigma$
                \STATE $t\leftarrow 0$\;
                \STATE $A = \{1,2,\dots, n\}$\;
                \STATE $t=0$\;
                \STATE $V_0 \leftarrow \rho L_G$\;
                \STATE $\mathcal{C}(G) \leftarrow $ {\tt Cluster-Identification}($L_G$)\;
                \FOR{$C\in \mathcal{C}(G)$}
                    \STATE $t\leftarrow t+1$\;
                    \STATE Pick random arm $k \in C$ to observe reward $r_{t,k}$\;
                    \STATE $V_{t}\leftarrow V_{t-1}+\mathbf{e}_k\mathbf{e}^T_k$, and $\mathbf{x}_{t}\leftarrow \mathbf{x}_{t-1} + r_{t, k}\mathbf{e}_k$\;
                \ENDFOR
                \WHILE{$|A| > 1$}
                    \STATE $t\leftarrow t+1$\;
                    \FOR{$i\in A$}
                        \STATE $t_{\text{eff}, i} \leftarrow ([V_{t}^{-1}]_{ii})^{-1}$
                        \STATE $\beta_i(t) \leftarrow 2\sigma\sqrt{14\log{\left(\frac{2nt_{\text{eff}, i}^2}{\delta}\right)}} + \rho\epsilon$\;
                    \ENDFOR
                    \STATE $k\leftarrow $ {\tt Sampling-Policy}($t, V_t, A$, $\mathcal{C}(G)$)\;
                    \STATE Sample arm $k$ to observe reward $r_{t,k}$\;
                    \STATE $V_{t}\leftarrow V_{t-1}+\mathbf{e}_k\mathbf{e}^T_k$\;
                    \STATE $\mathbf{x}_{t}\leftarrow \mathbf{x}_{t-1} + r_{t,k}\mathbf{e}_k$\;
                    \STATE $\hat{\pmb{\mu}}_{t}\leftarrow V_{t}^{-1}\mathbf{x}_{t}$\;
                    \STATE $a_{\max} \leftarrow \underset{i\in A}{\arg\max} \left[\hat{\mu}^i_{t} - \beta(t)\sqrt{t_{\text{eff},i}^{-1}}\right]$\;
                    \STATE $A\leftarrow \left\{\mathbf{a}\in A~ | ~\hat{\mu}^{a_{\max}}_{t} - \hat{\mu}^a_{t} \leq  \beta_a(t)\sqrt{t_{\text{eff},a}^{-1}}\right.$\;
                    \STATE $ \left. \qquad\qquad + \beta_{a_{\max}}(t)\sqrt{t_{\text{eff},a_{\max}}^{-1}} \right\}$\;
                \ENDWHILE
                \RETURN A
            \end{algorithmic}
        \end{algorithm}


Next, we derive performance guarantees on the sample complexity for \algoname{} to return the best arm with high probability. 

\section{Theoretical Analysis of \algoname{}}\label{sec:sample_complexity_base}
In this section we provide a formal statement of the sample complexity of \algoname{}. To do this, we first introduce a novel quantity  we call {\em influence factor}. The influence factor of an arm is derived from resistance distance, a classical graph theoretic concept. This adds to the interpretability and understanding of the instances where using graph side information might be of tremendous use to the application. The usage of graph through the influence factor allows us to identify arms that can be eliminated quickly from consideration. 

\subsection{Resistance Distance and Influence Factor}

We first recall the definition of resistance distance in a graph.

\begin{definition}[Resistance Distance]\label{def:res_dis}\cite{bapat2010resistance}
For any graph $G$ with $n$ nodes, given a constant $\delta >0$, the \textbf{resistance distance} $r_{\delta, G}(i,j)$ between two nodes $i, j$ is defined as,
\begin{align}
    r_{\delta, G}(i, j) = R_{ii} + R_{jj} - R_{ij} - R_{ji},
\end{align}
where $R \triangleq \left(L_G + \delta\mathbbm{1}\mathbbm{1}^T\right)^\dagger$; $\dagger$ denotes the Moore-Penrose inverse, $L_G$ is the Laplacian of graph $G$, and $\mathbbm{1}\in \mathbb{R}^n$ is the vector of all 1's.
\end{definition}

When the context is clear we denote the resistance distance simply as $r_G(\cdot, \cdot)$. The terminology comes from circuit theory: Suppose that an graph $G = ([n], E)$ is thought of as a resistor network on the nodes $[n]$ where each edge $\{i,j\}$ has a unit resistance. Then, the effective resistance between two nodes $i$ and $j$ is precisely the resistance distance $r(i,j)$. 
%
It can be shown in general that nodes that are close by or connected by several paths have a small resistance distance. Given its ability to capture closeness of nodes in graph, the resistance distance has found a broad range of applications and has been the subject of much study; see e.g., \citep{klein1993resistance, bapat2010resistance, xiao2003resistance}. 


Using the notion of resistance distance, we define the influence factor $\mathfrak{I}(\cdot, G)$ of a vertex below. This novel measure quantifies the impact of the graph on the parameter estimation of arm $j$, and in particular, allows us to use the combinatorial properties of the graph and the arm means to classify arms into two sets: competitive and non-competitive; the definition of these sets follows right after. As our theory  will show, the competitive arms are sampled as though we were in the traditional graph-free setting; on the other hand, non-competitive arms are eliminated rapidly, often with zero plays! Indeed, the smoother the reward vector is with respect to the graph, the fewer competitive arms there are -- it is this phenomenon that is captured using the influence factor. 
\begin{definition}[Influence Factor]\label{def:d_better}
Let $G$ be a graph on the vertex set $[n]$. For each $j\in [n]$, define \textbf{influence factor $\mathfrak{I}(j, G)$} as:
\begin{align}\label{eq:d}
     \mathfrak{I}(j, G) = \begin{cases} \underset{i\in C_j(G), i\neq j}{\min}\{r_G(i,j)^{-1}\},&\ \text{ if }~~|C_j(G)|>1,\\
    0, ~~~&\ \text{ otherwise }.
    \end{cases}
\end{align}
Here, $r_G(i,j)$ is the resistance distance between arm $i$ and $j$ in $G$ as in Definition~\ref{def:res_dis}. 
\end{definition}




\begin{definition}[Competitive and Non-Competitive Arms]\label{def:b_d_set_def}
Fix $\pmb{\mu}\in \mathbb{R}^n$, graph $D$, regularization parameter $\rho$, confidence parameter $\delta$, and smoothness parameter $\epsilon$. We define $\mathcal{H}_D$ to be the set of competitive arms and $\mathcal{N}_D$ to be the set of non-competitive arms as follows: 
%
%
\begin{align}\label{def:b_i_d_i}
    \mathcal{H}_D =  \left\{j\in [n] \big| \Delta_i \leq 2\sqrt{\frac{2}{\rho\mathfrak{I}(i, D)}} \left(2\sigma\sqrt{14\log{\left(\frac{2a_0n\rho^2\mathfrak{I}(i, D)^2}{\delta}\right)}} + \rho\epsilon\right)\right\}
\end{align}
and $\mathcal{N}_D \triangleq [n]\setminus \mathcal{H}_D$. 
\end{definition}
As the name suggests, the arms in $\mathcal{H}$ are close to the optimal arm $a^*$ in mean (competitive compared to the optimal arm $a^\ast$) and requires several plays before they can be discarded, as shown in the theorem below. Note from the above definition that an arm is more likely to be part of this set if its mean is high (i.e., $\Delta_i$ is low) and its influence factor is low. Similarly, the non-competitive set is composed of arms whose means are not competitive with the optimal arm. 

Armed with these definitions, we are now ready to state our main theorem that characterizes the performance of \algoname{}.

\subsection{Sampling policy performance}\label{sec:sampling_}
Cyclic sampling policies have been traditionally used in multi-armed bandit problems for best-arm identification~\cite{even2006action}. The sample complexity bound for \algoname{} with cyclic sampling is as follows:


\begin{theorem}[GRUB Sample Complexity]\label{thm: sample_complexity}
Consider $n$-armed bandit problem with mean vector $\pmb{\mu}\in \mathbb{R}^n$. Let $G = (V,E)$ be the similarity graph with the vertex set $V = [n]$ and edge set $E$, let $\mathcal{G}$ be the set of subgraphs of $G$ , and further suppose that $\pmb{\mu}$ is $\epsilon$-smooth i.e., $\|\pmb{\mu}\|_G \leq \epsilon$. Define 
\begin{align*}\label{eq:T_sample_generic_base}
    T_{\text{sufficient}} & \triangleq  \mathop{\arg\min}_{D\in \mathcal{G}} \sum_{C\in\mathcal{C}_D}\left[\sum_{\substack{i\in C\cap\mathcal{H}_D\\i\neq 1}} \frac{1}{\Delta_i^2}\left[c_1\log{\frac{c_2}{\delta\Delta_i}} + \frac{\rho\epsilon}{2}\right]  + \max_{i\in C\cap\mathcal{N}_D}\frac{2}{\Delta_i^2}\left[c_1\log{\frac{c_2}{\delta\Delta_i}} + \frac{\rho\epsilon}{2}\right]\right], 
\end{align*}
where $\Delta_i = \mu^*-\mu_i$ for all suboptimal arms, $\mathcal{H}_D$ and $\mathcal{N}_D$ are as in Definition~\ref{def:b_d_set_def}, $\mathcal{C}_D$ is the set of connected components of a given graph $D$ and $c_1, c_2$ are constants independent of system parameters. Then, with probability at least $1-\delta$, \algoname{}: (a) terminates in no more than $T_{\text{sufficient}}$ rounds, and (b) returns the best arm $a^\ast = \arg\max_i \mu_i$. 
\end{theorem}

\begin{remark} The required number of samples for successful elimination of suboptimal arms, and therefore the successful identification of the best arm, can be split into two categories based on the sets defined in Definition~\ref{def:b_d_set_def}. Each sub-optimal {\em highly competitive arm} $j\in \mathcal{H}$ requires $\mathcal{O}(1/\Delta_j^2)$ samples, which is comparable to the classical (graph-free) best-arm identification problem. Additionally, the non-competitive arms $\mathcal{N}$ can be eliminated without being played, depending on the influence factor: one round of the cyclic sampling suffices to eliminate these arms (even if they are never played!). We refer the reader to Appendix~\ref{app:generic_sample_complexity} for a more detailed discussion. Indeed, the smaller $|\mathcal{H}|$ is, the more the graph side information benefits \algoname and vice-versa. 
\end{remark}

\begin{remark} Note that $T_{\text{sufficient}}$ in Theorem~\ref{thm: sample_complexity} involves the minimum over all subgraphs. As we show in Lemma~\ref{lem:d_change_subgraph} in the appendix,  $\mathfrak{I}$ can actually increase if one restricts their attention to certain subgraphs of $G$; this in turn increases the size of $\mathcal{N}$ and decreases the size of  $\mathcal{H}$, hence, giving a tighter upper bound on the performance of the algorithm. \algoname{} {\it  automatically adapts to the best subgraph} to maximize the influence factor $\mathfrak{I}(\cdot, \cdot)$ to obtain the best possible sample complexity and this is reflected in the statement of Theorem~\ref{thm: sample_complexity}.


\end{remark}

The complete proof of Theorem~\ref{thm: sample_complexity} can be found in Appendix~\ref{app:generic_sample_complexity}, where we also provide more insights on the behavior of the confidence bound as a function of the number of samples acquired. These results may be of independent interest to the reader.

\subsection{Improved sampling policies}\label{subsec:sampling_policies}

As can be inferred from the psuedocode of Algorithm~\ref{alg:GRUB} the primary goal of the {\tt Sampling-Policy} is the quick and safe elimination of suboptimal arms, achieved through shrinking of the confidence bounds $\beta_i(t)\sqrt{(t_{\text{eff}, i})^{-1}}$ for all arms $i$ still in consideration at time $t$. 

Theorem~\ref{thm: sample_complexity} established guarantees on $T_{\text{sufficient}}$ for naive cyclic sampling policy, i.e. a sampling policy which doesn't directly exploit the graph properties in this arm choice. Note that, even if the sampling policy doesn't utilize any graph properties, the similarity graph is still being utilized in computing the mean estimate and the confidence widths. To enhance the involvement of graph structural information in arm sampling policy, a few alternatives can be characterized as:
\begin{itemize}
    \item {\bf Marginal variance minimization (MVM):} Pick the arm which has the highest confidence bound width. Specifically, at time $t$, let $\pi_T = \underset{i\in A}{\arg\min}~ t_{\text{eff}, i} = \underset{i\in A}{\arg\max} ~[V_{T}^{-1}]_{ii}$, where $A$ is the set of indices of the arms under consideration. \\
    \item {\bf Joint variance minimization -- nuclear (JVM-N): }This variant is inspired from the concept of V-optimality~\cite{pmlr-v22-ji12}. JVM-N picks the arms which leads to maximum decrease in the value of confidence widths across all arms, in the sense of nuclear norm. Specifically, $\pi_T = \underset{i\in A}{\arg\min} \|(V_{T} + \mathbf{e}_i\mathbf{e}_i^T)^{-1}\|_{*} - \|V_T^{-1}\|_{*}$, where  $\|\cdot\|_{*}$ denotes the nuclear norm.\\
    \item {\bf Joint variance minimization -- operator (JVM-O). }Taking inspiration from $\Sigma$-optimality~\cite{10.5555/3020847.3020904, ma2013sigma}, JVM-O picks arms which leads to maximum decrease in the value of confidence widths across all arms  in the sense of operator norm.
  $\pi_T = \underset{i\in A}{\arg\min} \|(V_{T} + \mathbf{e}_i\mathbf{e}_i^T)^{-1}\|_{\text{op}} - \|V_T^{-1}\|_{\text{op}}$ 
  
\end{itemize}


Comparison of the performance of MVM, JVM-N and JVM-O with the baseline of cyclic sampling is provided through synthetic experiments in Section~\ref{sec:exp}. In the next section, we derive fundamental lower bounds on the sample complexity \textit{any} algorithm requires in order to solve the said problem

\color{black}
\section{Lower Bounds}\label{sec:lower-bounds}
Let us consider an $n$-armed bandit setup with arm indices $[1, \dots, n]$. Let $\mu^*$ indicate the mean of the optimal arm and $\mu_i$ indicate the mean values of all other arms such that $\mu_i < \mu^*$. For the rest of this section, without loss of generality, let the index of optimal arm be 1. 



\begin{theorem}\label{thm:lower-bound}
Given an $n$-armed bandit model with associated mean vector $\pmb{\mu}\in \mathbb{R}^n$ and similarity graph $G$ smooth on $\pmb{\mu}$, i.e. $\langle\pmb{\mu}, L_G\pmb{\mu}\rangle \leq \epsilon$, for any $0<\epsilon <\epsilon_0$. Let $G = ([n],E)$ be the graph with only isolated cliques and w.l.o.g let arm 1 be the optimal arm. Then define
\begin{equation}
    T_{\text{necessary}} = \sum_{C\in \mathcal{C}_G/{C^*}}\min_{j\in C} \left\{\frac{4\sigma^2\log 5}{(\Delta_j - \sqrt{\epsilon})^2}\right\} + \sum_{j\in C^*/1 } \frac{4\sigma^2\log 5}{\Delta_j ^2}, 
\end{equation}
where $C^*$ is the clique with the optimal arm and $\epsilon_0 := \underset{i\in [n]/1, j\in C(i)}{\min}\left[\Delta_j \left[1 - \frac{\Delta_i}{\sqrt{\Delta_i^2 + \Delta_j^2}}\right]\right]^2$. Then any $\delta$-PAC algorithm will need at-least $T_{\text{necessary}}$ steps to terminate, provided $\delta \leq 0.1$.
\end{theorem}




Using Theorem~\ref{thm:lower-bound}, we can show that \algoname{} is minimax optimal for a $n$-armed bandit problems for certain class of similarity graph $G$. The following result shows that the upperbound on the sample complexity provided in Theorem~\ref{thm: sample_complexity} matches the lower bound established in Theorem~\ref{thm:lower-bound} in $\Delta_i$ up to a constant factor.

\begin{corollary}[Isolated clusters]\label{cor:lower-bounds-clusters}
Consider the setup as in Theorem~\ref{thm:lower-bound} with the further restriction that graph $G$ be such that the optimal node is isolated and $\epsilon < \min_{j\in [n]}\frac{\Delta^2_j}{2}$. Define,
\begin{align}
    T_{\text{necessary}}\geq \sum_{C\in \mathcal{C}_G/\{1\}} \max_{j\in C}\left\{ \frac{8\sigma^2 \log 5}{\Delta_j^2}\right\}.
\end{align}
Then any algorithm that takes fewer than $T_{\text{necessary}}$ samples will have a probability of error at least $0.1$. 
\end{corollary}

As can be seen in Corollary~\ref{cor:lower-bounds-clusters}, the lower bound expression can scale as standard $n$-armed bandit (implying no added advantage of having graph side-information) or can behave as a $|\mathcal{C}_G|$-armed bandit problem (scales as the number of clusters in graph $G$ rather than number of nodes $n$) purely by changing the similarity graph $G$. The difference between $\mathcal{C}_F$ (connected components in the subgraph constructed by making optimal arm isolated) and $\mathcal{C}_G$ (connected components in the given similarity graph) can lead to more interesting behaviour in terms of lower bound expressions on sample complexity.

\section{$\zeta$-best-arm identification}\label{sec:zeta_best_arm}

It can be observed from Theorem~\ref{thm: sample_complexity} that the fact that the means are $\epsilon$-smooth implies that distinguishing arm $j$ from $a^\ast$ would require at least $O(\epsilon^{-2})$ samples. A tighter upper bound on the violation $\epsilon$ and an edge between $j$ and $a^*$ would make the suboptimal arm $j$ harder to eliminate. However, it stands to reason that in such situations, it might be more practical to not demand for the absolute best arm, but rather an arm that is nearly optimal.  Indeed, in several modern applications we discuss in Section~\ref{sec:intro}, finding an approximate best arm is tantamount to solving the problem. In such cases, a simple modification of \algoname{} can be used to quickly eliminate definitely suboptimal arms, and then output an arm that is guaranteed to be nearly optimal. To formalize this, we consider the $\zeta$-best arm identification problem as follows. 




\begin{definition}\label{def:zeta_best_arm}
For a given $\zeta >0$, arm $i$ is called $\zeta$-best arm if $\mu_i \geq \mu_{a^*} - \zeta$, where $a^\ast = \arg\max_i \mu_i$ 
\end{definition}

The goal of the $\zeta$-best arm identification problem is to return an arm $\tilde{a}$ that is $\zeta-$optimal. We achieve this by a simple modification to \algoname{}, which we dub $\zeta-$\algoname{}, which ensures that all the remaining arms $i$ satisfy  $4\beta(t_{i})\sqrt{t_{\text{eff}, i}^{-1}} \leq \zeta$. It then outputs the best arm amongst those that are remaining. The following theorem characterizes the sample complexity for $\zeta$-\algoname{}:
\begin{theorem}\label{thm: disc_zeta_sc_v2} 
Consider $n$-armed bandit problem with mean vector $\pmb{\mu}\in \mathbb{R}^n$. Let $G$ be the given similarity graph on vertex set $[n]$, and further suppose that $\pmb{\mu}$ is $\epsilon$-smooth. Let $\mathcal{C}$ be the set of connected components of $G$. Define,
\begin{align}\label{eq:zeta_T_sample_generic_base}
    T_{\text{sufficient}} \triangleq & \mathop{\arg\min}_{D\in \mathcal{G}} \sum_{C\in\mathcal{C}_D}\left[\sum_{i\in C\cap\mathcal{H}_D} \frac{1}{(\Delta_i\vee\zeta)^2}\left[c_1\log{\frac{c_2}{\delta(\Delta_i\vee\zeta)}} +\frac{\rho\epsilon}{2}\right] \right. \nonumber\\
    + &  \left.\max_{i\in C\cap\mathcal{N}_D} \left\{ \frac{2}{(\Delta_i\vee\zeta)^2} \left[c_1\log{\frac{c_2}{\delta(\Delta_i\vee\zeta)}} +\frac{\rho\epsilon}{2}\right]\right\}\right],
\end{align}
where $\Delta_i = \mu^*-\mu_i$ for all suboptimal arms, $\mathcal{H}_D$ and $\mathcal{N}_D$ are as in Definition~\ref{def:b_d_set_def}, $\mathcal{C}_D$ is the set of connected components of a given graph $D$and $\Delta_i\vee{\zeta} = \max\{\zeta, \Delta_i\}$ and $c_1, c_2$ are constants independent of system parameters. Then, with probability at least $1-\delta$ , $\zeta$-\algoname: (a) terminates in no more than $T_{\text{sufficient}}$ rounds, and (b) returns a $\zeta$-best arm.
\end{theorem}

The pseudocode for the $\zeta$-\algoname{} is as below :
\vspace{-0.1em}
\begin{algorithm}[H]
   \caption{$\zeta$-\algoname}\label{alg:zeta_GRUB}
\begin{algorithmic}
   \STATE {\bfseries Input:} Regularization parameter $\rho$, Smoothness parameter $\epsilon$, Error bound $\delta$, Total arms $n$, Laplacian $L_G$, Sub-gaussianity parameter $\sigma$
   \STATE $t\leftarrow 0$\;
   \STATE $A = \{1,2,\dots, n\}$\;
   \STATE $t=0$\;
   \STATE $V_0 \leftarrow \rho L_G$\;
   \STATE $\mathcal{C}(G) \leftarrow $ {\tt Cluster-Identification}($L_G$)\;
   \FOR{$C\in \mathcal{C}(G)$}
   \STATE $t\leftarrow t+1$\;
   \STATE Pick random arm $k \in C$ to observe reward $r_{t,k}$\;
   \STATE $V_{t}\leftarrow V_{t-1}+\mathbf{e}_k\mathbf{e}^T_k$, and $\mathbf{x}_{t}\leftarrow \mathbf{x}_{t-1} + r_{t, k}\mathbf{e}_k$\;
   \ENDFOR
   \WHILE{$|A| > 1$}
   \STATE $t\leftarrow t+1$\;
    \STATE $\beta(t) \leftarrow 2\sigma\sqrt{14\log{\left(\frac{2n(t+1)^2}{\delta}\right)}} + \rho\epsilon$\;
   \STATE $k\leftarrow $ {\tt Sampling-Policy}($t, V_t, A$, $\mathcal{C}(G)$)\;
   \STATE Sample arm $k$ to observe reward $r_{t,k}$\;
   \STATE $V_{t}\leftarrow V_{t-1}+\mathbf{e}_k\mathbf{e}^T_k$\;
   \STATE $\mathbf{x}_{t}\leftarrow \mathbf{x}_{t-1} + r_{t,k}\mathbf{e}_k$\;
    \STATE $\hat{\pmb{\mu}}_{t}\leftarrow V_{t}^{-1}\mathbf{x}_{t}$\;
    \STATE $a_{\max} \leftarrow \underset{i\in A}{\arg\max} \left[\hat{\mu}^i_{t} - \beta(t_i)\sqrt{[V_{t}^{-1}]_{ii}}\right]$\;
    \STATE $A\leftarrow \left\{\mathbf{a}\in A~ | ~\hat{\mu}^{a_{\max}}_{t} - \hat{\mu}^a_{t} \leq  \beta(t_{a})\sqrt{[V_{t}^{-1}]_{aa}}\right.$\;
    \STATE $ \left. \qquad\qquad + \beta(t_{a_{\max}})\sqrt{[V_{t}^{-1}]_{a_{\max}a_{\max}}} \right\}$\;
    \STATE $A\leftarrow A/\left\{ a\in A ~|~ \beta(t_a)\sqrt{[V_t^{-1}]_{aa}} \leq \frac{\zeta}{2}\right\}$\;
   \ENDWHILE
   \RETURN $\arg\max \left\{\mu_i |~ i\in \{a \in [n] | \beta(t_a)\sqrt{[V_t^{-1}]_{aa}} \leq \frac{\zeta}{2}\} \cup A\right\}$
\end{algorithmic}
\end{algorithm}

 

\section{Experiments}\label{sec:exp}

For all our experiments, we use standard laptop with Intel® Core™ i7-10875H CPU @ 2.30GHz $\times$ 16 with 32 GB memory. We set the probability of error $\delta = 1e-3$, the penalizing constant $\rho=2.0$ and noise variance of the subgaussian distribution $ \sigma=2.0$. .For the additional graph information, we consider 2 cases: $G$ is a Stochastic Block model(SBM) with parameters $(p, q) = (0.9, 1e^{-4})$ and $G$ is a Barab{\'a}si–Albert(BA) graph with parameter $m=2$, both containing $10$ clusters. We record the stopping time for $20$ runs and plot the results. We evaluate \algoname{} with different sampling strategies from section~\ref{subsec:sampling_policies} and compare its performance to standard UCB algorithm~\citep{lattimore2020bandit}. The full code used for conducting experiments can be found at the following \color{blue}\href{https://github.com/parththaker/Bandits-GRUB}{Github repository}\color{black}.

Figure~\ref{fig:constant_cluster_all_algo} compares the baseline cyclic algorithm (UCB algorithm without graph information) with ~\algoname{} and its variants (\algoname{}-MVM, JVM-O, JVM-N) as listed in Section~\ref{subsec:sampling_policies}. The x-axis represents the number of arms while keeping the number of clusters constant. As can be seen, all the graph-based methods keep performing better compared to standard baseline UCB, which shows almost linear growth with the number of arms. Interestingly, note that JVM-O, JVM-N and MVM perform better with increase in the number of arms in the bandit problem. This is attributed to the fact that increasing number of arms while keeping the number of clusters static increases the \textit{density} of connections per arm and thereby improving performance.





\begin{figure}[H]
    \centering
    \begin{minipage}{0.45\textwidth}
        \centering
        \includegraphics[width=1.0\textwidth]{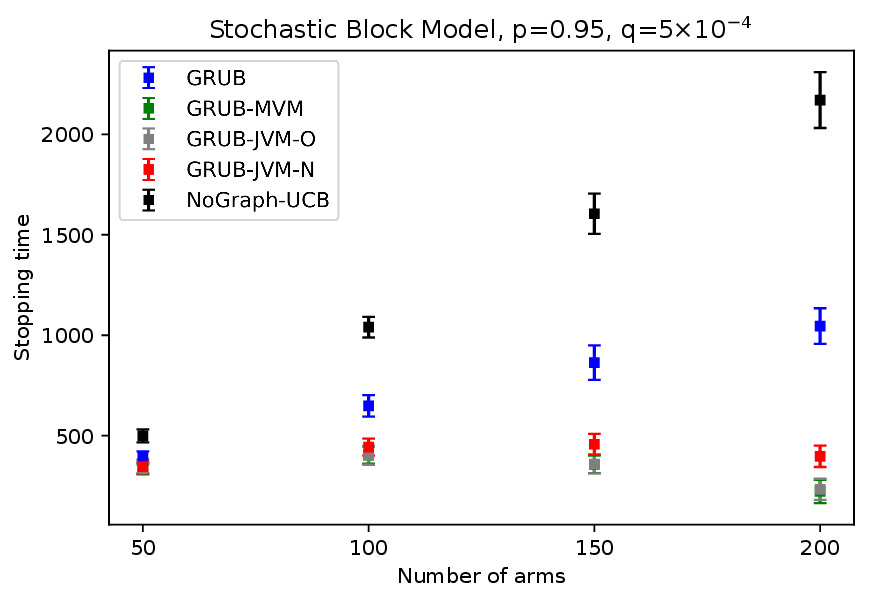} 
    \end{minipage}\hfill
    \begin{minipage}{0.45\textwidth}
        \centering
        \includegraphics[width=1.0\textwidth]{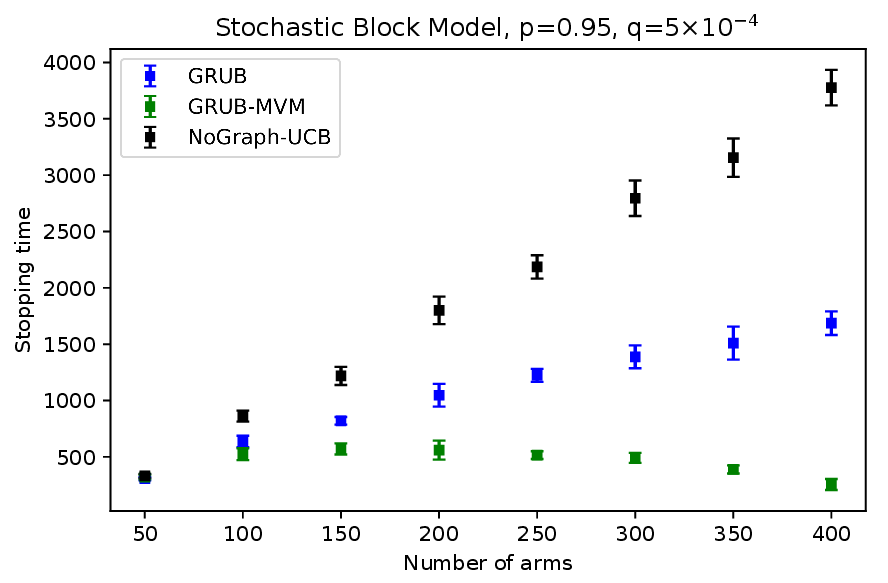}
    \end{minipage}
    \medskip

    \begin{minipage}{0.45\textwidth}
        \centering
       \includegraphics[width=1.0\textwidth]{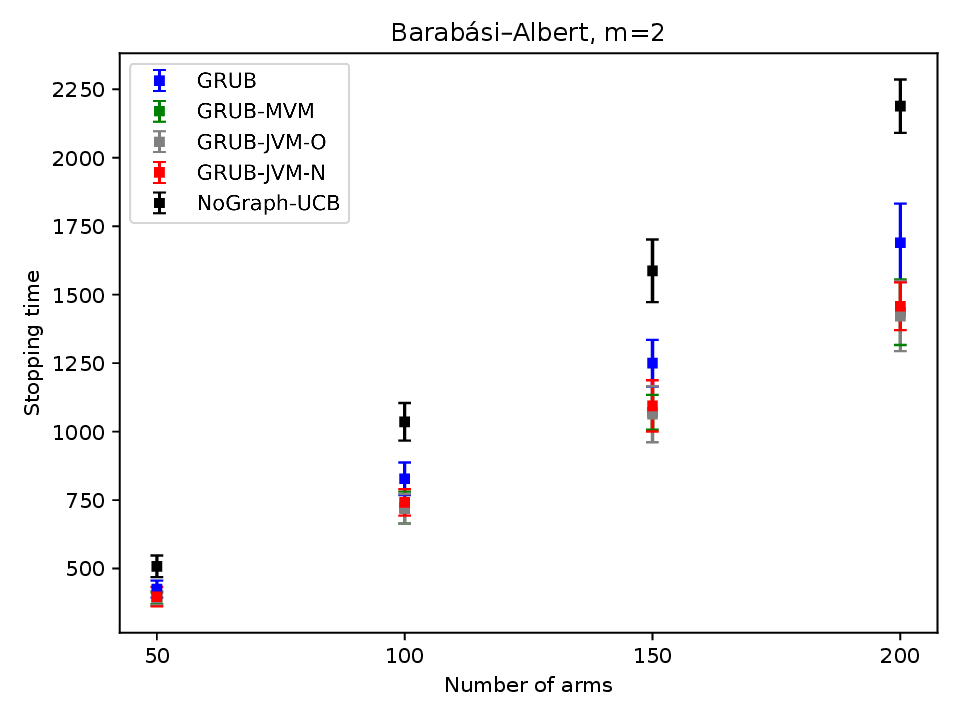}
    \end{minipage}\hfill
    \begin{minipage}{0.45\textwidth}
        \centering
        \includegraphics[width=1.0\textwidth]{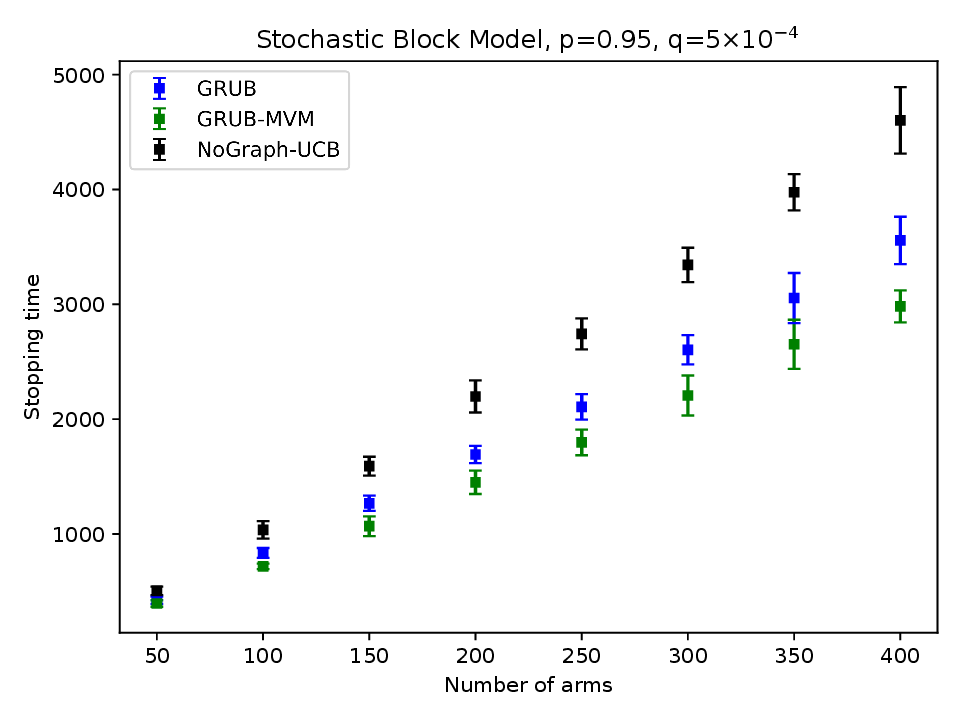} 
    \end{minipage}
    \caption{(Best seen in color) Stopping time vs number of arms of \algoname{} using various sampling protocols for SBM  ($(p, q) = (0.95, 1e-4)$) [Top] and BA ($m=2$) [Bottom] . Graph based pure exploration methods  outperforms the standard cyclic UCB method in terms of stopping time}
    \label{fig:constant_cluster_all_algo}
\end{figure}

\textbf{Real Dataset:} It is difficult to obtain a published dataset which exactly fits our problem of pure exploration with graph side information. In order to create a semi-real problem setup, we append an already existing network of users with a corresponding (synthetic) mean structure so as to satisfy the graph side information constraint. We use graphs from SNAP~\cite{snapnets} for these experiments. We sub-sample the graphs using Breadth-First Search (to retain connected components) to generate the graphs for our experiments. We use the LastFM~\cite{feather}, subsampled to 229 nodes and Github Social~\cite{rozemberczki2019multiscale} subsampled to 242 nodes. 
    
    \begin{figure}[H]
    \centering
    \begin{minipage}{0.45\textwidth}
        \centering
        \includegraphics[width=1.1\textwidth]{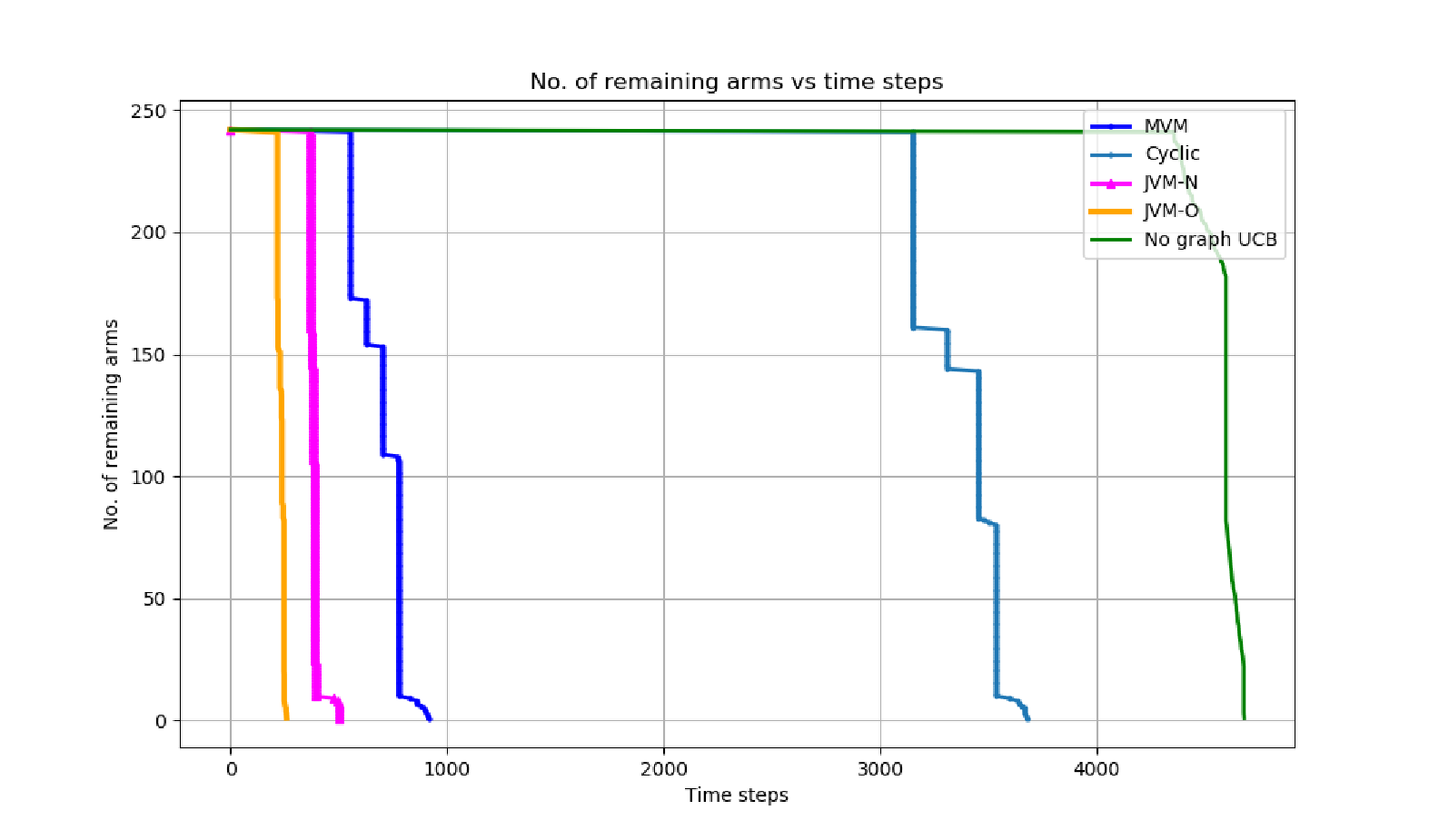} 
    \end{minipage}\hfill
    \begin{minipage}{0.45\textwidth}
        \centering
        \includegraphics[width=1.1\textwidth]{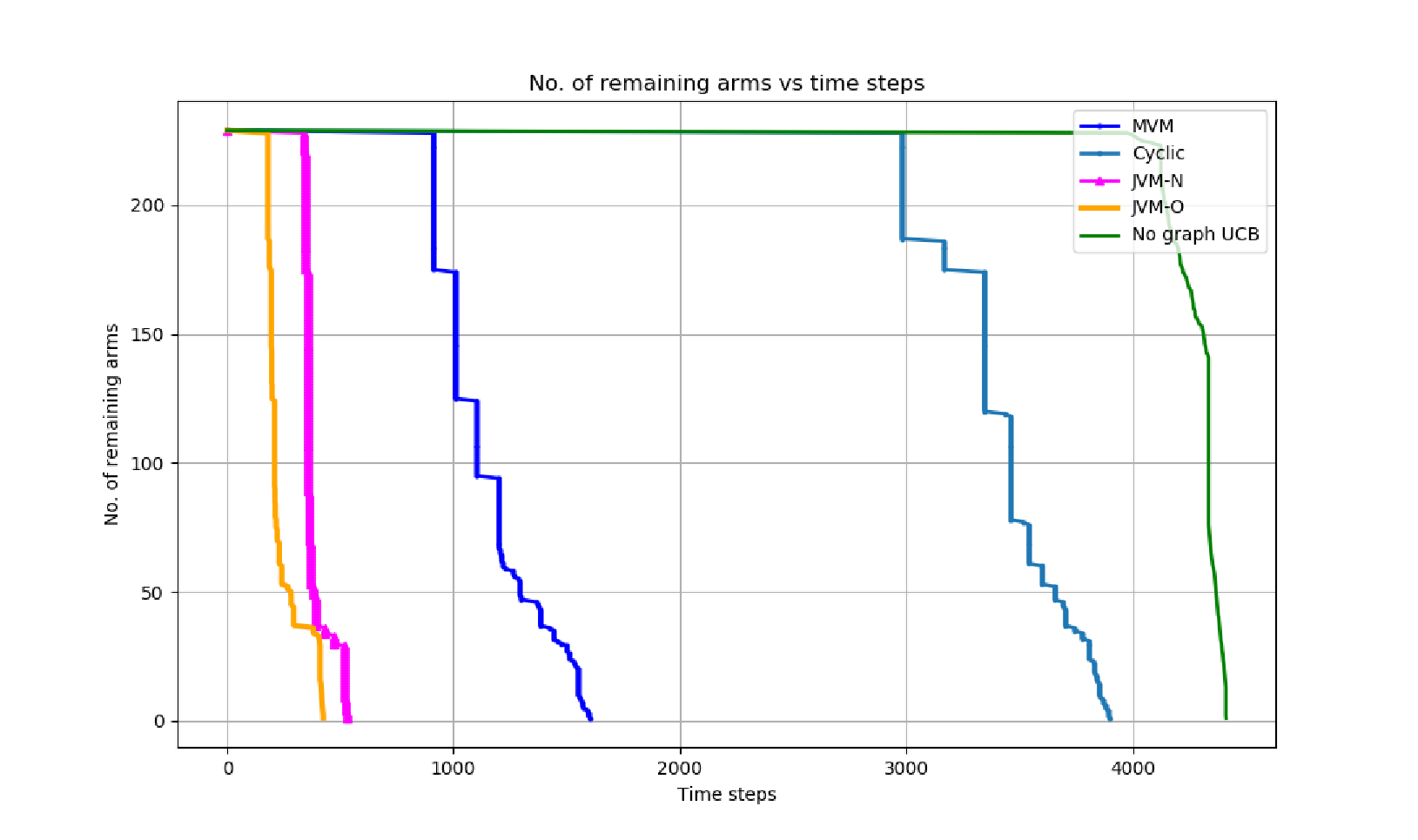} 
    \end{minipage}
    \caption{(Best seen in color) Cardinality of $|A_t|$ vs. time $t$ of \algoname{} using different sampling protocols for Github social graph (left) and LastFM graph (right). With no graph information, UCB requires orders of magnitude more samples compared to policies that use explicitly graph information. The cycic sampling policy is not as competitive on real world datasets}
    \label{fig:real}
\end{figure}

Figure~\ref{fig:real} plots the number of arms still in consideration $|A_t|$ vs. time $t$ for a single run of the pure exploration problems. This provides us better insights into the behaviour of \algoname{} with different sampling protocols (Section~\ref{subsec:sampling_policies}) and standard UCB approach. In all the experiments, it is evident that \algoname{} with any of the sampling policies outperform UCB algorithm~\cite{lattimore2020bandit}, which does not leverage the graph. Further within the various sampling policies, MVM sampling policy seems to outperform other sampling policies (Figure \ref{fig:real}). For both Github and LastFM datasets, the MVM policy obtains the best arm in $\sim 300$ rounds compared to traditional UCB that takes $\sim 4500$ rounds. A rigorous theoretical characterization of the above sampling policies is an exciting avenue for future research. 





\section{Discussion and Broader Impacts}\label{app:broader_impact}

In this work, we consider the problem of best arm identification (and approximate best arm identification) when one has access to information about the similarity between the arms in the form of a graph. We propose a novel algorithm \algoname{} for this important family of problems and establish sample complexity guarantees for the same. In particular, our theory explicitly demonstrated that benefit of this side information (in terms of the properties of the graph) in quickly locating the best or approximate best arms. We support these theoretical findings with experimental results in both simulated and real settings. 

{\bf Future Work and Limitations. }
We outline several sampling policies inspired by our theory in Section~\ref{sec:exp}; an extension of our theoretical results to account for these improved sampling policies is a natural candidate for further exploration. The algorithms and theory of this paper assume knowledge of (an upper bound) on the smoothness of the reward vector with respect to the graph. While this is where one uses domain expertise, this could be hard to estimate in certain real world problems. A generalization of the algorithmic and theoretical framework proposed here that is \emph{adaptive} to the unknown graph-smoothness is an exciting avenue for future work~\citep{cai2012adaptive, banerjee2020adaptive}. The sub-Gaussianity assumption of this work can also be generalized to other tail behaviors in follow up work. Another limitation of this work is that the statistical benefit of the graph-based quadratic penalization comes at a computational cost -- each mean estimation step involves the inversion of an $n\times n$ matrix which has a complexity of $O(n^2\log(n))$. However, an exciting recent line of work suggests that this matrix inversion can be made significantly faster when coupled with a spectral sparsification of the graph $G$~\citep{vishnoi2013lx, spielman2010spectral} while controlling the statistical impact of such a modification. In the context of this problem, this suggests a compelling avenue for future work that studies the statstics-vs-computation tradeoffs in using graph side information.  


For this work, we demonstrated the advantages of this side information in pure exploration problems, given knowledge of such an $\epsilon$. Extensions that consider goodness-of-fit and misspecification with respect to the graph $G$ and smoothness parameters $\epsilon$ are interesting avenues for follow up work. Finally, we focus on the ridge-type regularizer of the form $\langle\mathbf{\mu}, L_G\mathbf{\mu}\rangle$. For future work, it may be productive to expand to a much broader class of regularizers such as those of the form of $\|A\mathbf{\mu}\|^p_q$, where $A$ represents a information/ structural constraint matrix and $p,q$ are some positive  numbers. 

{\bf Potential Negative Social Impacts. }Our methods can be used for various applications such as drug discovery, advertising, and recommendation systems. In scientifically and medically critical applications, the design of the reward function becomes vital as this can have a significant impact on the output of the algorithm. One must take appropriate measures to ensure a fair and transparent outcome for various downstream stakeholders. With respect to applications in recommendation and targeted advertising systems, it is becoming increasingly evident that such systems may exacerbate polarization and the creation of filter-bubbles. Especially the techniques proposed in this paper could reinforce emerging polarization (which would correspond to more clustered graphs and therefore better recommendation performance) when used in such contexts. It will of course be of significant interest to mitigate such adverse outcomes by well-designed interventions  or by considering multiple similarity graphs that capture various dimensions of similarity. This is a compelling avenue for future work. 


\bibliographystyle{plain}
\bibliography{mybib}
\newpage


\newpage

\appendix

\section*{Appendix}
The appendix is organized as follows. Appendices~\ref{app:support}-\ref{app:eff_samp} and Appendix~\ref{app:support2} provide various supporting results and insights into our main theoretical results. Appendix~\ref{app:generic_sample_complexity} and Appendix~\ref{app:zeta_generic_sample_complexity} provide sample complexity guarantees for \algoname{} and $\zeta$-\algoname{} respectively. Appendix~\ref{app:lowerbounds} states and proves necessary conditions on the sample complexity, and Appendix~\ref{app:as_linear_bandit} presents a discussion on the incomparability of our graph bandits problem with that of linear bandits.

\section{Parameter estimation}\label{app:support}

At any time $T$, \algoname{}, along with the graph-side information, uses data gathered to estimate the mean $\hat{\pmb{\mu}}_T$ in order to decide the sampling and elimination protocols. The following lemma gives the estimation routine used for \algoname{}. 
\begin{lemma}\label{lem:mu_closed_form_proof}
The closed form expression of $\hat{\pmb{\mu}}_T$ is given by,
\begin{align}
    \hat{\pmb{\mu}}_T =  \left(\sum_{t=1}^T\mathbf{e}_{\pi_t}\mathbf{e}_{\pi_t}^T +\rho L_G\right)^{-1}\left(\sum_{t=1}^T \mathbf{e}_{\pi_t}r_t^{\pi_t}\right)
\end{align}
\end{lemma}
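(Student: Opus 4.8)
The plan is to recognize that the objective in \eqref{eq:mu_estimate} is a convex quadratic in $\pmb{\mu}$, so that its minimizer (when it exists) is completely characterized by the first-order stationarity condition. First I would rewrite the data-fit term in vector notation using $\mu_{\pi_t} = \mathbf{e}_{\pi_t}^T\pmb{\mu}$, so that the objective becomes
\begin{align}
    f(\pmb{\mu}) = \sum_{t=1}^T \left(r_{t, \pi_t} - \mathbf{e}_{\pi_t}^T\pmb{\mu}\right)^2 + \rho\langle\pmb{\mu}, L_G\pmb{\mu}\rangle. \nonumber
\end{align}
This makes the dependence on $\pmb{\mu}$ transparent and sets up a routine gradient computation.

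Next I would differentiate. The gradient of the data-fit term is $-2\sum_{t=1}^T \mathbf{e}_{\pi_t}\left(r_{t, \pi_t} - \mathbf{e}_{\pi_t}^T\pmb{\mu}\right)$, and, using the symmetry of $L_G$, the gradient of the penalty is $2\rho L_G\pmb{\mu}$. Setting $\nabla f(\pmb{\mu}) = 0$ and rearranging yields the normal equations
\begin{align}
    \left(\sum_{t=1}^T \mathbf{e}_{\pi_t}\mathbf{e}_{\pi_t}^T + \rho L_G\right)\pmb{\mu} = \sum_{t=1}^T \mathbf{e}_{\pi_t} r_{t, \pi_t}, \nonumber
\end{align}
that is, $V(\pmb{\pi}_T, G)\,\pmb{\mu} = \sum_{t=1}^T \mathbf{e}_{\pi_t} r_{t, \pi_t}$.

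I would then invoke invertibility. By Lemma~\ref{lem:v_t_invertibility} (equivalently, the block argument around \eqref{eq:subblock_struct}), once the sampling policy has drawn at least one sample from every connected component of $G$---which \algoname{} explicitly enforces in its initialization phase---the matrix $V(\pmb{\pi}_T, G)$ is strictly positive definite and hence invertible. Left-multiplying the normal equations by $V(\pmb{\pi}_T, G)^{-1}$ then gives the claimed closed form. To confirm this stationary point is the global minimizer and that it is unique, I would observe that the Hessian of $f$ equals $2V(\pmb{\pi}_T, G) \succ 0$, so $f$ is strictly convex.

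I do not expect any genuine obstacle here, as the computation is entirely routine. The only point requiring care is the invertibility of $V(\pmb{\pi}_T, G)$: this is not automatic, since $L_G$ is merely positive semidefinite and has a nontrivial kernel (one dimension per connected component). This is exactly why the one-sample-per-component condition established earlier is essential---it is what guarantees that the closed-form expression is well defined and that the minimizer $\hat{\pmb{\mu}}_T$ is unique.
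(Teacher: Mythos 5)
Your proposal is correct and follows essentially the same route as the paper's proof: rewrite the objective as a convex quadratic, set the gradient to zero, and solve the resulting normal equations $V(\pmb{\pi}_T, G)\pmb{\mu} = \sum_{t=1}^T \mathbf{e}_{\pi_t} r_{t,\pi_t}$. Your explicit attention to the invertibility of $V(\pmb{\pi}_T, G)$ and to strict convexity (Hessian $2V \succ 0$) is a welcome addition that the paper handles separately in its discussion of Lemma~\ref{lem:v_t_invertibility} rather than inside this proof.
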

\begin{proof}

Using the reward data $\{r_{t,\pi_t}\}_{t=1}^T$ gathered up-to time $T$ and the sampling policy $\pmb{\pi}_T$, the mean vector estimate $\hat{\pmb{\mu}}_T$ is computed by solving the following laplacian-regularized least-square optimization schedule:
\begin{align}
    \hat{\pmb{\mu}}_T = \underset{\pmb{\mu}\in \mathbb{R}^{n}}{\mathop{\arg\min}}~~ \sum_{t=1}^{T}\left(\mu_{\pi_t} - r_{t, \pi_t}\right)^2 + \rho  \langle\pmb{\mu}, L_G\pmb{\mu}\rangle
\end{align}
where $\rho >0$ is a tunable penalty parameter. The above optimization problem can be equivalently written in the following quadratic form:
\begin{align}\label{eq:reform_mu_estimate}
    &\ \hat{\pmb{\mu}}_T = \underset{\pmb{\mu}\in \mathbb{R}^{n}}{\mathop{\arg\min}} \left(\langle\pmb{\mu}, V(\pmb{\pi}_T, G)\pmb{\mu}\rangle - 2\left\langle \pmb{\mu}, \left(\sum_{t=1}^T \mathbf{e}_{\pi_t}r_{t, \pi_t}\right)\right\rangle +\sum_{t=1}^T r_{t, \pi_t}^2\right) \nonumber
\end{align}
where $V(\pmb{\pi}_T, G)$ denotes,
\begin{align}
    V(\pmb{\pi}_T, G) = \sum_{t=1}^T\mathbf{e}_{\pi_t}\mathbf{e}_{\pi_t}^T +\rho L_G
\end{align}
In order to obtain $\hat{\pmb{\mu}}_T$, we compute vanishing point of the gradient as follows, 
\begin{align}
    &\ \left(\langle\pmb{\mu}, V(\pmb{\pi}_T, G)\pmb{\mu}\rangle - 2\left\langle \pmb{\mu}, \left(\sum_{t=1}^T \mathbf{e}_{\pi_t}r_{t, \pi_t}\right)\right\rangle +\sum_{t=1}^T r_{t, \pi_t}^2\right) |_{\pmb{\mu}= \hat{\pmb{\mu}}_T} = 0 ~~~~\nonumber\\ &\ \Rightarrow ~~~~
    \hat{\pmb{\mu}}_T =  V(\pmb{\pi}_T, G)^{-1}\left(\sum_{t=1}^T \mathbf{e}_{\pi_t}r_t^{\pi_t}\right)
\end{align}
\end{proof}

The sampling policy in \algoname{} uses the mean estimates and their high probability confidence bounds to eliminate suboptimal arm. In the following lemma we compute the high probability confidence bounds on the estimates of the mean and introduces the idea of effective samples of each arm given the graph side information.
\begin{lemma}\label{lem:variance_estimate_proof}
For any $T>k(G)$ and $i\in[n]$, the following holds with probability no less than $1-\frac{\delta}{w_i(\pmb{\pi}_T)}$:
\begin{align}\label{eq:hp_bounds_proof_appendix}
    |\hat{\mu}^{i}_T - \mu_{i}| \leq \sqrt{\frac{1}{t_{\text{eff}, i}}}\left(2\sigma\sqrt{14\log{\left(\frac{2w_i(\pmb{\pi}_T)}{\delta}\right)}} + \rho\|\pmb{\mu}\|_G\right)
\end{align}
where $w_i(\pmb{\pi}_T) = a_0nt_{\text{eff}, i}^2$ for some constant $a_0>0$, $\hat{\mu}^i_T$ is the $i$-th coordinate of the estimate from \ref{lem:mu_closed_form_proof} and,  $$t_{\text{eff}, i} =\frac{1}{ \left[\left(\sum_{t=1}^T\mathbf{e}_{\pi_t}\mathbf{e}_{\pi_t}^\top +\rho L_G\right)^{-1}\right]_{ii}}$$
\end{lemma}
\color{black}
\begin{proof}
Let the sequence of bounded variance noise and data gathered up-to time $T$ be denoted by  $\{\eta_t, r_{\pi_t, t}\}_{t=1}^T$. Let $S_T = \sum_{t=1}^T\eta_t\mathbf{e}_{\pi_t}$ and $N_T = \sum_{t=1}^T \mathbf{e}_{\pi_t}\mathbf{e}_{\pi_t}^T$. Using the closed form expression of $\hat{\pmb{\mu}}_T$ from eq.~\ref{lem:mu_closed_form_proof}, the difference between the estimate and true value $\hat{\mu}_T^i - \mu_i$ can be obtained as follows:
\begin{align}
    \hat{\mu}_T^i - \mu_i = \langle\mathbf{e}_i, \hat{\pmb{\mu}}_T - \pmb{\mu}\rangle = \langle \mathbf{e}_i, V_T^{-1}S_T - \rho V_T^{-1}L_G\pmb{\mu}\rangle\nonumber
\end{align}
The deviation $\hat{\mu}_T^i - \mu_i$ can be upper-bounded as follows:
\begin{align}
    |\langle \mathbf{e}_{i}, \hat{\pmb{\mu}}_T - \pmb{\mu}\rangle| \leq |\langle \mathbf{e}_{i}, V_T^{-1}S_T\rangle|+ |\langle \mathbf{e}_{i}, \rho V_T^{-1}L_G\pmb{\mu}\rangle|\nonumber
\end{align}
Further, in order to obtain the variance of the estimate $\hat{\pmb{\mu}}_T$, we bound the deviation $|\mu_T^{i} - \mu_i|$ by separately bounding $|\langle \mathbf{e}_{i}, V_T^{-1}S_T\rangle|$ and $|\langle \mathbf{e}_{i}\rho V_T^{-1}L_G\pmb{\mu}\rangle|$. 

With regards to the first term $\langle \mathbf{e}_{i}, V_T^{-1}S_T\rangle$, note that 
\begin{align}\label{eq:first_component_variance}
    \langle \mathbf{e}_i, V_T^{-1}S_T\rangle &\ = \left\langle \mathbf{e}_i,  V_T^{-1}\left(\sum_{t=1}^T \mathbf{e}_{\pi_t}\eta_t\right)\right\rangle \nonumber\\
    &\ = \sum_{t=1}^T\left\langle \mathbf{e}_i,  V_T^{-1} \mathbf{e}_{\pi_t}\right\rangle\eta_t\nonumber
\end{align}
Using a variant of Azuma's inequality~\cite{shamir2011variant, valko2014spectral}, for any $\kappa > 0$ the following inequality holds, 
\begin{align}
    \mathbb{P}\left(|\langle \mathbf{e}_i, V_{T}^{-1}S_{T}\rangle|^2 \leq \kappa^2\right) \geq 1-2\exp\left\{-\frac{\kappa^2}{56\sigma^2\sum_{t = 1}^{T}\left(\left\langle \mathbf{e}_i,  V_{T}^{-1} \mathbf{e}_{\pi_t}\right\rangle\right)^2}\right\}
\end{align}
Using the fact that $V_{T}  \succ \left(\sum_{t = 1}^{T}\mathbf{e}_{\pi_t}\mathbf{e}_{\pi_t}^T\right)$, we can further simplify the above bound using the following computation, 
\begin{align}
     \sum_{t = 1}^{T}\left(\left\langle \mathbf{e}_i,  V_{T}^{-1} \mathbf{e}_{\pi_t}\right\rangle\right)^2 &\ = \left\langle V_{T}^{-1}\mathbf{e}_i, \left(\sum_{t = 1}^{T}\mathbf{e}_{\pi_t}\mathbf{e}_{\pi_t}^T\right)  V_{T}^{-1} \mathbf{e}_i\right\rangle \nonumber\\ &\ \leq \langle\mathbf{e}_i, V_{T}^{-1}\mathbf{e}_i\rangle = [V_{T}^{-1}]_{ii}
\end{align}
Substituting $\delta'  = 2\exp\left\{-\frac{\kappa^2}{56\sigma^2\sum_{t = 1}^{T}\left(\left\langle \mathbf{e}_i,  V_{T}^{-1} \mathbf{e}_{\pi_t}\right\rangle\right)^2}\right\}$, we can finally conclude that given the historical data $\mathcal{F}_{T-1}$ till time $T-1$, following is true with probability $1-\delta'$,  
\begin{align}\label{eq:part_one_upper}
    |\langle \mathbf{e}_i, V_{T}^{-1}S_{T}\rangle|^2 \leq 56\sigma^2[V_T^{-1}]_{ii}\log{\left(\frac{2}{\delta'}\right)}
\end{align}
Second term $\langle \mathbf{e}_{i}, \rho V_T^{-1}L_G\pmb{\mu}\rangle$ can be upperbounded using cauchy-schwartz inequality,
\begin{align}\label{eq:part_two_upper}
    |\langle \mathbf{e}_{i}, \rho V_T^{-1}L_G\pmb{\mu}\rangle| &\ = \rho \langle \mathbf{e}_{i}, L_G\pmb{\mu}\rangle_{V_T^{-1}} \nonumber\\ &\ \leq \rho \sqrt{\langle \mathbf{e}_i ,V_T^{-1}\mathbf{e}_i\rangle}\sqrt{\langle L_G\pmb{\mu}, V_T^{-1}L_G\pmb{\mu}\rangle} \nonumber\\
    &\ \leq  \rho\sqrt{[V_T^{-1}]_{ii}}\|\pmb{\mu}\|_G
\end{align}
Combining the upperbound~\eqref{eq:part_two_upper}, ~\eqref{eq:part_one_upper} and substituting $\delta' = \frac{\delta}{w(\pmb{\pi}_T)}$ we get Lemma~\ref{lem:variance_estimate}. Hence proved.
\end{proof}

\section{Influence Factor}\label{app:res_dis_and_inf}

A key component in our characterization of the performance of \algoname{} is 
the \textit{influence factor} for each arm; recall that for a given graph $D$, $C_i(D)$ denotes the connected component that contains $i$. The influence factor for each arm is defined as, 

\begin{definition}
Let $D$ be a graph on the vertex set $[n]$. For each $j\in [n]$, define \textbf{influence factor $\mathfrak{I}(j, D)$} as:
\begin{align}\label{eq:d}
    \mathfrak{I}(j, D) = \begin{cases} \underset{i\in C_j(D), i\neq j}{\min}\{r_D(i,j)^{-1}\}&\ \text{ if }~~|C_j(D)|>1\\
    0 ~~~&\ \text{ otherwise }
    \end{cases}
\end{align}
where, $r_D(i,j)$ is the resistance distance between arm $i$ and $j$ on graph $D$ as in Definition~\ref{def:res_dis}. 
\end{definition}


Note that we refer the resistance distance without the parameter $\delta$, as the value of resistance distance is independent of the value of $\delta$. This happens due to the cancellation of $\delta$ factor in $R_{ii} + R_{jj}-R_{ji}-R_{ij}$. The influence factor can also be thought of as the minimum influence any arm $i$ in the connected component of arm $j$ has over the arm $j$




\color{black}
\section{Effective Samples}\label{app:eff_samp}

\begin{theorem}\label{thm:effective_samples_proof}
Let $\pmb{\pi}_T$ indicate the sampling policy until time $T$. Let $G$ be the given graph, $\mathfrak{I}(.,G)$ indicates the minimum influence factor for arms. Then effective samples can be lower bounded by,
\begin{align}
    t_{\text{eff}, i} \geq t_i + \frac{1}{2}\lfloor\min\{\rho\mathfrak{I}(i, G), \sum_{j\in C(i)}t_j\}\rfloor
\end{align}
where $t_i$ indicates the no. of samples of arm $i$ and $\lfloor~\cdot~\rfloor$ indicates the floor. 
\end{theorem}
\begin{proof}
Using Lemma~\ref{lem:v_t_upper_bound}, we have the following bound on $[V_T^{-1}]_{ii}$, 
\begin{align}
    &\ [V(\pmb{\pi}_T, G)^{-1}]_{ii} \leq \max\left\{ \frac{1}{t_i + \frac{\rho\mathfrak{I}(i, G)}{2}}, \frac{1}{t_i + \frac{t_C-t_i}{2}} \right\}
\end{align}
where $T$ is the total number of samples and $t_C$ is all the samples from the connected component $C(i)$ apart from arm $i$. Thus rewriting the equation for $t_{\text{eff}, i}$, we get,
\begin{align}
    t_{\text{eff}, i} \geq t_i + \frac{1}{2}\min\{\rho\mathfrak{I}(i, G), \sum_{j\in C(i)}t_j\}
\end{align}
Hence proved.
\end{proof}
\section{GRUB Sample complexity}\label{app:generic_sample_complexity}
In order to compute the sample complexity for \algoname{}, we classify the arms into two categories: competitive and non-competitive. The split of arms into these two categories is not required for the algorithm, but provides tighter complexity bounds as will be observed in this appendix. The division of the arms is contingent on its suboptimality and the structure of the provided graph side information. A modified version of the Definition~\eqref{def:b_d_set_def} of competitive set and non-competitive set is as follows:

\begin{definition}\label{def:b_d_set_def_full_gory}
Fix $\pmb{\mu}\in \mathbb{R}^n$, graph $D$, regularization parameter $\rho$, confidence parameter $\delta$, and smoothness parameter $\epsilon$ and noise variance $\sigma$. We define $\mathcal{H}$ to be the set of competitive arms and $\mathcal{N}$ to be the set of non-competitive arms as follows: 
\small
\begin{align}\label{def:b_i_d_i_full_gory}
     \mathcal{H}(D, \pmb{\mu},\delta, \rho, \epsilon) &= \left\{j\in [n] \big| \Delta_i \leq 2\sqrt{\frac{2}{\rho\mathfrak{I}(i)}}\left(2\sigma\sqrt{14\log{\left(\frac{2a_0n\rho^2\mathfrak{I}(i)^2}{\delta}\right)}} + \rho\epsilon\right)\right\}\nonumber,\\
     \mathcal{N}(D, \pmb{\mu}, \delta, \rho, \epsilon) &\triangleq [n]\setminus \mathcal{H}(D, \pmb{\mu},\delta, \rho, \epsilon) \nonumber
\end{align}
\normalsize
\end{definition}
When the context is clear, we will use suppress the dependence on the parameters in Definition~\ref{def:b_d_set_def_full_gory}.

Further, we derive an expression for the worst-case sample complexity by analysing the number of samples required to eliminate arms with different difficulty levels, i.e. arms in competitive set and non-competitive set. We first derive the sample complexity results for the case when graph $G$ is connected and then extend it to disconnected graphs. 

\begin{lemma}\label{lem:connected_sample_complexity}
Consider $n$-armed bandit problem with mean vector $\pmb{\mu}\in \mathbb{R}^n$. Let $G$ be a given connected similarity graph on the vertex set $[n]$, and further suppose that $\pmb{\mu}$ is $\epsilon$-smooth. Define 
\begin{align}
    T_{\text{sufficient}} \triangleq \sum_{i\in \mathcal{H}} \frac{1}{\Delta_i^2}\left[c_1\log{\frac{c_2}{\delta\Delta_i}} + \frac{\rho\epsilon}{2}\right] + \max_{i\in \mathcal{N}}\left\{\frac{2}{\Delta_i^2}\left[c_1\log{\frac{c_2}{\delta\Delta_i}} + \frac{\rho\epsilon}{2}\right]\right\}
\end{align}
Then, with probability at least $1-\delta$, \algoname{}: (a) terminates in no more than $T_{\text{sufficient}}$ rounds, and (b) returns the best arm $a^\ast = \arg\max_i \mu_i$. 
\end{lemma}


\begin{proof}
With out loss of generality, assume that $a^* = 1$. Let $\{t_i\}_{i=1}^n$ denote the number of plays of each arm upto time $T$. By Lemma~\ref{lem:variance_estimate}, we can state that,
\begin{align}
    \mathbb{P}\left(|\hat{\mu}^{i}_T - \mu_{i}| \geq \gamma_i(\pmb{\pi}_T)\right) \leq \frac{2\delta}{a_0nt_{\text{eff}, i}^2}
\end{align}
where, $\gamma_i(\pmb{\pi}_T) = \beta_i(\pmb{\pi}_T)\sqrt{t_{\text{eff}, i}^{-1}}$ and  $\beta_i(\pmb{\pi}_T) = \left(2\sigma\sqrt{14\log{\left(\frac{2a_0nt_{\text{eff}, i}^2}{\delta}\right)}} + \rho\|\pmb{\mu}\|_G\right).$ 

As is reflected in the elimination policy~\eqref{eq:elimination_routine}, at any time $t$, arm 1 can be mistakenly eliminated in \algoname{} only if $\hat{\mu}_t^{i} > \hat{\mu}_t^1 + \gamma_i(\pmb{\pi}_t) +\gamma_1(\pmb{\pi}_t)$. Let $T_s$ be the stopping time of \algoname{}, then the total failure probability for \algoname{} can be upper-bounded as,
\begin{align}
    \mathbb{P}(\text{Failure}) &\ \leq \sum_{t=2}^{T_s}\sum_{i=2}^n \mathbb{P}\left(\hat{\mu}_t^i \geq \hat{\mu}^1_t + \gamma_i(\pmb{\pi}_t) +\gamma_1(\pmb{\pi}_t)\right)\nonumber
\end{align}
Note that $\mathbb{P}\left(\hat{\mu}_t^i \geq \hat{\mu}^1_t + \gamma_i(\pmb{\pi}_t) +\gamma_1(\pmb{\pi}_t)\right) \leq \left[\mathbb{P}\left(\hat{\mu}_t^i \geq \mu^i + \gamma_i(\pmb{\pi}_t)\right) + \mathbb{P}\left(\hat{\mu}_t^1 \leq \mu^1 - \gamma_1(\pmb{\pi}_t)\right) \right]$, provided that $\gamma_i(\pmb{\pi}_t), \gamma_1(\pmb{\pi}_t)\leq \frac{\Delta_i}{2}$. Hence the failure probability can be upperbounded as,
\begin{align}
    \mathbb{P}(\text{Failure}) &\ \leq  \sum_{i=2}^n \sum_{t=2}^{T_s} \left[\mathbb{P}\left(\hat{\mu}_t^i \geq \mu^i + \gamma_i(\pmb{\pi}_t)\right) + \mathbb{P}\left(\hat{\mu}_t^1 \leq \mu^1 - \gamma_1(\pmb{\pi}_t)\right) \right]
\end{align}
conditioned on $\gamma_i(\pmb{\pi}_T), \gamma_1(\pmb{\pi}_T)\leq \frac{\Delta_i}{2}$. 

Let $a_0 \geq 4\sum_{t=1}^\infty t_{\text{eff}, i}^{-2}$, then from Lemma~\ref{lem:variance_estimate}, 
\begin{align}
    \mathbb{P}(\text{Failure}) &\ \leq \sum_{i=2}^n \sum_{t=2}^{T_s} \frac{2\delta}{a_0nt_{\text{eff}, i}^2}\nonumber\\
    &\ \leq \delta
\end{align}
The finiteness of the infinite sum of ${t_{\text{eff}, i}}^{-2}$ can be found in Lemma~\ref{lem:infinite_sum_bounded}. 

Thus, in order to keep $\mathbb{P}(\text{Failure}) \leq \delta$, it is sufficient if, at the time of elimination of arm $i$, we have enough samples to ensure,
\begin{align}\label{eq:elimination_criteria_1}
   \gamma_i(\pmb{\pi}_T) &\ \leq \frac{\Delta_i}{2}\nonumber\\
   \sqrt{\frac{1}{t_{\text{eff}, i}}}\left(2\sigma\sqrt{14\log{\left(\frac{2a_0nt_{\text{eff}, i}^2}{\delta}\right)}} + \rho\epsilon\right) &\ \leq \frac{\Delta_i}{2} 
\end{align}
In the absence of graph information, equation~\eqref{eq:elimination_criteria_1} devolves to the same sufficiency condition for number of samples required for suboptimal arm elimination as~\cite{even2006action}, upto constant factor. Rewriting the above equation, 
\begin{align}\label{eq:suff_sample_a_i_1}
    \frac{\log{\left(a_i\right)}}{a_i} &\ \leq \sqrt{\frac{\delta}{d_1}}\frac{\Delta_i^2}{d_0}
    \end{align}
where $d_0 = 64\times 14\sigma^2, d_1 = 2n a_0 e^{\frac{\rho^2\epsilon^2}{4\times14\sigma^2}}$ and $a_i = \sqrt{\frac{d_1}{\delta}}t_{\text{eff}, i}$. The following bound on $a_i$ is sufficient to satisfy eq.~\eqref{eq:suff_sample_a_i_1}, 
    \begin{align}
    a_i &\ \geq 2\sqrt{\frac{d_1}{\delta}}\frac{d_0}{\Delta_i^2}\log{\left(\sqrt{\frac{d_1}{\delta}}\frac{d_0}{\Delta_i^2}\right)}\nonumber
\end{align}
Resubstituting $t_{\text{eff}, i}$, we obtain the sufficient number of plays required to eliminate arm $i$ as, 
\begin{align}
    t_{\text{eff}, i} &\ \geq \frac{c_1}{\Delta_i^2}\left[\log{\left(\frac{c_2}{\delta^{\frac{1}{2}}\Delta_i^2}\right)} + c_3\right]
\end{align}
where $c_1 = 2 \times 64 \times 14\sigma^2$, $c_2 = 64 \times 14\sigma^2\sqrt{2na_0}$ and $c_3 = \frac{\rho^2\epsilon^2}{8 \times 14\sigma^2}$. In the further text we are suppressing the powers of $\delta, \Delta_i$ within the log factor as it adds only a constant multiple to the lower bound. 

The further part of the proof we use the following bound on $t_{\text{eff}, \cdot}$ from Theorem~\ref{thm:effective_samples_proof} as follows:
\begin{align}
    t_{\text{eff}, i} \geq t_i +\frac{1}{2}\min\left\{\rho\mathfrak{I}(i), T-t_{i}\right\}~~~\forall i\in [n]
\end{align}
Hence a sufficiency condition for the~\algoname{} to produce the best-arm with probability $1-\delta$ is given when both the following conditions are satisfied,
\begin{align}
    t_i + \frac{\rho\mathfrak{I}(i)}{2} \geq \frac{1}{\Delta_i^2}\left[c_1\log\left(\frac{c_2}{\delta\Delta_i}\right) + \frac{\rho\epsilon}{2}\right]
\end{align}
and, 
\begin{align}
    T+t_i \geq T \geq  \frac{2}{\Delta_i^2}\left[c_1\log\left(\frac{c_2}{\delta\Delta_i}\right) + \frac{\rho\epsilon}{2}\right]
\end{align}

From the Definition~\ref{def:b_d_set_def_full_gory} of competitive arms $\mathcal{H}$ and non-competitive arms $\mathcal{N}$, we have,  
\begin{align}
    \mathcal{H} = \left\{j\in [n] \big| \Delta_i \leq 2\sqrt{\frac{2}{\rho\mathfrak{I}(i)}} \left(2\sigma\sqrt{14\log{\left(\frac{2a_0n\rho^2\mathfrak{I}(i)^2}{\delta}\right)}} + \rho\epsilon\right)\right\}
\end{align}
After the first $\max_{i\in \mathcal{N}}\left\{\frac{2}{\Delta_i^2}\left[c_1\log{\frac{c_2}{\delta\Delta_i}} + \frac{\rho\epsilon}{2}\right]\right\}$ samples, all arms in $\mathcal{N}$ are eliminated. Further, let $k_1$ be the index of the first arm to be eliminated (in $\mathcal{H}$) and $t^*_{k_1}$ be the number of samples of arm $k_1$ before getting eliminated then the total number of additional time steps played until the arm $k_1$ is eliminated is at most $|\mathcal{H}|t^*_{k_1}$. Let $k_2$ be the index of the next arm in $\mathcal{H}$ to be eliminated. The number of additional plays until the next arm is eliminated is given by $(|\mathcal{H}|-1)[t^*_{k_2} - t^*_{k_1}]$ and so on. 

Summing up all the samples required to converge to the optimal arm is given by, (let $t^*_{k_0} = 0$)
\begin{align}
    \sum_{h=1}^{|\mathcal{H}|}(|\mathcal{H}| - h))[t^*_{k_h} -t^*_{k_{h-1}}] = \sum_{h=1}^{|\mathcal{H}|-1}t^*_{k_h} = \sum_{i\in \mathcal{H}/1} t^*_i
\end{align}

Hence the final sample complexity can be computed as follows:
\begin{itemize}
    \item Number of plays required for arms in $\mathcal{H}$ :
    \begin{align}
        \sum_{i\in \mathcal{H}/1}t^*_i \geq \sum_{i\in \mathcal{H}/1}\frac{1}{\Delta_i^2}\left[c_1\log{\frac{c_2}{\delta\Delta_i}} + \frac{\rho\epsilon}{2}\right]
    \end{align}
    \item Number of plays required for all the arms in $\mathcal{N} := [n]/\mathcal{H}$ to be eliminated:
    \begin{align}
        T \geq \max_{i\in\mathcal{N}}\left\{\frac{2}{\Delta_i^2}\left[c_1\log{\frac{c_2}{\delta\Delta_i}} + \frac{\rho\epsilon}{2}\right]\right\}
    \end{align}
\end{itemize}

Hence the final sample complexity can be given by,
\begin{align}
    T_{\text{sufficient}}  \triangleq \max_{i\in \mathcal{N}}\left\{\frac{2}{\Delta_i^2}\left[c_1\log{\frac{c_2}{\delta\Delta_i}} + \frac{\rho\epsilon}{2}\right]\right\}  + \sum_{i\in \mathcal{H}/1} \frac{1}{\Delta_i^2}\left[c_1\log{\frac{c_2}{\delta\Delta_i}} + \frac{\rho\epsilon}{2}\right]
\end{align}

Hence proved.
\end{proof}

We extend Lemma~\ref{lem:connected_sample_complexity} to the case when graph $G$ has disconnected clusters.

\textbf{Note:} The following theorem stated in the main paper has a typographical error in the equation for $T_\text{sufficient}$ in place of $\mathop{\arg\min}$ it is supposed to be $\min$.

\begin{theorem}\label{thm: sample_complexity-proof}
Consider $n$-armed bandit problem with mean vector $\pmb{\mu}\in \mathbb{R}^n$. Let $\mathcal{G}$ be the set of subgraphs of given similarity graph $G$ on the vertex set $[n]$, and further suppose that $\pmb{\mu}$ is $\epsilon$-smooth. Define 
\begin{align}\label{eq:T_sample_generic_base}
    T_{\text{sufficient}}  \triangleq \mathop{\min}_{D\in \mathcal{G}} \sum_{C\in\mathcal{C}_D}\left[\sum_{i\in C\cap\mathcal{H}_D} \frac{1}{\Delta_i^2}\left[c_1\log{\frac{c_2}{\delta\Delta_i}} + \frac{\rho\epsilon}{2}\right]  + \max_{i\in C\cap\mathcal{N}_D}\left\{\frac{2}{\Delta_i^2}\left[c_1\log{\frac{c_2}{\delta\Delta_i}} + \frac{\rho\epsilon}{2}\right]\right\}\right]
\end{align}
where $\Delta_i = \mu^*-\mu_i$ for all suboptimal arms, $\mathcal{H}_D$ and $\mathcal{N}_D$ are as in Definition~\ref{def:b_d_set_def_full_gory}, $\mathcal{C}_D$ is the set of connected components of a subgraph $D\in\mathcal{G}$ and $c_1, c_2$ are constants independent of system parameters. Then, with probability at least $1-\delta$, \algoname{}: (a) terminates in no more than $T_{\text{sufficient}}$ rounds, and (b) returns the best arm $a^\ast = \mathop{\arg\max}_i \mu_i$. 
\end{theorem}
\begin{proof}
Let $\mathcal{C}_G$ denote the connected components of graph $G$. From Lemma~\ref{lem:connected_sample_complexity}, the number of samples for each connected component $C\in \mathcal{C}_G$ can be given as,
\begin{align}\label{eq:intermediate_GRUB}
    T_{\text{sufficient}} = \left[\sum_{i\in C\cap\mathcal{H}} \frac{1}{\Delta_i^2}\left[c_1\log{\frac{c_2}{\delta\Delta_i}} + \frac{\rho\epsilon}{2}\right]+ \max_{i\in C\cap\mathcal{N}}\left\{\frac{2}{\Delta_i^2}\left[c_1\log{\frac{c_2}{\delta\Delta_i}} + \frac{\rho\epsilon}{2}\right]\right\}\right]
\end{align}
We can obtain the sample complexity for obtaining the best arm by summing it over all the components $C\in \mathcal{C}$, gives us the sample complexity for ~\algoname{} while considering graph $G$. 
\begin{align}\label{eq:intermediate_full_GRUB}
    T_{\text{sufficient}} = \sum_{C\in \mathcal{C}_G}\left[\sum_{i\in C\cap\mathcal{H}} \frac{1}{\Delta_i^2}\left[c_1\log{\frac{c_2}{\delta\Delta_i}} + \frac{\rho\epsilon}{2}\right]+ \max_{i\in C\cap\mathcal{N}}\left\{\frac{2}{\Delta_i^2}\left[c_1\log{\frac{c_2}{\delta\Delta_i}} + \frac{\rho\epsilon}{2}\right]\right\}\right]
\end{align}

Any subgraph $D$ of graph $G$ satisfies,
\begin{align}
\langle \pmb{\mu}, L_G\pmb{\mu}\rangle \leq \epsilon \Rightarrow \langle \pmb{\mu}, L_D\pmb{\mu}\rangle \leq \epsilon
\end{align}


As seen in Definition~\ref{def:b_d_set_def_full_gory}, the influence factor is instrumental in deciding the competitive and non-competitive sets, which further dictates the sample complexity bounds. Further, notice from Lemma~\ref{lem:d_change_subgraph} that the influence factor $\mathfrak{I}(i, D)$ is not monotonic when considering subgraph $D$ of graph $G$.  Hence considering a subgraph of $G$ could potentially increase the number of non-competitive arms and provide us with a tighter bound on the performance for \algoname{}. 

Hence $T_{\text{sufficient}}$ in ~\eqref{eq:intermediate_GRUB} can be made tighter by considering the minimum value over the entire set of subgraphs $\mathcal{G}$. 
\end{proof}

We next derive sample complexity upper bounds for \algoname{} in certain illuminating special cases. 



\begin{corollary}[Isolated clusters]\label{cor:sample_complexity_clusters}
Consider the setup as in Theorem~\ref{thm: sample_complexity} with the further restriction that $G$ consists of a subgraph $F$ such that optimal node is isolated and arms $[2, \dots, n]$ are split in $k$ clusters and $\Delta_i \geq 2\sqrt{\frac{2}{\rho\mathfrak{I}(i,F)}}\left(2\sigma\sqrt{14\log{\left(\frac{2a_0n\rho^2\mathfrak{I}(i, F)^2}{\delta}\right)}} + \rho\epsilon\right)$, $\forall i\in [2, \dots, n]$. Define
\begin{align}
    T_{\text{sufficient}} \triangleq \sum_{C\in \mathcal{C}_F/1} \max_{j\in C} \frac{2}{\Delta_j^2}\left[c_1\log{\left(\frac{c_2}{\delta\Delta_i}\right) + \frac{\rho\epsilon}{2}}\right]
\end{align}
Then, with probability at least $1-\delta$, \algoname{}: (a) terminates in no more than $T_{\text{sufficient}}$ rounds, and (b) returns the best arm $a^\ast = \arg\max_i \mu_i$.
\end{corollary}
Corollary~\ref{cor:sample_complexity_clusters}  shows that in scenarios where the arms are well clustered, the sample complexity of \algoname{} can scale with the number of clusters, a quantity that is typically significantly smaller than the total number of nodes in the graph. 


\begin{corollary}[Star graph]\label{cor:sample_complexity_star}
Consider the setup as in Theorem~\ref{thm: sample_complexity} with the further restriction that $G$ consists of a star subgraph with the central node as the optimal arm and $\Delta_i \leq 2\sqrt{\frac{2}{\rho\mathfrak{I}(i,F)}}\left(2\sigma\sqrt{14\log{\left(\frac{2a_0n\rho^2\mathfrak{I}(i, F)^2}{\delta}\right)}} + \rho\epsilon\right)$, $\forall i\in [2, \dots, n]$. Define
\begin{align}
    T_{\text{sufficient}} \triangleq \sum_{i=2}^n \frac{1}{\Delta_i^2}\left[c_1\log\left(\frac{c_2}{\delta\Delta_i}\right)+\frac{\rho\epsilon}{2}\right]
\end{align}
Then, with probability at least $1-\delta$, \algoname{}: (a) terminates in no more than $T_{\text{sufficient}}$ rounds, and (b) returns the best arm $a^\ast = \arg\max_i \mu_i$.
\end{corollary}

In Corollary~\ref{cor:sample_complexity_star},  $T_{\text{sufficient}}$ is the same sample complexity as vanilla best arm identification, upto constant factors which is due to the fact that pulling one of the spoke arms does not yield much information about the other spoke arms, and this is the exact situation in the standard pure exploration setting.

\section{Lower bounds}{\label{app:lowerbounds}}
In this section we give a lower bound on the sample complexity for any $\delta$-PAC to return the best arm for a $n$ armed bandit problem along with graph side information.
\begin{theorem}\label{thm:lower-bound-known-proof}
Given an $n$-armed bandit model with associated mean vector $\pmb{\mu}\in \mathbb{R}^n$ and similarity graph $G$ smooth on $\pmb{\mu}$, i.e. $\langle\pmb{\mu}, L_G\pmb{\mu}\rangle \leq \epsilon$, for any $0<\epsilon <\epsilon_0$. Let $G = ([n],E)$ be the graph with only $k$ isolated cliques and w.l.o.g let arm 1 be the optimal arm. Then define
\begin{equation}
    T_{\text{necessary}} = \sum_{C\in \mathcal{C}_G/{C^*}}\min_{j\in C} \left\{\frac{4\sigma^2\log 5}{(\Delta_j - \sqrt{\epsilon})^2}\right\} + \sum_{j\in C^*/1 } \frac{4\sigma^2\log 5}{\Delta_j ^2}
\end{equation}
where $C^*$ is the clique with the optimal arm and $\epsilon_0 := \underset{i\in [n]/1, j\in C(i)}{\min}\left[\Delta_j \left[1 - \frac{\Delta_i}{\sqrt{\Delta_i^2 + \Delta_j^2}}\right]\right]^2$. Then any $\delta$-PAC algorithm will need at-least $T_{\text{necessary}}$ steps to terminate, provided $\delta \leq 0.1$. 
\end{theorem}
\begin{proof}

We prove the theorem in two steps: Firstly, we construct the sample complexity lower bound for the similarity graph with the isolated optimal arm and a clique of rest of the sub-optimal arms, followed by step 2 the sample complexity lower bound for a graph with single cluster

\textbf{Step 1:}

Consider a $n+1$ armed bandit problem with mean vector $\pmb{\mu}\in \mathbb{R}^{n+1}$ and similarity graph $M$ with an isolated optimal arm (arm 1) and $n$-clique cluster of suboptimal arms, satisfying the condition for smoothness of rewards over the graph,i.e., $\langle\pmb{\mu}, L_M\pmb{\mu}\rangle\leq \epsilon$. Then the following holds
\begin{align}\label{eq:smoothness_criterion_clique}
    \underset{i\neq 1}\max~\mu_i \leq \underset{j\neq 1}\min~\{\mu_j + \sqrt{\epsilon}\}
\end{align}

Assume that ordering of mean in $n$-clique of suboptimal arms is known. From ~\citep{kaufmann2015complexity}, there exists a $\delta$-PAC algorithm, for $\delta\leq 0.1$, which can successful identify the best arm for the subproblem with just the optimal arm and arm with the maximum mean in the $n$-clique cluster, i.e. $j' = \underset{j\neq 1}\arg\max \mu_j$ with the total number of samples given by,
\begin{align}
    T \geq \frac{4\log 5\sigma^2}{\Delta^2_{j'}}
\end{align}


Now consider the case where the ordering of the mean in $n$-clique is unknown. In order to remove all the suboptimal arms provided $\epsilon \leq \min_{j\neq 1}~\Delta_j^2$ and \eqref{eq:smoothness_criterion_clique} holds, it is suffices to be able to distinguish between the optimal arm and a hypothetical suboptimal arm with mean $\mu_j+\sqrt{\epsilon}$ where $j$ is any arm from suboptimal $n$-clique, and the minimum number of samples required by any $\delta$-PAC algorithm to successfully identify the best arm with $\delta \leq 0.1$ is given by,
\begin{align}
    T \geq \frac{4\log 5\sigma^2}{(\Delta_j - \sqrt{\epsilon})^2}
\end{align}
The best performance in terms of sample complexity out of all the random choice of arm from the suboptimal $n$-clique cluster is, 
\begin{align}
    T \geq \underset{j\neq 1}{\min}\left\{\frac{4\log 5\sigma^2}{(\Delta_j - \sqrt{\epsilon})^2}\right\}
\end{align}

Given $\epsilon_0 := \underset{i\in [n]/1, j\in C(i)}{\min}\left[\Delta_j \left[1 - \frac{\Delta_i}{\sqrt{\Delta_i^2 + \Delta_j^2}}\right]\right]^2$ and $\epsilon < \epsilon_0$, it can be verified that for any arm $i,j \neq 1$,
\begin{align}
    \underset{j\neq 1}{\min}\frac{4\log 5\sigma^2}{(\Delta_j - \sqrt{\epsilon})^2} < \frac{4\log 5\sigma^2}{\Delta_i^2} + \frac{4\log 5\sigma^2}{\Delta_j^2}
\end{align}
where the left hand side corresponds to the sample complexity lower bound of removing the suboptimal arms $i,j$ with the graph side information and the right hand side corresponds to the same without graph side information.

Hence it can be inferred that it is inefficient to remove the arms individually (disregarding the graph information).

\textbf{Step 2 :}

Consider a $n+1$ armed bandit problem with mean vector $\pmb{\mu}\in \mathbb{R}^{n+1}$ with a given similarity graph $N$ such that $\langle \pmb{\mu}, L_N\pmb{\mu}\rangle\leq \epsilon$. Let all the suboptimal arms be connected to the optimal arm. 

Here we show by an adversarial example that it is not possible to have a lower bound on the sample complexity which scales better than,
\begin{align}
    T \geq \sum_{j\neq 1}\frac{4\log 5\sigma^2}{\Delta_j^2}
\end{align}

There exists a $\delta$-PAC algorithm which can determine that arms $j=3, \dots, n$ are suboptimal after $T \geq \sum_{j\neq 1, 2}\frac{1}{\Delta_j^2}$ samples. From the smoothness of rewards on the similarity graph $N$ we know that,
\begin{align}
    - \sqrt{\epsilon} \leq \mu_1 - \mu_j  \leq \sqrt{\epsilon}~~~\forall j \in [2, 3, \dots, n] 
\end{align}
This information does not help us identify or even reduce the number of samples required to identify optimal arm between arm 1 and arm 2. Thus no $\delta$-PAC algorithm, $\delta \leq 0.1$, can determine the optimal arm from arm $1$ and arm $2$ without an additional $\frac{4\log 5\sigma^2}{\Delta_2^2}$ samples for determining the best arm. 

Using above two steps, we construct the proof for lower bound as follows: 

Now consider the graph side information as defined in the theorem, and let $\mathcal{C}_G$ denote the set of connected components of graph $G$ and $C^*\in \mathcal{C}_G$ be the component containing the optimal arm. Finding the best arm in this setup requires elimination of the suboptimal arms with in the connected component containing optimal arm $j \in C^*$ and elimination of the other connected components with suboptimal arms $j \in \mathcal{C}_G/C^*$. Hence, the sample complexity lower bounds~\citep{kaufmann2015complexity, kaufmann2016complexity} for any $\delta$-PAC algorithm with $\delta \leq 0.1$ to eliminate these arms using the tools developed in step 1 and step 2, is given by
\begin{align}
    T &\ \geq \sum_{j\in C^*/1} \frac{4\sigma^2\log 5}{\Delta_j^2} + \sum_{C\in \mathcal{C}_G/{C^*}} \min_{j\in C} \left\{\frac{4\sigma^2\log 5}{(\Delta_j - \sqrt{\epsilon})^2}\right\}
\end{align}
\end{proof}

\section{$\zeta$-GRUB Sample complexity proof}\label{app:zeta_generic_sample_complexity}
\begin{definition}\label{def:b_d_set_def_full_gory_zeta}
Fix $\pmb{\mu}\in \mathbb{R}^n$, graph $D$, confidence parameter $\delta$, noise variance $\sigma$, and relaxation parameter $\zeta$. We define $\mathcal{H}$ to be the set of competitive arms and $\mathcal{N}$ to be the set of non-competitive arms for $\zeta$-GRUB as follows: 
\small
\begin{align}\label{def:b_i_d_i_full_gory_zeta}
     \mathcal{H}(D,\pmb{\mu},\delta,\zeta) &= \left\{j\in [n] \big| \Delta_i^\zeta \leq 2\sqrt{\frac{2}{\rho\mathfrak{I}(i)}} \left(2\sigma\sqrt{14\log{\left(\frac{2a_0n\rho^2\mathfrak{I}(i)^2}{\delta}\right)}} + \rho\epsilon\right)\right\}\nonumber,\\
     \mathcal{N}(D,\pmb{\mu},\delta,\zeta) &\triangleq [n]\setminus \mathcal{H}(D, \pmb{\mu},\delta,\zeta) \nonumber
\end{align}
\normalsize
where $\Delta_i^{\zeta}\triangleq \max\{\Delta_i,\zeta\}$.
\end{definition}

\begin{lemma}\label{lem:zeta_connected_sample_complexity_proof}
Consider $n$-armed bandit problem with mean vector $\pmb{\mu}\in \mathbb{R}^n$. Let $G$ be a given connected similarity graph on the vertex set $[n]$, and further suppose that $\pmb{\mu}$ is $\epsilon$-smooth. Define 
\begin{align}
    T_{\text{sufficient}} &\ \triangleq  \sum_{i\in \mathcal{H}} \frac{1}{(\Delta_i^\zeta)^2}\left[c_1\log{\frac{c_2}{\delta\Delta_i^\zeta}} + \frac{\rho\epsilon}{2}\right] + \max_{i\in \mathcal{N}}\left\{\frac{2}{(\Delta_i^\zeta)^2}\left[c_1\log{\frac{c_2}{\delta\Delta_i^\zeta}} + \frac{\rho\epsilon}{2}\right]\right\}
\end{align}
where $\Delta_i^{\zeta}\triangleq \max\{\Delta_i,\zeta\}$. Then, with probability at least $1-\delta$, \algoname{}: (a) terminates in no more than $T_{\text{sufficient}}$ rounds, and (b) returns a $\zeta$-best arm
\end{lemma}


\begin{proof}
With out loss of generality, assume that $a^* = 1$. Let $\{t_i\}_{i=1}^n$ denote the number of plays of each arm upto time $T$. By Lemma~\ref{lem:variance_estimate}, we can state that,
\begin{align}
    \mathbb{P}\left(|\hat{\mu}^{i}_T - \mu_{i}| \geq \gamma_i(\pmb{\pi}_T)\right) \leq \frac{2\delta}{a_0nt_{\text{eff}, i}^2}
\end{align}
where, $\gamma_i(\pmb{\pi}_T) = \beta_i(\pmb{\pi}_T)\sqrt{t_{\text{eff}, i}^{-1}}$ and $\beta_i(\pmb{\pi}_T) = \left(2\sigma\sqrt{14\log{\left(\frac{2a_0nt_{\text{eff}, i}^2}{\delta}\right)}} + \rho\|\pmb{\mu}\|_G\right).$ 

As is reflected in the elimination policy~\eqref{eq:elimination_routine}, at any time $t$, arm 1 can be mistakenly eliminated in \algoname{} only if $\hat{\mu}_t^{i} > \hat{\mu}_t^1 + \gamma_i(\pmb{\pi}_t) +\gamma_1(\pmb{\pi}_t$). Let $T_s$ be the stopping time of \algoname{}, then the total failure probability for \algoname{} can be upper-bounded as,
\begin{align}
    \mathbb{P}(\text{Failure}) &\ \leq \sum_{t=2}^{T_s}\sum_{i=2}^n \mathbb{P}\left(\hat{\mu}_t^i \geq \hat{\mu}^1_t + \gamma_i(\pmb{\pi}_t) +\gamma_1(\pmb{\pi}_t)\right)\nonumber
\end{align}
Note that $\mathbb{P}\left(\hat{\mu}_t^i \geq \hat{\mu}^1_t + \gamma_i(\pmb{\pi}_t) +\gamma_1(\pmb{\pi}_t)\right) \leq \left[\mathbb{P}\left(\hat{\mu}_t^i \geq \mu^i + \gamma_i(\pmb{\pi}_t)\right) + \mathbb{P}\left(\hat{\mu}_t^1 \leq \mu^1 - \gamma_1(\pmb{\pi}_t)\right) \right]$, provided that $\gamma_i(\pmb{\pi}_t), \gamma_1(\pmb{\pi}_t)\leq \frac{\Delta_i^{\zeta}}{2}$. Hence the failure probability can be upperbounded as,
\begin{align}
    \mathbb{P}(\text{Failure}) &\ \leq  \sum_{i=2}^n \sum_{t=2}^{T_s} \left[\mathbb{P}\left(\hat{\mu}_t^i \geq \mu^i + \gamma_i(\pmb{\pi}_t)\right)  + \mathbb{P}\left(\hat{\mu}_t^1 \leq \mu^1 - \gamma_1(\pmb{\pi}_t)\right) \right]
\end{align}
conditioned on $\gamma_i(\pmb{\pi}_T), \gamma_1(\pmb{\pi}_T)\leq \frac{\Delta_i^\zeta}{2}$.

Let $a_0 \geq 4\sum_{t=1}^\infty t_{\text{eff}, i}^{-2}$, then from Lemma~\ref{lem:variance_estimate}, 
\begin{align}
    \mathbb{P}(\text{Failure}) &\ \leq \sum_{i=2}^n \sum_{t=2}^{T_s} \frac{2\delta}{a_0nt_{\text{eff}, i}^2}\nonumber\\
    &\ \leq \delta
\end{align}
The finiteness of the infinite sum of ${t_{\text{eff}, i}}^{-2}$ can be found in Lemma~\ref{lem:infinite_sum_bounded}. 

Thus, in order to keep $\mathbb{P}(\text{Failure}) \leq \delta$, it is sufficient if, at the time of elimination of arm $i$, we have enough samples to ensure,
\begin{align}\label{eq:elimination_criteria_1_zeta}
   \gamma_i(\pmb{\pi}_T) &\ \leq \frac{\Delta_i^\zeta}{2}\nonumber\\
   \sqrt{\frac{1}{t_{\text{eff}, i}}}\left(2\sigma\sqrt{14\log{\left(\frac{2a_0nt_{\text{eff}, i}^2}{\delta}\right)}} + \rho\epsilon\right) &\ \leq \frac{\Delta_i^\zeta}{2} 
\end{align}
Rewriting the above equation, 
\begin{align}\label{eq:suff_sample_a_i_1_zeta}
    \frac{\log{\left(a_i\right)}}{a_i} &\ \leq \sqrt{\frac{\delta}{d_1}}\frac{(\Delta_i^\zeta)^2}{d_0}
    \end{align}
where $d_0 = 64\times 14\sigma^2, d_1 = 2n a_0 e^{\frac{\rho^2\epsilon^2}{4\times14\sigma^2}}$ and $a_i = \sqrt{\frac{d_1}{\delta}}t_{\text{eff}, i}$. The following bound on $a_i$ is sufficient to satisfy eq.~\eqref{eq:suff_sample_a_i_1_zeta}, 
    \begin{align}
    a_i &\ \geq 2\sqrt{\frac{d_1}{\delta}}\frac{d_0}{(\Delta_i^\zeta)^2}\log{\left(\sqrt{\frac{d_1}{\delta}}\frac{d_0}{(\Delta_i^\zeta)^2}\right)}\nonumber
\end{align}
Resubstituting $t_{\text{eff}, i}$, we obtain the sufficient number of plays required to eliminate arm $i$ as, 
\begin{align}
    t_{\text{eff}, i} &\ \geq \frac{c_1}{(\Delta_i^\zeta)^2}\left[\log{\left(\frac{c_2}{\delta^{\frac{1}{2}}(\Delta_i^\zeta)^2}\right)} + c_3\right]
\end{align}
where $c_1 = 2 \times 64 \times 14\sigma^2$, $c_2 = 64 \times 14\sigma^2\sqrt{2na_0}$ and $c_3 = \frac{\rho^2\epsilon^2}{8 \times 14\sigma^2}$.


The further part of the proof depends crucially on the following bound on $t_{\text{eff}, i}$ for all $i\in [n]$ from Theorem~\ref{thm:effective_samples_proof} as follows:
\begin{align}
    t_{\text{eff}, i} \geq t_i +\frac{1}{2}\min\left\{\rho\mathfrak{I}(i), T-t_{i}\right\}
\end{align}
Hence a sufficiency condition for the~\algoname{} to produce the $\zeta$-best arm with probability $1-\delta$ is given when both the following conditions are satisfied,
\begin{align}
    t_i + \frac{\rho\mathfrak{I}(i)}{2} \geq \frac{1}{(\Delta_i^\zeta)^2}\left[c_1\log\left(\frac{c_2}{\delta\Delta_i^\zeta}\right) + \frac{\rho\epsilon}{2}\right]
\end{align}
and, 
\begin{align}
    T+t_i \geq T \geq  \frac{2}{(\Delta_i^\zeta)^2}\left[c_1\log\left(\frac{c_2}{\delta\Delta_i^\zeta}\right) + \frac{\rho\epsilon}{2}\right] 
\end{align}

From the Definition~\ref{def:b_d_set_def_full_gory_zeta} we have the set of competitive arms $\mathcal{H}$ and non-competitive arms $\mathcal{N}$ as follows:  
\begin{align}
    \mathcal{H} = &\ \left\{j\in [n] \big| \Delta_i^\zeta \leq 2\sqrt{\frac{2}{\rho\mathfrak{I}(i)}} \left(2\sigma\sqrt{14\log{\left(\frac{2a_0n\rho^2\mathfrak{I}(i)^2}{\delta}\right)}} + \rho\epsilon\right)\right\}
\end{align}
After the first $\max_{i\in \mathcal{N}}\left\{\frac{2}{(\Delta_i^\zeta)^2}\left[c_1\log{\frac{c_2}{\delta\Delta_i^\zeta}} + \frac{\rho\epsilon}{2}\right]\right\}$ samples, all arms in $\mathcal{N}$ are eliminated. Further, let $k_1$ be the index of the first arm to be eliminated (in $\mathcal{H}$) and $t^*_{k_1}$ be the number of samples of arm $k_1$ before getting eliminated then the total number of additional time steps played until the arm $k_1$ is eliminated is at most $|\mathcal{H}|t^*_{k_1}$. Let $k_2$ be the index of the next arm in $\mathcal{H}$ to be eliminated. The number of additional plays until the next arm is eliminated is given by $(|\mathcal{H}|-1)[t^*_{k_2} - t^*_{k_1}]$ and so on. 

Summing up all the samples required to converge to the optimal arm is given by, (let $t^*_{k_0} = 0$)
\begin{align}
    \sum_{h=1}^{|\mathcal{H}|}(|\mathcal{H}| - h))[t^*_{k_h} -t^*_{k_{h-1}}] = \sum_{h=1}^{|\mathcal{H}|-1}t^*_{k_h} = \sum_{i\in \mathcal{H}/1} t^*_i
\end{align}

Hence the final sample complexity can be computed as follows:
\begin{itemize}
    \item Number of plays required for arms in $\mathcal{H}$ :
    \begin{align}
        \sum_{i\in \mathcal{H}/1}t^*_i \geq \sum_{i\in \mathcal{H}/1}\frac{1}{(\Delta_i^\zeta)^2}\left[c_1\log{\frac{c_2}{\delta\Delta_i^\zeta}} + \frac{\rho\epsilon}{2}\right]
    \end{align}
    \item Number of plays required for all the arms in $\mathcal{N} := [n]/\mathcal{H}$ to be eliminated:
    \begin{align}
        T \geq \max_{i\in\mathcal{N}}\left\{\frac{2}{(\Delta_i^\zeta)^2}\left[c_1\log{\frac{c_2}{\delta\Delta_i^\zeta}} + \frac{\rho\epsilon}{2}\right]\right\}
    \end{align}
\end{itemize}

Hence the final sample complexity can be given by,
\begin{align}
    T_{\text{sufficient}} &\ \triangleq \max_{i\in \mathcal{N}}\left\{\frac{2}{(\Delta_i^\zeta)^2}\left[c_1\log{\frac{c_2}{\delta\Delta_i^\zeta}} + \frac{\rho\epsilon}{2}\right]\right\} + \sum_{i\in \mathcal{H}/1} \frac{1}{(\Delta_i^\zeta)^2}\left[c_1\log{\frac{c_2}{\delta\Delta_i^\zeta}} + \frac{\rho\epsilon}{2}\right]
\end{align}
\end{proof}

We extend Lemma~\ref{lem:zeta_connected_sample_complexity_proof} to the case when graph $G$ has disconnected clusters.


\begin{theorem}\label{thm: zeta_sample_complexity}
Consider $n$-armed bandit problem with mean vector $\pmb{\mu}\in \mathbb{R}^n$. Let $\mathcal{G}$ be the set of subgraphs given similarity graph $G$ on the vertex set $[n]$, and further suppose that $\pmb{\mu}$ is $\epsilon$-smooth. Define 
\begin{align}\label{eq:T_sample_generic_base}
    T_{\text{sufficient}} &\triangleq
    \min_{D\in \mathcal{G}}  \sum_{C\in\mathcal{C}_D}\left[\sum_{i\in C\cap\mathcal{H}_D} \frac{1}{(\Delta_i^\zeta)^2}\left[c_1\log{\frac{c_2}{\delta\Delta_i^\zeta}} + \frac{\rho\epsilon}{2}\right] \right. \nonumber\\ 
    & \left. + \max_{i\in C\cap\mathcal{N}_D}\left\{\frac{2}{(\Delta_i^\zeta)^2}\left[c_1\log{\frac{c_2}{\delta\Delta^\zeta_i}} + \frac{\rho\epsilon}{2}\right]\right\}\right]
\end{align}
where $\Delta_i^\zeta = \max\{\Delta_i,\zeta\}$ for all suboptimal arms, $\mathcal{H}_D$ and $\mathcal{N}_D$ are as in Definition~\ref{def:b_d_set_def_full_gory_zeta}, $\mathcal{C}_D$ is the set of connected components of subgraph $D \in \mathcal{G}$ and $c_1, c_2$ are constants independent of system parameters. Then, with probability at least $1-\delta$, \algoname{}: (a) terminates in no more than $T_{\text{sufficient}}$ rounds, and (b) returns a $\zeta$-best arm
\end{theorem}
\begin{proof}
From Lemma~\ref{lem:zeta_connected_sample_complexity_proof}, the sample complexity for each connected component $C\in \mathcal{C}$ can be given as,
\begin{align}\label{eq:intermediate_zeta_GRUB}
    T_{\text{sufficient}} &\ = \left[\sum_{i\in C\cap\mathcal{H}} \frac{1}{(\Delta_i^\zeta)^2}\left[c_1\log{\frac{c_2}{\delta\Delta_i^\zeta}} + \frac{\rho\epsilon}{2}\right]+ \max_{i\in C\cap\mathcal{N}}\left\{\frac{2}{(\Delta_i^\zeta)^2}\left[c_1\log{\frac{c_2}{\delta\Delta_i^\zeta}} + \frac{\rho\epsilon}{2}\right]\right\}\right]
\end{align}
where, summing it over all the components $C\in \mathcal{C}$, gives us the sample complexity for ~\algoname{} while considering graph $G$. 

Any subgraph $D$ of graph $G$ satisfies,
\begin{align}
\langle \pmb{\mu}, L_G\pmb{\mu}\rangle \leq \epsilon \Rightarrow \langle \pmb{\mu}, L_D\pmb{\mu}\rangle \leq \epsilon
\end{align}



As seen in Definition~\ref{def:b_d_set_def_full_gory_zeta}, the influence factor is instrumental in deciding the competitive and non-competitive sets, which further dictates the sample complexity bounds. Further, notice from Lemma~\ref{lem:d_change_subgraph} that the influence factor $\mathfrak{I}(i, D)$ is not monotonic when considering subgraph $D$ of graph $G$.  Hence considering a subgraph of $G$ could potentially increase the number of non-competitive arms and provide us with a tighter bound on the performance for \algoname{}. 

Hence $T_{\text{sufficient}}$ can be made tighter by considering the minimum value over the entire set of subgraphs $\mathcal{G}$. 
\end{proof}
Note that, as in the case of \algoname{}, the $\zeta$-\algoname{} algorithm's performance \emph{automatically} adapts to the best possible subgraph in $\mathcal{G}$.

\section{The Incomparability of the Graph Bandits problem with Linear Bandits}\label{app:as_linear_bandit}

In this section, we provide toy example as well as theoritical base to show the difference between the framework of bandits with graph side information and linear bandits. In this appendix, we first explain the working of the toy example in more detail and then head towards the proof of proposition.

\subsection{Toy example}\label{subsec:toy}
Consider  $3$-armed bandit problem with graph side information : Let graph $G$ encodes the similarity relation between the mean values of the three arms, i.e. \begin{align}\label{eq:toy_graph}
    \langle \pmb{\mu}, L_G\pmb{\mu}\rangle \leq \epsilon
\end{align} for some constant $\epsilon > 0$. Let $E_G$ denote the edge set of graph $G$ and $\mathbbm{1}_{(1,2)}, \mathbbm{1}_{(2,3)}$ and $\mathbbm{1}_{(1,3)}$ encodes the event if edges $\{(1,2), (2.3), (1,3)\}\in E_G$ are present in graph $G$. For the sake of a non-trivial analysis we take that either $(1,3)$ or $(2,3)$ is present in $E_G$ (alternate case is argued later). 

We can write equation~\eqref{eq:toy_graph} as,
\begin{align}\label{eq:eps_big}
    \mathbbm{1}_{(1,2)}(\mu_1-\mu_2)^2 + \mathbbm{1}_{(2,3)}(\mu_2-\mu_3)^2 + 
    \mathbbm{1}_{(1,3)}(\mu_1-\mu_3)^2 \leq \epsilon
\end{align}
In order to compare the dependence behaviour of $\mu_3$ on $\mu_1, \mu_2$ we can rearrange the above as,
\begin{align}\label{eq:mu3_quadratic}
    &\ \mu_3^2 \left(\mathbbm{1}_{(2,3)} + 
    \mathbbm{1}_{(1,3)}\right) -2\mu_3\left(\mathbbm{1}_{(2,3)}\mu_2 + 
    \mathbbm{1}_{(1,3)}\mu_1\right) \nonumber\\ &\ + \left(\mathbbm{1}_{(2,3)}\mu_2^2 + 
    \mathbbm{1}_{(1,3)}\mu_1^2 + \mathbbm{1}_a(\mu_1-\mu_2)^2 - \epsilon\right) \leq 0
\end{align}
Looking at equation~\eqref{eq:mu3_quadratic} as a quadratic in $\mu_3$ and finding the solutions, we obtain that,
\begin{align}
    &\ \mu_3\geq \frac{\left(\mathbbm{1}_{(2,3)}\mu_2 + 
    \mathbbm{1}_{(1,3)}\mu_1\right) }{\left(\mathbbm{1}_{(2,3)} + 
    \mathbbm{1}_{(1,3)}\right)}\nonumber\\
    &\ - \frac{ \sqrt{\left(\mathbbm{1}_{(2,3)}\mu_2 + 
    \mathbbm{1}_{(1,3)}\mu_1\right)^2 - \left(\mathbbm{1}_{(2,3)} + 
    \mathbbm{1}_{(1,3)}\right)\left(\mathbbm{1}_{(2,3)}\mu_2^2 + 
    \mathbbm{1}_{(1,3)}\mu_1^2 + \mathbbm{1}_{(1,2)}(\mu_1-\mu_2)^2 - \epsilon\right) }}{\left(\mathbbm{1}_{(2,3)} + 
    \mathbbm{1}_{(1,3)}\right)}\nonumber\\
    &\ \mu_3\leq \frac{\left(\mathbbm{1}_{(2,3)}\mu_2 + 
    \mathbbm{1}_{(1,3)}\mu_1\right) }{\left(\mathbbm{1}_{(2,3)} + 
    \mathbbm{1}_{(1,3)}\right)}\nonumber\\
    &\ + \frac{ \sqrt{\left(\mathbbm{1}_{(2,3)}\mu_2 + 
    \mathbbm{1}_{(1,3)}\mu_1\right)^2 - \left(\mathbbm{1}_{(2,3)} + 
    \mathbbm{1}_{(1,3)}\right)\left(\mathbbm{1}_{(2,3)}\mu_2^2 + 
    \mathbbm{1}_{(1,3)}\mu_1^2 + \mathbbm{1}_{(1,2)}(\mu_1-\mu_2)^2 - \epsilon\right) }}{\left(\mathbbm{1}_{(2,3)} + 
    \mathbbm{1}_{(1,3)}\right)}\nonumber\\
\end{align}

Further simplifying it, we get the following:
\begin{align}\label{eq:bounds_simplified}
    &\ \mu_3\geq \frac{\left(\mathbbm{1}_{(2,3)}\mu_2 + 
    \mathbbm{1}_{(1,3)}\mu_1\right) }{\left(\mathbbm{1}_{(2,3)} + 
    \mathbbm{1}_{(1,3)}\right)}\nonumber\\
    &\ -\frac{ \sqrt{\epsilon\left(\mathbbm{1}_{(2,3)} + 
    \mathbbm{1}_{(1,3)}\right) - \mathbbm{1}_{(2,3)}\mathbbm{1}_{(1,3)}(\mu_2-\mu_1)^2 -  \mathbbm{1}_{(1,2)}\left(\mathbbm{1}_{(2,3)} + 
    \mathbbm{1}_{(1,3)}\right)(\mu_1-\mu_2)^2 }}{\left(\mathbbm{1}_{(2,3)} + 
    \mathbbm{1}_{(1,3)}\right)}\nonumber\\
    &\ \mu_3\geq \frac{\left(\mathbbm{1}_{(2,3)}\mu_2 + 
    \mathbbm{1}_{(1,3)}\mu_1\right) }{\left(\mathbbm{1}_{(2,3)} + 
    \mathbbm{1}_{(1,3)}\right)}\nonumber\\
    &\ +\frac{ \sqrt{\epsilon\left(\mathbbm{1}_{(2,3)} + 
    \mathbbm{1}_{(1,3)}\right) - \mathbbm{1}_{(2,3)}\mathbbm{1}_{(1,3)}(\mu_2-\mu_1)^2 -  \mathbbm{1}_{(1,2)}\left(\mathbbm{1}_{(2,3)} + 
    \mathbbm{1}_{(1,3)}\right)(\mu_1-\mu_2)^2 }}{\left(\mathbbm{1}_{(2,3)} + 
    \mathbbm{1}_{(1,3)}\right)}\nonumber\\
\end{align}
The above equation leads to non-trivial bounds as $\epsilon$ satisfies equation~\eqref{eq:eps_big}. 

For the simple case of when edge $(2,3)$ is present and $(1,3)$ is not, the above analysis simplifies to,
\begin{align}
    \mu_2 - \sqrt{\epsilon - \mathbbm{1}_{(1,2)}(\mu_1 - \mu_2)^2}\leq \mu_3 \leq \mu_2 + \sqrt{\epsilon - \mathbbm{1}_{(1,2)}(\mu_1 - \mu_2)^2}
\end{align}
Similar analysis can be made when edge $(1,3)$ is present and $(2,3)$ is not. 

This shows that even with the complete knowledge of $\mu_1, \mu_2$ we can only estimate $\mu_3$ to a interval $[\mu_{\text{low}}, \mu_{\text{high}}]$ where the endpoints of interval are given by equation~\eqref{eq:bounds_simplified}. For the case when neither of the edges $(1,3)$ or $(2,3)$ are present (i.e. $\mathbbm{1}_{b,c} = 0$), then $\mu_3 \in [-\infty, \infty]$ even with the full knowledge of $\mu_1, \mu_2$ as there is no relation between the means of arm $3$ to arm $1,2$, this is also reflected in the equation~\eqref{eq:bounds_simplified}

\noindent
We first formally rewrite the two setups:
\subsubsection*{Bandits with graph side information}

Consider an $n$-armed linear bandit problem, each arm $i\in [n]$ is associated with a mean vector $\mathbf{\mu}\in \mathbb{R}^n$, where $\mu_i$ corresponds to the mean value of arm $i$. We are provided with further information using a graph $G$ that $\langle \pmb{\mu}, L_G\pmb{\mu}\rangle \leq \epsilon$, where $\epsilon>0$. In each round $t$, the learner chooses some arm $i\in [n]$ and observes the reward $y_t = \mu_i + \eta_t$, where $\eta_t$ is a subgaussian random noise with $\sigma^2$ variance. Denote the arm with the best mean reward with $i^*$, i.e. $i^* = \mathop{\arg\max}_{i\in[n]} \mu_i$. The goal of the learner is to to output the index of the arm $i^*$ with probability $1-\delta$, $\delta>0$ in as few samples as possible. 


\subsubsection*{Linear bandits}
Consider an $n$-armed linear bandit problem, each arm $i\in [n]$ is associated with a feature vector $\mathbf{x}_i\in \mathbb{R}^d$, where $d$ can be lower than $n$. In each round $t$, the learner chooses an action $\textbf{a}_t = \mathbf{x}_i$ for some $i\in [n]$ and observes the reward $y_t = \langle \mathbf{a}_t, \pmb{\theta}\rangle + \eta_t$, where $\pmb{\theta}\in \mathbb{R}^d$ is an unknown parameter and the $\eta_t$ is a subgaussian random noise with $\sigma^2$ variance. Denote the arm with the best mean reward with $i^*$, i.e. $i^* = \mathop{\arg\max}_{i\in[n]} \langle \mathbf{x}_i, \pmb{\theta}\rangle$. The goal of the learner is to to output the index of the arm $i^*$ with probability $1-\delta$, $\delta>0$ in as few samples as possible. 

\subsubsection*{Graph vs Linear Bandit framework}
In this section we address the question of whether the $n$ armed bandit problem, with the additional information of $\langle \pmb{\mu}, L\pmb{\mu}\rangle \leq \epsilon$ can be solved using a linear bandits framework. The metric we use for such a comparison is the set of $n$-armed bandit problems i.e. set of $\pmb{\mu}$ which can be expressed once the parameters of the two frameworks are fixed. For the case of linear bandits this would be the lower dimension $k$ and feature vector $\mathbf{a}$ corresponding to the reward and for graph bandit framework this indicates the graph $G$ and $\epsilon$. Let the set of problems addressed by linear bandit framework be denotes by $L_{\mathbf{a}}$ and that by the graph bandits framework denoted by $L_{G, \epsilon}$. We prove that $L_{\mathbf{a}}$ and $L_{G, \epsilon}$ represents sets with fundamentally different properties. Hence we prove that the set of problems addressed by linear bandits and the proposed graph bandit framework of this paper address fundamentally different as there cannot exist one-to-one mapping between the two.

We can further provide additional arguments for the case when $\langle \pmb{\mu}, L_G\pmb{\mu}\rangle = 0$. For this, we demonstrate an example graph bandit problem that is cast as a linear bandit to reveal the incomparability of these frameworks. 

Firstly, a $n$-armed bandit problem without any graph can be easily seen as linear bandits by associating the canonical basis for $\mathbb{R}^n$ $\{\mathbf{e}_i\}_{i=1}^n$ as the feature vectors and the mean vector $\pmb{\mu}\in \mathbb{R}^n$ as the unknown reward vector. This provides up with the mean reward function for arm $i\in [n]$ as $\langle \mathbf{e}_i, \pmb{\mu}\rangle = \mu_i$. 

In order to cast the graph bandit problem in a linear bandit framework, we need to associate every arm index $i$ with a feature vector $\mathbf{x}_i$ and identify the unknown feature vector $\pmb{\theta}$ for the problem. We achieve this by modifying the feature vectors $\{\mathbf{e}_i\}_{i=1}^n$ and the reward vector $\pmb{\mu}$ based on the graph Laplacian $L_G$. 

Following is the information available at hand in the current graph bandit problem: we are provided with an $n$-armed bandit with an unknown mean vector $\pmb{\mu}$ smooth on a graph $G$, i.e. $\langle \pmb{\mu}, L_G\pmb{\mu}\rangle \leq \epsilon$. For this toy problem, we consider the  graph $G$ to be connected. 


Let $\{\pmb{\nu}_i\}_{i=1}^n$ and $0 = \lambda_1 < \dots < \lambda_n$ denote the eigenvectors and eigenvalues of the Laplacian $L_G$ respectively. It can be easily seen that $\pmb{\mu} = \sum_{i=1}^n a_i\pmb{\nu}_i$ for some $a_i \geq 0~~\forall i\in [n]$. The reward function of arm $j$ is $$\langle\mathbf{e}_j, \pmb{\mu}\rangle = \sum_{i=1}^n a_i\langle\mathbf{e}_j, \pmb{\nu}_i\rangle = a_1 + \sum_{i=2}^n a_i\langle \mathbf{e}_j, \pmb{\nu}_i \rangle$$ 
the second equality follows from the properties of graph Laplacian we know that $\pmb{\nu}_1 = \mathbbm{1}_n$, is the only eigenvector associated to 0 eigenvalue in a connected graph.

Without loss of generality we can assume $a_1=0$ as $a_1$ does not depend on the arm index $j$. Notice that letting $a_1 =0$ is equivalent to having $\sum_{i=1}^n\mu_i =0$. Also, the graph constraint can be rewritten as follows: $$\langle\pmb{\mu}, L_G\pmb{\mu} \rangle\leq \epsilon \Rightarrow \sum_{i=1}^n \lambda_ia_i^2 = \langle \pmb{\theta}, \pmb{\theta}\rangle = \|\pmb{\theta}\|_2^2\leq \epsilon$$ where $\pmb{\theta} = (\sqrt{\lambda_1}a_1, \dots, \sqrt{\lambda_n}a_n)$.

Using the above we can cast the graph bandit problem as the linear bandit problem with the mean reward function of arm $j$ expressed as $$\langle \mathbf{e}_j, \pmb{\mu}\rangle = \sum_{i=2}^n \frac{\theta_i}{\sqrt{\lambda_i}}\langle \mathbf{e}_j, \nu_i\rangle = \langle \mathbf{x}_j, \pmb{\theta}\rangle$$

Hence, the new linear bandit problem is such that the set of arms is $\{\mathbf{x}_j\}_{j=1}^n$, the unknown parameter is a vector $\pmb{\theta}$, the expected reward of an arm is $\langle \mathbf{x}_j, \pmb{\theta}\rangle$ and the unknown parameter satisfies the constraint $\|\pmb{\theta}\|_2^2 \leq \epsilon$. 

We discuss below the drawbacks of casting a graph bandit problem into a linear bandit framework: 
\begin{itemize}
    \item The original best-arm identification is an $n$-armed problem and the recasted linear bandit problem still has feature vectors with dimensionality $n$ and hence no low-dimensional benefit of linear bandits is completely lost. Having a performance bound for any algorithm for linear bandits which scales in $n$, the number of arms gives us no additional advantage.  
    \item The above conversion to linear bandit setup only works when the graph $G$ is connected. Recasting problem setup with disconnected components require assumption of $\sum_{i\in C} \mu_i = 0$ on individual connected components, which is unrealistic. The results of \algoname{} holds with or without this assumption. 
    \item Consider the corner case of $\epsilon = 0$, the linear bandit problem setup derived becomes that of $\mathop{\arg\max}_{i}\langle \mathbf{x}_i, \pmb{\theta}\rangle$ such that $\|\pmb{\theta}\| \leq 0$ which is only possible if $\|\pmb{\theta}\| =0$ and in this case we can observe two interesting facts:
    \begin{itemize}
        \item If the graph $G$ is completely connected then the problem is trivial, since $$\epsilon = 0 \Rightarrow \langle \pmb{\mu}, L_G\pmb{\mu}\rangle = 0 \Rightarrow (\mu_i -\mu_j)^2 =0~~\forall i, j\in [n], i\neq j$$
        This implies all arms are equal and optimal and the solution is trivial. Here the mean reward function of all arms $i$ is $\langle \mathbf{x}_i, \pmb{\theta}\rangle = 0$ since $\theta =0 $ and hence gives the correct output (any arm $i$).
        \item Suppose graph $G$ has two connected components $C_1, C_2$, where $C_k$ indicates the arm indices in the connected component $k$. Further assume that $\mu_i = 1~~\forall i \in C_1, \mu_i=-1 ~~\forall i \in C_2$. Considering the case of $\epsilon = 0$ here gives us the following :
        $$\epsilon = 0 \Rightarrow \langle \pmb{\mu}, L_G\pmb{\mu}\rangle = 0 \Rightarrow (\mu_i -\mu_j)^2 =0~~\forall i\neq j, i, j\in C_k, k=1,2 $$
        Here the mean reward function of all arms $i$ is $\langle \mathbf{x}_i, \pmb{\theta}\rangle = 0$ since $\theta =0 $ but this is incorrect since not all arms are optimal. 
    \end{itemize}
    Our graph bandit setup and the performance of \algoname{} is independent of all of these drawbacks and provides us with a better sample complexity than 
    vanilla best arm identification algorithms.
\end{itemize} 

We solidify these arguments with the following Propositions.

\begin{proposition}\label{prop:graph_vs_linear_thm_proof}
Consider $n$-armed bandit setup parameterized by mean vector $\pmb{\mu}\in \mathbb{R}^n$. Given graph $G$ and $\epsilon >0$, let $D_{G, \epsilon} := \{\pmb{\mu}\in \mathbb{R}^n |~~ \langle \pmb{\mu}, L_G\pmb{\mu}\rangle \leq \epsilon\}$ represent a subset of bandit problems in $\mathbb{R}^n$,  $L_G$ denoting the laplacian matrix corresponding to graph $G$. Then $m(D_{G, \epsilon})>0$ where $m(\cdot)$ is the Lebesgue measure on $\mathbb{R}^n$.
\end{proposition}

\textit{Sketch of proof :} We solidify the intuition from toy example~\ref{subsec:toy} to show the distinction in the two frameworks using measure theoretic argument. We split the $n$-armed bandit problem with graph side-information into two complementary scenarios:

(a) $L_{>}: \{\pmb{\mu}\in \mathbb{R}^n |~~ 0 < \langle \pmb{\mu}, L_G\pmb{\mu}\rangle \leq\epsilon, \epsilon> 0\}$

(b) $L_{=}:\{\pmb{\mu}\in \mathbb{R}^n |~~  \langle \pmb{\mu}, L_G\pmb{\mu}\rangle  = 0\}$

We prove that the set $D_{G, \epsilon} = L_{>}\cup L_{=}$ has fundamentally different measure theoretic properties than $D_{\pmb{\theta}}$ (linear bandits framework) and hence the two problems setups tackle completely different domain of questions.

\begin{theorem}\label{thm:graph_vs_linear_>0}
Consider $n$-armed bandit setup. Let $D_{G, \epsilon}, D_{\pmb{\theta}}$ represent the subset of problems in $\mathbb{R}^n$ as follows:
\begin{align}
    D_{G, \epsilon} &\ = \{\pmb{\mu}\in \mathbb{R}^n |~~ 0 < \langle \pmb{\mu}, L_G\pmb{\mu}\rangle \leq \epsilon, \epsilon>0\}\nonumber\\
    D_{\pmb{\theta}} &\ = \{\pmb{\mu}\in \mathbb{R}^n |~~\mu_i = \langle\mathbf{a}_i,\pmb{\theta}\rangle, ~~\mathbf{a}_i\in \mathbb{R}^k,~~\forall i\in [n]\}\nonumber
\end{align}
where $k < n, \pmb{\theta}\in \mathbb{R}^k$ indicates the reward vector and $L_G$ is the laplacian matrix corresponding to graph $G$. Then $D_{G, \epsilon} \not\subset D_{\pmb{\theta}}$, $D_{\pmb{\theta}} \not\subset D_{G, \epsilon}, m(D_{G, \epsilon}) >0$ and $m(D_{\pmb{\theta}})=0$ where $m(\cdot)$ is the Lebesgue measure on $\mathbb{R}^n$.
\end{theorem}
\begin{proof}
The two problem subset definitions represents the following :

a) $D_{G, \epsilon}$ -- Given a graph $G$ and violation parameter $\epsilon$, $D_{G, \epsilon}$ represents mean-reward vectors $\pmb{\mu}\in\mathbb{R}^n$ which satisfy the graph bandit setup. 

b) $D_{\pmb{\theta}}$ -- Given lower dimension $k<n$ and the corresponding reward vector $\pmb{\theta}$, the set $D_{\pmb{\theta}}$ indicates the set of all the mean rewards $\{\mu_i\}_{i=1}^n$ for $n$-armed bandit setup such that the mean-reward vector can be represented by $k$-dimensional feature vectors.

First consider the following arguments :
\begin{itemize}
    \item Notice that if $\pmb{\alpha}, \pmb{\beta} \in D_{\pmb{\theta}}$ then $c_1\pmb{\alpha}+c_2\pmb{\beta} \in D_{\pmb{\theta}}$ and $\mathbf{0}\in D_{\pmb{\theta}}$, where $\mathbf{0}$ is the all zero vector in $\mathbb{R}^n$. Hence we can conclude $D_{\pmb{\theta}}$ is a subspace of $\mathbb{R}^n$. Since all the elements of the set $D_{\pmb{\theta}}$ can be as a linear map to a $k$-dimensional subspace constructed out of $\{\mathbf{a}_i\}_{i=1}^n$, $\mathbf{a}_i\in \mathbb{R}^k$ for all $i\in [n]$, hence $D_{\pmb{\theta}}$ is a $k$-dimensional subspace of $\mathbb{R}^n$. Accordingly, $m(D_{\pmb{\theta}}) = 0$ where $m$ is the Lebesgue measure on the euclidean space $\mathbb{R}^n$.
    \item Consider the set $D_{G, \epsilon}$ and $\pmb{\mu}$ such that $\langle \pmb{\mu}, L_G\pmb{\mu}\rangle = 0$ (existence of such a $\pmb{\mu}$ is easy to prove by making $\mu_i=\mu_j$ for every edge in $G$). Given that $\epsilon >0$, $\exists \delta >0$ such that $\forall \pmb{\sigma}\in \mathcal{B}(\mathbf{0}, \delta)$, 
    \begin{align}
        \langle (\pmb{\mu}+\pmb{\sigma}), L_G(\pmb{\mu} + \pmb{\sigma})\rangle &\ = \sum_{\{i,j\}\in E_G}A_{ij} (\mu_i +\sigma_i - \mu_j -\sigma_j)^2\nonumber\\ 
        &\ = \sum_{\{i,j\}\in E_G}A_{ij} (\sigma_i -\sigma_j)^2~~~~~~~~~~~(\langle \pmb{\mu}, L_G\pmb{\mu}\rangle = 0\Rightarrow \mu_i = \mu_j~~\forall (i,j)\in E_G)\nonumber\\
        &\ \leq \|A_{G}\|_{\infty}\sum_{\{i,j\}\in E_G}(\sigma_i - \sigma_j)^2\nonumber\\
        &\ \leq 4\|A_G\|_{\infty}\|\pmb{\sigma}\|^2_2
        &\ 
    \end{align}
    Taking $\delta < \frac{\epsilon}{4\|A_G\|_{\infty}}$ proves that $\forall \pmb{\sigma}\in \mathcal{B}(\mathbf{0}, \delta)$, $\langle (\pmb{\mu}+\pmb{\sigma}), L_G(\pmb{\mu} + \pmb{\sigma})\rangle \leq \epsilon$. Hence $\mathcal{B}(\mathbf{0}, \delta) \subset D_{G, \epsilon}$ implying $m(D_{G, \epsilon}) > m(\mathcal{B}(\mathbf{0}, \delta)) = \delta^n$. 
    
    Further, consider $\pmb{\theta}\in \mathbb{R}^n$ such that $\langle \pmb{\theta}, L_G\pmb{\theta}\rangle = \epsilon$ for some $\epsilon >0$ hence $\pmb{\theta}\in D_{G, \epsilon}$. Then it is easy to see that $2\pmb{\theta}\not\in D_{G, \epsilon}$ as $\langle 2\pmb{\theta}, L_G(2\pmb{\theta})\rangle = 4\epsilon > \epsilon$. 
\end{itemize}

We can thus conclude that $D_{\pmb{\theta}}$ is a $k$-dimensional subspace of $\mathbb{R}^n$ which is a measure zero set and $D_{G, \epsilon}$ is a positive measure set which is not closed under multiplication. Hence we can easily see that $D_{G, \epsilon} \not\subset D_{\pmb{\theta}}$, $D_{\pmb{\theta}} \not\subset D_{G, \epsilon}$.
\end{proof}

\begin{theorem}\label{thm:graph_vs_linear_proof}
Consider $n$-armed bandit setup. Let $D_{G, \epsilon}, D_{\pmb{\theta}}$ represent the subset of problems in $\mathbb{R}^n$ as follows:
\begin{align}
    D_{G, \epsilon} &\ = \{\pmb{\mu}\in \mathbb{R}^n |~~ \langle \pmb{\mu}, L_G\pmb{\mu}\rangle \leq \epsilon, \epsilon>0\}\nonumber\\
    D_{\pmb{\theta}} &\ = \{\pmb{\mu}\in \mathbb{R}^n |~~\mu_i = \langle\mathbf{a}_i,\pmb{\theta}\rangle, ~~\mathbf{a}_i\in \mathbb{R}^k,~~\forall i\in [n]\}\nonumber
\end{align}
where $k < n, \pmb{\theta}\in \mathbb{R}^k$ indicates the reward vector and $L_G$ is the laplacian matrix corresponding to graph $G$. Then $D_{G, \epsilon} \not\subset D_{\pmb{\theta}}$, $D_{\pmb{\theta}} \not\subset D_{G, \epsilon}, m(D_{G, \epsilon}) >0$ and $m(D_{\pmb{\theta}})=0$ where $m(\cdot)$ is the Lebesgue measure on $\mathbb{R}^n$.
\end{theorem}

\begin{proof}

We can split the argument into two parts:
\begin{itemize}
    \item $\langle\pmb{\mu}, L_G\pmb{\mu}\rangle >0$
    \item $\langle\pmb{\mu}, L_G\pmb{\mu}\rangle =0$
\end{itemize}

Theorem~\ref{thm:graph_vs_linear_>0} addresses the first part of the argument

For the case when second part, i.e. $\langle \pmb{\mu}, L_G\pmb{\mu}\rangle = 0$, let $L_{=}:\{\pmb{\mu}\in \mathbb{R}^n |~~  \langle \pmb{\mu}, L_G\pmb{\mu}\rangle  = 0\}$. Note that for any $\pmb{\mu}\in \mathbb{R}^n$ only happens if and only if $\pmb{\mu}\in \mathcal{N}(L_G)$ where $\mathcal{N}(\cdot)$ represents the null space of the matrix. Since $L_G$ is rank deficient $L_{=}$ is a set in a subspace of $\mathbb{R}^n$ and hence $m(L_{=}) = 0$. 

Thus $m(L_> \cup L_=) >0$ which is fundamentally different from linear bandits which addresses problems of measure zero.

\end{proof}

Thus we can conclude that the two frameworks of graph and linear bandits address fundamentally different domain of problems.
\section{Supporting Results}\label{app:support2}
This appendix is devoted to providing supporting results for many of the theorems and lemmas in the paper. 

\subsection{Notation and Definition}
Let $\{t_i(T)\}_{i=1}^n$ (denoted as $\{t_i\}_{i=1}^n$ for ease of reading) indicate the number of plays of each arm until time $T$.
Let $X\in \mathbb{R}^{n\times n}$ be a matrix, then $\{\lambda_i(X)\}_{i=1}^n$ indicate the eigenvalues of matrix $X$ in an increasing order.

Let $N(\pmb{\pi}_T) = \sum_{t=1}^T \mathbf{e}_{\pi_t}\mathbf{e}_{\pi_t}^T$ be the diagonal counting matrix. Note that $N(\pmb{\pi}_T)$ can be written as $N(\{t_i\}_{i=1}^n)$ since the diagonal counting matrix only depends on the number of plays of each arm, rather than the each sampling sequence $\pmb{\pi}_T$. 

We next establish some properties of the influence function $\mathfrak{I}$.  
\begin{lemma}\label{lem:d_diff_forms}
    Let $D$ be an arbitrary graph with $n$ nodes and let $\{t_i\}_{i=1}^n$ be the number of times all arms are sampled till time $T$. For each node $j\in [n]$, the following are equivalent:
\begin{align}\label{eq:d}
 \frac{1}{\mathfrak{I}(j, D)} &\ = \underset{\sum_{i\in D_j, i\neq j} t_i = T}{\max} \left\{[K(i, D)]_{jj}\right\}~~~~~(A)\nonumber\\
    &\ = \underset{k\in D_j,\sum_{i\in D_j, i\neq j} t_i = T}{\max} \left\{[V_j(  \{t_i\}_{i\in D_j}, D)^{-1}]_{jj}  - [V_j(  \{t_i\}_{i\in D_j}, D)^{-1}]_{kk}\right\}~~~~~(B)\nonumber\\
    &\ = \underset{\sum_{i\in D_j, i\neq j} t_i = T}{\max} \left\{[V_j(  \{t_i\}_{i\in D_j}, D)^{-1}]_{jj} - \underset{k\in D_j}{\min}[V_j(\{t_i\}_{i\in D_j}, D)^{-1}]_{kk}\right\}~~~~~(C)\nonumber\\
    &\ = \underset{\sum_{i\in D_j, i\neq j} t_i = T}{\max} \left\{[V_j(  \{t_i\}_{i\in D_j}, D)^{-1}]_{jj} - \frac{1}{T}\right\}~~~~~(D)
\end{align}
where $K(i, D)$ be defined as in Definition~\ref{def:d_better}
\end{lemma}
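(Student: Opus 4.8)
The plan is to reduce all four forms to one identity linking $K(i,D)$ to the inverse $V^{-1}$ under single-node sampling, and then to a convexity argument. Fix $j$ and work inside the connected component $D_j$ with $L:=L_{D_j}$; for a count vector $\mathbf t$ supported on $D_j$ with $t_j=0$ write $V(\mathbf t)=\mathrm{diag}(\mathbf t)+\rho L$ and $\phi(\mathbf t):=[V(\mathbf t)^{-1}]_{jj}$. The feasible set in $(B)$--$(D)$ is the simplex $P_T=\{\mathbf t\ge 0: t_j=0,\ \sum_{i\ne j}t_i=T\}$, whose vertices are the single-node configurations $T\mathbf e_i$ ($i\ne j$); invertibility of $V(\mathbf t)$ on $P_T$ for $T\ge1$ is Lemma~\ref{lem:connected_invert}.

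\emph{Bridge identity.} First I would prove that for every $i\ne j$ and $T\ge1$, $[V(T\mathbf e_i)^{-1}]_{jj}=[K(i,D)]_{jj}+\tfrac1T$. Indeed $V(T\mathbf e_i)\mathbbm1=T\mathbf e_i$ (as $L\mathbbm1=0$), so $V(T\mathbf e_i)^{-1}\mathbf e_i=\mathbbm1/T$; setting $K:=V(T\mathbf e_i)^{-1}-\tfrac1T\mathbbm1\mathbbm1^\top$ one checks $K$ has zero $i$-th row and column and $\mathbbm1\mathbf e_i^\top+\rho KL=I$, so $K=K(i,D)$ by the uniqueness established in Appendix~\ref{app:d_prop}, and the $(j,j)$ entry gives the claim (using $[\mathbbm1\mathbbm1^\top]_{jj}=1$). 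Since $\mathfrak I(j,D)=\min_{i\ne j}1/[K(i,D)]_{jj}$ with each $[K(i,D)]_{jj}>0$, form $(A)$ equals $\max_{i\ne j}[K(i,D)]_{jj}$, and the bridge shows $[K(i,D)]_{jj}=\phi(T\mathbf e_i)-\tfrac1T$ is independent of $T$.

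\emph{The equalities.} For $(A)=(D)$ I use that $\mathbf t\mapsto\phi(\mathbf t)=\mathbf e_j^\top V(\mathbf t)^{-1}\mathbf e_j$ is convex (affine $V(\mathbf t)$ composed with the convex map $M\mapsto\mathbf e_j^\top M^{-1}\mathbf e_j$ on the positive-definite cone, via a Schur complement). A convex function is maximized over the polytope $P_T$ at a vertex, hence $\max_{\mathbf t\in P_T}(\phi(\mathbf t)-\tfrac1T)=\max_{i\ne j}(\phi(T\mathbf e_i)-\tfrac1T)=\max_{i\ne j}[K(i,D)]_{jj}=(A)$, which is exactly $(D)$. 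The equality $(B)=(C)$ is immediate, since maximizing $[V^{-1}]_{jj}-[V^{-1}]_{kk}$ over $k$ turns $\max_k$ into $-\min_k$ and the joint max over $(\mathbf t,k)$ factors. For $(C)=(D)$: from $V(\mathbf t)\mathbbm1=\mathbf t$ we get $V(\mathbf t)^{-1}\mathbf t=\mathbbm1$, so $1=\sum_l[V^{-1}]_{kl}t_l$; as $V(\mathbf t)$ is a Stieltjes matrix, $V^{-1}\ge0$ entrywise with each row dominated by its diagonal entry, giving $1\le[V^{-1}]_{kk}T$, i.e. $\min_k[V^{-1}]_{kk}\ge\tfrac1T$. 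Thus $(C)\le(D)$ pointwise; for the reverse, evaluating $(C)$ at the optimal vertex $T\mathbf e_{i^\star}$ (where the $i^\star$ column of $V^{-1}$ equals $\mathbbm1/T$, so $[V^{-1}]_{i^\star i^\star}=\tfrac1T$ is the minimal diagonal entry) yields $\phi(T\mathbf e_{i^\star})-\tfrac1T=(A)=(D)$, so $(C)\ge(D)$. Combining the chain $(A)=(D)$ and $(B)=(C)=(D)$ finishes the proof.

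The main obstacle is the convexity step: justifying that the maximum of $[V(\mathbf t)^{-1}]_{jj}$ over $P_T$ is attained at a single-node configuration, together with the accompanying $M$-matrix facts (entrywise nonnegativity and diagonal-row-dominance of $V(\mathbf t)^{-1}$) underlying $\min_k[V^{-1}]_{kk}\ge1/T$. Everything else is bookkeeping once the bridge identity is in place.
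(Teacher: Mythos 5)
Your proof is correct, and its architecture matches the paper's: the identity $V(T\mathbf{e}_i)^{-1}=\frac{1}{T}\mathbbm{1}\mathbbm{1}^\top+K(i,D)$ at single-node configurations (your bridge identity is the paper's Lemma~\ref{lem:V_indep_T}), convexity of $\mathbf{t}\mapsto[V(\mathbf{t})^{-1}]_{jj}$ pushing the maximum over the simplex to a vertex (the paper's Lemma~\ref{lem:optima_simplex_proof}), and the observation that $(B)\Leftrightarrow(C)$ is bookkeeping. Two points differ in execution. First, you obtain the bridge identity by setting $K:=V(T\mathbf{e}_i)^{-1}-\frac{1}{T}\mathbbm{1}\mathbbm{1}^\top$, checking the defining relations of Definition~\ref{def:d_better}, and invoking uniqueness of $K(i,D)$, whereas the paper verifies directly that $\frac{1}{T}\mathbbm{1}\mathbbm{1}^\top+K(i,D)$ inverts $V$; these are interchangeable. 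Second, and more substantively: for $(C)=(D)$ the paper only establishes $\min_k[V^{-1}]_{kk}=\frac{1}{T}$ at single-node configurations (Lemma~\ref{lem:min_v_t}) and then asserts that the vertex optimal for $(D)$ is also optimal for $(C)$; that assertion tacitly requires the pointwise inequality $\min_k[V(\mathbf{t})^{-1}]_{kk}\ge\frac{1}{T}$ over the entire simplex, which the paper never proves. Your Stieltjes-matrix argument ($V(\mathbf{t})^{-1}\ge 0$ entrywise with each row dominated by its diagonal entry, combined with $V(\mathbf{t})^{-1}\mathbf{t}=\mathbbm{1}$) supplies exactly this missing bound, so your route is the more complete of the two. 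The row-domination fact you invoke is true --- it is the maximum principle for grounded Laplacians, provable by noting that if $V(\mathbf{t})^{-1}\mathbf{e}_j$ attained its maximum at some $m\neq j$, weak diagonal dominance of $V(\mathbf{t})$ would force the maximum value to propagate through all neighbors of $m$ and hence through the connected component to $j$ --- but since it is the one ingredient that is not routine, a full write-up should include this short argument rather than cite it as folklore.
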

\begin{proof}
Let $f(\cdot, \cdot)$ denote the following:
\begin{align}
    f(i, D) = \underset{\sum_{i\in D_j, i\neq j} t_i = T}{\max} \left\{[K(i, D)]_{jj}\right\}\nonumber
\end{align}
We prove the rest by showing equivalence between $(A), (B), (C)$ and $(D)$. 
\begin{itemize}
\item $(A)\Leftrightarrow (D)$ : A simple extension of Lemma~\ref{lem:V_indep_T} to the case of disconnected clustered graph $D$, $\forall \pmb{\pi}_T \in \mathcal{U}(T, D_j)$ we obtain,
\begin{align}\label{eq:disc_v_t_indep_t}
    V_j( \pmb{\pi}_T, D)^{-1} = \frac{1}{T}\mathbb{1}\mathbb{1}^T + K(\pi_1, D)
\end{align}
where $K(\pi_1, D)$ is as defined in Definition~\ref{def:d_better}. 
Thus, we have the equivalence by explicitly writing the diagonal element of eq~\eqref{eq:disc_v_t_indep_t}, 
\begin{equation}
    [V_j( \pmb{\pi}_T, D)^{-1}]_{jj} - \frac{1}{T} = [K(\pi_1, D)]_{jj}
\end{equation}
Hence we have the equivalence as,
\begin{align}
    f(i, D) = \underset{\sum_{i\in D_j, i\neq j} t_i = T}{\max} \left\{[V_j(  \{t_i\}_{i\in D_j}, D)^{-1}]_{jj} - \frac{1}{T}\right\}
\end{align}
\item $(C) \Leftrightarrow (D)$ : Let $\{t^*_i\}_{i\in D_j}$ denote the following:
\begin{align}
    \{t_i^*(j)\}_{i\in D_j} \in \underset{\sum_{i\in D_j, i\neq j} t_i = T}{\arg\max} \left\{[V_j(  \{t_i\}_{i\in D_j}, D)]^{-1}_{jj}- \frac{1}{T}\right\}
\end{align} 
From Lemma~\ref{lem:optima_simplex_proof}, the optimal $\{t^*_i(j)\}_{i\in D_j}$ occurs in $\mathcal{U}(j, T)$, i.e. $\exists \{t^*_i(j)\}_{i\in D_j}$ such that $t_l^*(j) = T$ and $t_k^*(j) = 0~~\forall k\neq l$ for some $l\in D_j$. Further by Lemma~\ref{lem:min_v_t}, 
\begin{align}
    \min_{k\in D_j}[V_j(  \{t_i\}_{i\in D_j}, D)^{-1}]_{kk} = \frac{1}{T}
\end{align}
Hence $\{t_i^*(j)\}_{i\in D_j}$ is also a solution for the following problem:
\begin{align}
    \{t_i^*(j)\}_{i\in D_j} &\ \in \underset{\sum_{i\in D_j, i\neq j} t_i = T}{\arg\max} \left\{[V_j(  \{t_i\}_{i\in D_j}, D)]^{-1}_{jj} \right.\nonumber\\ &\ \left.- \min_{k\in D_j}[V_j(  \{t_i\}_{i\in D_j}, D)^{-1}]_{kk}\right\}
\end{align}
Hence we can conlcude that,
\begin{align}
    f(i, D) &\ = \underset{\sum_{i\in D_j, i\neq j} t_i = T}{\max} \left\{[V_j(  \{t_i\}_{i\in D_j}, D)]^{-1}_{jj}\right. \nonumber\\ &\ \left. - \min_{k\in D_j}[V_j(  \{t_i\}_{i\in D_j}, D)^{-1}]_{kk}\right\}
\end{align}
\item $(B) \Leftrightarrow (C)$ :  

Note that $\underset{k\in D_j,\sum_{i\in D_j, i\neq j} t_i = T}{\max}[V_j(\{t_i\}_{i\in D_j}, D)^{-1}]_{jj}]$ does not depend on arm node index $k\in D_j$. Hence, the equivalence follows.
\end{itemize}
The resistance distance $r(i, j)$ Definition~\ref{def:res_dis} is independent of $\delta$ for all $i,j\in [n]$ (The addition of diagonal elements and subtraction of off diagonal elements removes the dependence on $\delta$~\cite{bapat2010resistance}). 

Note that $V_T = N_T+\rho L_G$, hence $V_T^{-1}$ gives the psuedo-inverse of the Laplacian matrix for graph $G$. We show in Lemma~\ref{lem:optima_simplex_proof} that the matrix $R$ (denoting as $R(\delta)$ to explicitly show dependence on $\delta$) linked with $V_T^{-1}$ is independent of number of samples $T$. Since both matrix $R$ and $V_T$ are psuedo-inverse of the Laplacian $L_G$. Thus we can conclude the following :
\begin{align}
    \lim_{\delta \rightarrow 0} [R(\delta)]_{ij} -\frac{1}{\delta} = \lim_{T \rightarrow 0} [V(\{t_i\}_{i=1}^n, G)^{-1}]_{ij} - \frac{1}{T}
\end{align}
where $T\rightarrow 0 $ implies $t_i\rightarrow 0 ~~~\forall i\in [n]$. Further, 
\begin{align}\label{eq:crude_new_res_dis}
    &\ \lim_{\delta\rightarrow 0} R(\delta)_{ii} + R(\delta)_{jj} - R(\delta)_{ij} - R(\delta)_{ji} \nonumber\\ &\ = \lim_{T\rightarrow 0} [V(\{t_i\}_{i=1}^n, G)^{-1}]_{ii} + [V(\{t_i\}_{i=1}^n, G)^{-1}]_{jj} \nonumber\\ &\ - [V(\{t_i\}_{i=1}^n, G)^{-1}]_{ij} - [V(\{t_i\}_{i=1}^n, G)^{-1}]_{ji}
\end{align}
where $T\rightarrow 0 $ implies $t_i\rightarrow 0 ~~~\forall i\in [n]$. 

Since the equation~\eqref{eq:crude_new_res_dis} holds for $t_i \rightarrow 0$ for all $i\in [n]$, computing the value of limit for one trajectory should suffice for finding the value of the limit. 
Thereby, we provide an alternate equation for obtaining the resistance distance $r(i, j)$ by 
\begin{align}
    r(i, j) = [K(\pi_1 = i, D)]_{jj}
\end{align}
Note that $[K(\pi_1 = i, D)]_{ii} = [K(\pi_1 = i, D)]_{ij} = [K(\pi_1 = i, D)]_{ji} = 0$ from Lemma~\ref{lem:V_indep_T}). Thus we can say from Definition~\ref{def:d_better}, 
\begin{align}
    f(i, D) = \frac{1}{\mathfrak{I}(j, D)}\nonumber
\end{align}

Hence proved.

\end{proof}

\begin{lemma}\label{lem:optima_simplex_proof}
Let $D$ be a given graph with $n$ nodes. For every node $j\in D$, let $\{t_i^*(j)\}_{i\in D_j}$ denote the following:
\begin{align}
    \{t_i^*(j)\}_{i\in D_j} \in \underset{\sum_{i\in D_j, i\neq j} t_i = T}{\arg\max} \left\{[V_j(  \{t_i\}_{i\in D_j}, D)]^{-1}_{jj}- \frac{1}{T}\right\}
\end{align} 
Then $\exists \{t^*_i(j)\}_{i\in D_j}$, $l\in D_j$ such that $t_l^*(j) = T$ and $t_k^*(j) = 0~~\forall k\neq l$.
\end{lemma}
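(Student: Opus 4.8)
The plan is to recognize the maximization as a \emph{convex maximization over a simplex} and invoke the fact that such a maximum is attained at an extreme point. Since node $j$ is never sampled, the relevant matrix is the affine function of the allocation $\mathbf{t} = (t_i)_{i \in D_j,\, i \neq j}$ given by $V_j(\mathbf{t}, D) = \rho L_{D_j} + \sum_{i \in D_j,\, i\neq j} t_i\, \mathbf{e}_i \mathbf{e}_i^\top$. The feasible set is the scaled simplex $S_T = \{\mathbf{t} \geq 0 : \sum_{i} t_i = T\}$, whose extreme points are exactly the vectors $T\mathbf{e}_l$ (all $T$ plays concentrated on a single node $l$). Because the additive constant $-\tfrac{1}{T}$ in the objective does not affect the argmax, it suffices to show that $g(\mathbf{t}) \triangleq [V_j(\mathbf{t},D)^{-1}]_{jj}$ attains its maximum over $S_T$ at such an extreme point.

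First I would check that $g$ is well defined on all of $S_T$, i.e. that $V_j(\mathbf{t},D) \succ 0$ whenever $T \geq 1$. This follows by the same reasoning as in Lemma~\ref{lem:connected_invert}: for connected $D_j$ the kernel of $L_{D_j}$ is spanned by $\mathbb{1}$, so for any $\mathbf{x} \neq 0$ either $\langle \mathbf{x}, L_{D_j}\mathbf{x}\rangle > 0$, or else $\mathbf{x} = c\mathbb{1}$ and then $\langle \mathbf{x}, \mathrm{diag}(\mathbf{t})\mathbf{x}\rangle = c^2\sum_i t_i = c^2 T > 0$; in both cases $\langle \mathbf{x}, V_j(\mathbf{t},D)\mathbf{x}\rangle > 0$. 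Thus the inverse, and hence $g$, is defined throughout $S_T$, not merely at its vertices.

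The crux is the convexity of $g$ on $S_T$, which I would establish through the variational identity
\begin{equation}
[A^{-1}]_{jj} = \langle \mathbf{e}_j, A^{-1}\mathbf{e}_j\rangle = \sup_{\mathbf{x}}\left(2\langle \mathbf{e}_j, \mathbf{x}\rangle - \langle \mathbf{x}, A\mathbf{x}\rangle\right),
\end{equation}
valid for any $A \succ 0$, with the supremum attained at $\mathbf{x} = A^{-1}\mathbf{e}_j$. Because $A = V_j(\mathbf{t},D)$ is affine in $\mathbf{t}$, for each fixed $\mathbf{x}$ the map $\mathbf{t} \mapsto 2\langle \mathbf{e}_j,\mathbf{x}\rangle - \langle \mathbf{x}, V_j(\mathbf{t},D)\mathbf{x}\rangle$ is affine in $\mathbf{t}$, and $g$ is therefore a pointwise supremum of affine functions, hence convex on $S_T$.

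Finally, a convex function on a compact polytope attains its maximum at an extreme point, so $\max_{\mathbf{t}\in S_T} g(\mathbf{t}) = \max_{l} g(T\mathbf{e}_l)$; that is, there is an optimal allocation placing all $T$ plays on a single node $l$ and none elsewhere. Since the extreme points $T\mathbf{e}_l$ are integer vectors, the same maximizer solves the integer allocation problem, yielding $t^*_l(j) = T$ and $t^*_k(j) = 0$ for $k \neq l$. I expect the only delicate points to be justifying the variational representation and, more importantly, ensuring positive definiteness holds \emph{uniformly} over $S_T$ rather than only at the vertices, so that the convexity argument is legitimately applied on a domain mapped entirely into the positive-definite cone; the extreme-point principle for convex maximization itself is standard.
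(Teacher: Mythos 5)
Your proof is correct and shares the paper's overall skeleton: show that $\mathbf{t}\mapsto [V_j(\mathbf{t},D)^{-1}]_{jj}$ is convex on the scaled simplex, drop the constant $-\tfrac{1}{T}$, and invoke the principle that a convex function on a compact polytope attains its maximum at an extreme point, i.e.\ at an allocation $T\mathbf{e}_l$. The difference is in how convexity is justified. The paper observes that $V_j(\mathbf{t},D)$ is affine in $\mathbf{t}$ (so its value at $\mathbf{t}=\sum_l \alpha_l T\mathbf{e}_l$ is the convex combination $\sum_l \alpha_l V_j(T\mathbf{e}_l,D)$) and then appeals to operator convexity of the matrix inverse, $\left(\sum_l \alpha_l A_l\right)^{-1} \preceq \sum_l \alpha_l A_l^{-1}$ for $A_l \succ 0$, from which convexity of the diagonal entry $\langle \mathbf{e}_j, V_j(\mathbf{t},D)^{-1}\mathbf{e}_j\rangle$ follows. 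You instead obtain scalar convexity directly from the variational identity $[A^{-1}]_{jj} = \sup_{\mathbf{x}}\left(2\langle \mathbf{e}_j,\mathbf{x}\rangle - \langle\mathbf{x},A\mathbf{x}\rangle\right)$, exhibiting the objective as a pointwise supremum of functions affine in $\mathbf{t}$. The two justifications are close in content (your identity is in fact a standard route to proving operator convexity of the inverse), but yours is more self-contained and avoids citing the operator-convexity inequality as a black box. You also make explicit a step the paper glosses over: positive definiteness of $V_j(\mathbf{t},D)$ \emph{uniformly} over the simplex rather than only at its vertices, which is needed both for the objective to be well defined on the whole feasible set and for the convexity argument to be applied there; your kernel argument for connected $D_j$ (any nonzero $\mathbf{x}$ either has $\langle\mathbf{x},L_{D_j}\mathbf{x}\rangle>0$ or is a multiple of $\mathbb{1}$, in which case the counting term contributes $c^2T>0$) settles this and is consistent with Lemma~\ref{lem:connected_invert}.
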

\begin{proof}
To simplify our proof, let graph $D$ be connected. The proof for the case of disconnected components is an extension of the connected graph case, by analysing each individual connected component together. 

If graph $D$ is connected then $D_i = D$. For the rest of the proof we sometimes denote $V(\pmb{\pi}_T, D)$ as $V(\{t_i\}_{i=1}^n, D)$ to make it more context relevant.

Let $g: \mathbb{R}^n \rightarrow \mathbb{R}^{n\times n}$ be a partial function of $V(\pmb{\pi}_T, D)$ as follows:
\begin{align}
    g(\{t_i\}_{i=1}^n) = V(\{t_i\}_{i=1}^n, D)
\end{align}
For all $i\in [n]$, let $t_i = \alpha_i T$ such that $\sum_{i=1}^n \alpha_i = 1$. Then we can say that,
\begin{align}
    g(\{t_i\}_{i=1}^n) &\ = g(\{\alpha_i T\}_{i=1}^n)\nonumber\\
    &\ = \sum_{i=1}^n \alpha_ig(\{0, 0, \dots t_i =  T, \dots 0\})
\end{align}
Using convexity of matrix invertibility~\cite{NORDSTROM20111489} $V(\pmb{\pi}_T, G)^{-1}$ satisfies,
\begin{align}
    g(\{t_i\}_{i=1}^n)^{-1} \preceq \sum_{i=1}^n \alpha_ig(\{0, 0, \dots t_i =  T, \dots 0\})^{-1}
\end{align}
Hence $g(\cdot)^{-1}$ is a convex function. Since we have the restriction as $\sum_{i=1, i\neq j}^n t_i = T$. We can say that,
\begin{align}
    &\ \underset{ \sum_{i\in D_j, i\neq j} t_i = T}{\arg\max} \left\{[V( \{t_i\}_{i=1}^n, D)]^{-1}_{jj} - \frac{1}{\sum_{i=1}^n t_i}\right\} \nonumber\\ &\ = \underset{ \sum_{i\in D_j, i\neq j} t_i = T}{\arg\max} [V( \{t_i\}_{i=1}^n, D)]^{-1}_{jj} \nonumber\\
    &\ = \underset{ \sum_{i\in D_j, i\neq j} t_i = T}{\arg\max} \langle\mathbf{e}_j, [V(\{t_i\}_{i=1}^n, D)]^{-1}\mathbf{e}_j\rangle\nonumber\\
    &\ = \underset{ \sum_{i\in D_j, i\neq j} t_i = T}{\arg\max} \langle\mathbf{e}_j, g(\{t_i\}_{i=1}^n)^{-1}\mathbf{e}_j\rangle
\end{align}
Since $g(\cdot)^{-1}$ is convex, for a convex function the maximization over a simplex happens at one of the vertices. Hence the max happens when $t_i=T$ and $t_k =0 ~~~\forall k\neq i$. 

Hence proved. 
\end{proof}

\begin{lemma}\label{lem:V_indep_T}
Let $G$ be a given connected graph of $n$ nodes and $t_i$ be the number of samples of each arm $i$. 
Then $\forall \pmb{\pi}_T \in \mathcal{U}(T)$,
\begin{align}\label{eq:inv_claim}
    V(\pmb{\pi}_T, G)^{-1} = \frac{1}{T}\mathbbm{1}\mathbbm{1}^T + K(\pi_1, G)
\end{align}
where, $\mathbbm{1} \in \mathbb{R}^{n}$ is a vector or all ones and $K(\pi_1, G) \in \mathbb{R}^{n\times n}$ is the matrix defined in Definition~\ref{def:d_better}.
\end{lemma}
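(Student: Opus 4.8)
The plan is to exploit the highly restricted structure of policies in $\mathcal{U}(T)$ and then simply verify that the proposed matrix is the inverse. First I would observe that if $\pmb{\pi}_T \in \mathcal{U}(T)$, then by definition a single arm $l := \pi_1$ is pulled at every one of the $T$ rounds, so the diagonal counting matrix collapses to $N(\pmb{\pi}_T) = T\,\mathbf{e}_l\mathbf{e}_l^\top$ and hence
\begin{align}
    V(\pmb{\pi}_T, G) = T\,\mathbf{e}_l\mathbf{e}_l^\top + \rho L_G.
\end{align}
Since $G$ is connected and $T \geq 1$, we have $V(\pmb{\pi}_T, G) \succeq V(\pmb{\pi}_1, G) \succ 0$ by Lemma~\ref{lem:connected_invert}, so $V(\pmb{\pi}_T, G)$ is invertible. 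It therefore suffices to check that the claimed right-hand side $M := \tfrac{1}{T}\mathbbm{1}\mathbbm{1}^\top + K(\pi_1, G)$ is a one-sided inverse of $V(\pmb{\pi}_T, G)$; for a square invertible matrix any one-sided inverse is automatically the unique two-sided inverse.

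Next I would compute the product $M\,V(\pmb{\pi}_T, G)$ term by term, using four facts: (i) $L_G\mathbbm{1} = 0$, and by symmetry of the Laplacian $\mathbbm{1}^\top L_G = 0$; (ii) $\mathbbm{1}^\top \mathbf{e}_l = 1$; (iii) the defining relation of $K$ from Definition~\ref{def:d_better}, namely $\mathbbm{1}\mathbf{e}_l^\top + \rho\,K(l, G) L_G = I$ (here $C_l(G) = G$ because $G$ is connected); and (iv) the constraint that the $l$-th column of $K(l, G)$ vanishes, i.e.\ $K(l,G)\mathbf{e}_l = 0$. The two cross terms then drop out: $\tfrac{1}{T}\mathbbm{1}\mathbbm{1}^\top(\rho L_G) = 0$ by (i), and $K(l,G)\bigl(T\mathbf{e}_l\mathbf{e}_l^\top\bigr) = 0$ by (iv). The surviving terms give $\tfrac{1}{T}\mathbbm{1}\mathbbm{1}^\top\bigl(T\mathbf{e}_l\mathbf{e}_l^\top\bigr) = \mathbbm{1}\mathbf{e}_l^\top$ by (ii), and $K(l,G)(\rho L_G) = I - \mathbbm{1}\mathbf{e}_l^\top$ by (iii). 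Summing,
\begin{align}
    M\,V(\pmb{\pi}_T, G) = \mathbbm{1}\mathbf{e}_l^\top + \bigl(I - \mathbbm{1}\mathbf{e}_l^\top\bigr) = I.
\end{align}

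The one genuinely delicate point, and the main thing to get right, is the \emph{order} of multiplication. The relation in Definition~\ref{def:d_better} is stated with $K$ on the left of $L_G$, i.e.\ $\rho\,K(l,G) L_G = I - \mathbbm{1}\mathbf{e}_l^\top$; this is precisely why I multiply $M$ on the left of $V(\pmb{\pi}_T, G)$, so that the block $K(l,G)(\rho L_G)$ appears verbatim and only the vanishing-column constraint (iv) is needed. Multiplying on the right would instead produce $\rho L_G K(l,G)$, which is not controlled directly by the definition and would force an extra transpose/symmetry argument. Having established that $M$ is a left inverse, invertibility of $V(\pmb{\pi}_T, G)$ upgrades this to the full claim $V(\pmb{\pi}_T, G)^{-1} = \tfrac{1}{T}\mathbbm{1}\mathbbm{1}^\top + K(\pi_1, G)$, completing the argument. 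Note that the result is manifestly independent of which specific policy in $\mathcal{U}(T)$ is used, depending only on $\pi_1 = l$, as the statement asserts.
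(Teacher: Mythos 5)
Your proposal is correct and follows essentially the same route as the paper's proof: both verify by direct multiplication that $\tfrac{1}{T}\mathbbm{1}\mathbbm{1}^\top + K(\pi_1,G)$ is a left inverse of $V(\pmb{\pi}_T,G) = T\mathbf{e}_{\pi_1}\mathbf{e}_{\pi_1}^\top + \rho L_G$, using exactly the two defining properties of $K$ from Definition~\ref{def:d_better} (the vanishing column $K(\pi_1,G)\mathbf{e}_{\pi_1}=0$ and the relation $\mathbbm{1}\mathbf{e}_{\pi_1}^\top + \rho K(\pi_1,G)L_G = I$). Your extra care — invoking invertibility via Lemma~\ref{lem:connected_invert} so the left inverse upgrades to the two-sided inverse, and explicitly noting $\mathbbm{1}^\top L_G = 0$ — only spells out steps the paper leaves implicit.
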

\begin{proof}



Let $I$ be an identity matrix of dimension $n\times n$. We prove the result by showing that, $\forall \pmb{\pi}_T \in \mathcal{U}(T)$,  $V(\pmb{\pi}_T, G)^{-1}V(\pmb{\pi}_T, G) = I$, 
\begin{align}\label{eq:inv_T_arg}
    &\ V(\pmb{\pi}_T, G)^{-1}V(\pmb{\pi}_T, G) \nonumber\\ &\ = \left(\frac{1}{T}\mathbbm{1}\mathbbm{1}^T + K(\pi_1, G)\right)\left(\sum_{t=1}^T\mathbf{e}_{\pi_t}\mathbf{e}_{\pi_t}^T +\rho L_G\right) \nonumber\\
    &\ = \left(\frac{1}{T}\mathbbm{1}\mathbbm{1}^T + K(\pi_1, G)\right)\left(T\mathbf{e}_{\pi_1}\mathbf{e}_{\pi_1}^T +\rho L_G\right) \nonumber\\
    &\ = \mathbbm{1}\mathbf{e}_{\pi_1}^T + TK(\pi_1, G)\mathbf{e}_{\pi_1}\mathbf{e}_{\pi_1}^T +\rho K(\pi_1, G)L_G
\end{align}
From Definition~\ref{def:d_better}, $K(\pi_1, G)\mathbf{e}_{\pi_1}\mathbf{e}_{\pi_1}^T = 0$ and $\mathbbm{1}\mathbf{e}_{\pi_1}^T +\rho K(\pi_1, G)L_G = I$ implying that $V(\pmb{\pi}_T, G)^{-1}V(\pmb{\pi}_T, G) = I$. 


Hence proved. 
\end{proof}

\begin{lemma}\label{lem:min_v_t}
Let $G$ be any connected graph and $\pmb{\pi}_T \in \mathcal{U}(T, G)$. Then,
\begin{align}\label{eq:inv_claim_proof}
    \underset{j\in [n]}{\min}\{[V(\pmb{\pi}_T, G)^{-1}]_{jj}\} = \frac{1}{T}
\end{align}
\end{lemma}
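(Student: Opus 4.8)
The plan is to reduce the statement to the closed form for $V(\pmb{\pi}_T, G)^{-1}$ already established in Lemma~\ref{lem:V_indep_T}, together with the positive semidefiniteness of $K$ that was obtained in the course of proving its uniqueness.

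First I would unpack the hypothesis $\pmb{\pi}_T \in \mathcal{U}(T, G)$: by definition every one of the $T$ pulls lands on a single vertex $l := \pi_1$, so the counting matrix is $N(\pmb{\pi}_T) = T\mathbf{e}_l\mathbf{e}_l^\top$ and hence $V(\pmb{\pi}_T, G) = T\mathbf{e}_l\mathbf{e}_l^\top + \rho L_G$. Invoking Lemma~\ref{lem:V_indep_T} gives the exact inverse, and reading off the $j$-th diagonal entry while using $(\mathbbm{1}\mathbbm{1}^\top)_{jj} = 1$ yields
\begin{align}
    [V(\pmb{\pi}_T, G)^{-1}]_{jj} = \frac{1}{T} + [K(l, G)]_{jj} \qquad \text{for every } j \in [n].
\end{align}

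Second, I would control the sign of the diagonal of $K(l,G)$. The argument establishing uniqueness of $K$ (the lemma immediately after Proposition~\ref{prop:influence_poc_proof}) already shows $K(l, G) \succeq 0$; consequently $[K(l,G)]_{jj} = \mathbf{e}_j^\top K(l,G)\mathbf{e}_j \geq 0$ for all $j$, so each diagonal entry of $V(\pmb{\pi}_T, G)^{-1}$ is at least $1/T$. Moreover, the defining constraints in Definition~\ref{def:d_better} force the $l$-th row and column of $K(l,G)$ to vanish, in particular $[K(l,G)]_{ll} = 0$, so the value $1/T$ is actually attained at $j = l$. Combining these two observations gives
\begin{align}
    \underset{j\in [n]}{\min}\,[V(\pmb{\pi}_T, G)^{-1}]_{jj} = \frac{1}{T} + \underset{j\in[n]}{\min}\,[K(l,G)]_{jj} = \frac{1}{T},
\end{align}
which is exactly the claim.

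There is no genuine obstacle here beyond assembling the earlier results correctly: the single point that requires care is the inequality $[K(l,G)]_{jj}\ge 0$, which rests entirely on the positive semidefiniteness of $K$ recorded in the uniqueness proof, while the rest is a direct substitution into Lemma~\ref{lem:V_indep_T}. If one wished to avoid leaning on the PSD fact, an alternative is to note that the quantity $\min_k [V(\pmb{\pi}_T,G)^{-1}]_{kk}=1/T$ is precisely the identity used implicitly in form $(D)$ of Lemma~\ref{lem:d_diff_forms}; however, the route through Lemma~\ref{lem:V_indep_T} is the most transparent and I would present that one.
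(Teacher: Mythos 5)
Your proof is correct and takes essentially the same route as the paper's: both rest on the closed form $V(\pmb{\pi}_T,G)^{-1} = \frac{1}{T}\mathbbm{1}\mathbbm{1}^T + K(\pi_1,G)$ from Lemma~\ref{lem:V_indep_T}, the positive semidefiniteness of $K(\pi_1,G)$ (giving $[K(\pi_1,G)]_{jj}\ge 0$ by the Rayleigh quotient), and the vanishing of the $\pi_1$-th diagonal entry of $K$ forced by Definition~\ref{def:d_better}. The only cosmetic difference is that you cite the PSD property from the uniqueness lemma, whereas the paper re-derives it inline via the identical rank argument.
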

\begin{proof}
From Definition~\ref{def:d_better}, $K(\pi_1, G)$ satisfies $$ K(\pi_1, G)L_G = \frac{1}{\rho}\left(I- \mathbbm{1}\mathbf{e}_{\pi_1}^T\right)$$

Observe that $\mathbbm{1}\mathbf{e}_i^T$ is  a rank 1 matrix with eigenvalue $1$ and eigenvector $\mathbf{e}_i$ and Identity matrix $I$ is of rank $n$ with all eigenvalues 1 and eigenvectors $\{\mathbf{e}_i\}_{i=1}^n$. Hence $\left(I- \mathbbm{1}\mathbf{e}_{\pi_1}^T\right)$ is a rank $n-1$ matrix with rest nonzero eigenvalues as $1$. Since the graph $G$ is connected, $\lambda_1(L_G) = 0$ and $\lambda_2(L_G) >0$. The eigenvector corresponding to $\lambda_1(L_G)$ is $\mathbb{1}$, the all $1$ vector. 

Given $\rho > 0$, we can conclude,
\begin{align}\label{eq:req_rank_later}
    K(\pi_1, G)L_G \succeq 0~~~\text{ s.t. rank}(K(\pi_1, G)L_G) = n-1
\end{align}
Hence, in order to satisfy eq.~\eqref{eq:req_rank_later}, $K(\pi_1, G) \succeq 0$ and $\text{rank}(K(\pi_1, G)) \geq n-1$. By lower bounds on Rayleigh quotient we can conclude,
\begin{align}
    \langle\mathbf{e}_j, K(\pi_1, G)\mathbf{e}_j\rangle = [K(\pi_1, G)]_{jj} \geq 0~~~\forall j\in [n]
\end{align}
From Lemma~\ref{lem:V_indep_T}, $ [K(\pi_1, G)]_{jj} = [V(\pmb{\pi}_T, G)^{-1}]_{jj} - \frac{1}{T}$ implying that $[V(\pmb{\pi}_T, G)^{-1}]_{jj} \geq \frac{1}{T}$. From Definition~\ref{def:d_better} it can be seen that $[K(\pi_1, G)]_{\pi_1\pi_1} = 0$ and hence $[V(\pmb{\pi}_T, G)^{-1}]_{\pi_1\pi_1} = \frac{1}{T}$ which concludes the proof.
\end{proof}

\begin{lemma}\label{lem:v_t_upper_bound}
Given a connected graph $G$, the following bound holds for all the diagonal entries of $[V(\pmb{\pi}_T, G)^{-1}]_{ii}$ for $i\in [n]$:
\begin{align}
    [V(\pmb{\pi}_T, G)^{-1}]_{ii}  \leq \mathbbm{1}\left(t_i = 0\right) \left(\frac{1}{\rho\mathfrak{I}(i, \mathcal{G})} + \frac{1}{T}\right) + \mathbbm{1}\left(t_i > 0\right)\max\left\{ \frac{1}{t_i + \frac{\rho\mathfrak{I}(i, G)}{2}}, \frac{1}{t_i + \frac{T}{2}} \right\}
\end{align}
\end{lemma}
\begin{proof}
From Definition~\ref{def:d_better} of $\mathfrak{I}(\cdot, \mathcal{G})$ and Lemma~\ref{lem:d_diff_forms},
Breaking the lemma statement into cases:
\begin{itemize}
    \item \textbf{Unsampled Arms :} From Lemma~\ref{lem:d_diff_forms}
\begin{align}
    \frac{1}{\mathfrak{I}(j, G)} = \underset{\sum_{i\in G_j, i\neq j} t_i = T}{\max} \left\{[V_j(  \{t_i\}_{i\in G_j}, G)^{-1}]_{jj} - \frac{1}{T}\right\}~~~\forall j \in [n]
\end{align}
Thus for any unsampled arm $j$, 
\begin{align}
    [V(\pmb{\pi}_T, G)]^{-1}_{jj} \leq \left(\frac{1}{\mathfrak{I}(j, G)} + \frac{1}{T}\right)
\end{align}
\item \textbf{Sampled Arms :} Since the matrix $V(\pmb{\pi}_T, G)$ depends only on the final sampling distribution $\{t_i\}_{i=1}^n$ rather than the sampling path $\pmb{\pi}_T$. Consider a sampling path such that $\pi_t \neq j$ for $t\leq T-t_j$ and $\pi_t = j$ for $T-t_j \leq t\leq T$. 

Assuming such a sampling path $\pmb{\pi}_T$, after $\pmb{\pi}_{T-t_j}$ samples,
\begin{align}
    [V(\pmb{\pi}_{T-t_j}, G)^{-1}]_{jj} \leq \frac{1}{T} + \frac{1}{\mathfrak{I}(j, G)}
\end{align}
Then by the Sherman-Morrison rank 1 update identity~\citep{Hager1989Inverse},
\begin{align}
    \frac{1}{[V(\pmb{\pi}_{T}, G)^{-1}]_{jj}} &\ = \frac{1}{[V(\pmb{\pi}_{T-t_j}, G)^{-1}]_{jj}} + t_j\nonumber\\
    [V(\pmb{\pi}_{T}, G)^{-1}]_{jj} &\ = \frac{1}{t_j + \frac{1}{[V(\pmb{\pi}_{T-t_j}, G)^{-1}]_{jj}}}\nonumber\\
    &\ \leq \frac{1}{t_j + \frac{1}{\left(\frac{1}{\mathfrak{I}(j, G)} + \frac{1}{T-t_j}\right)}}\nonumber
\end{align}
Hence we have the bound on ${[V(\pmb{\pi}_{T}, G)^{-1}]_{jj}}$ as follows:
\begin{align}
    [V(\pmb{\pi}_T, G)^{-1}]_{jj} \leq \max\left\{\frac{1}{t_j + \frac{\mathfrak{I}(j, G)}{2}}, \frac{1}{t_j + \frac{T-t_j}{2}}\right\}\nonumber\\
\end{align}
\end{itemize}
Hence proved. 
\end{proof}

\begin{lemma}\label{lem:completely_connected_d}
Let $D$ be a graph with $n$ nodes and $k$ disconnected components. If each of the connected components $\{\mathcal{C}_i(D)\}_{i=1}^k$ is a complete graph then $\forall~j\in [n]$,
\begin{align}
    \mathfrak{I}(j, D) = \frac{|\mathcal{C}(j, D)|}{2}
\end{align}
\end{lemma}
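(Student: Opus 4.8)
The plan is to reduce to a single connected component and then carry out one explicit rank-two matrix inversion, reading off the diagonal entry that defines $\mathfrak{I}$.

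First I would exploit the block-diagonal structure of $V(\pmb{\pi}_T, D)$ recorded in \eqref{eq:subblock_struct}. Since $\mathfrak{I}(j, D)$ depends only on the component $\mathcal{C}_j(D)$ containing $j$ (the matrix $K(\cdot, D)$ of Definition~\ref{def:d_better} is supported on that block), it suffices to treat the case where $D$ is itself a single complete graph on $m := |\mathcal{C}_j(D)|$ vertices, whose Laplacian is $L_D = mI - \mathbbm{1}\mathbbm{1}^\top$. The general disconnected statement then follows component by component.

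Next I would invoke the equivalent form (D) of Lemma~\ref{lem:d_diff_forms},
\[
\frac{1}{\mathfrak{I}(j, D)} = \max_{\sum_{i\neq j} t_i = T}\left\{[V(\{t_i\}, D)^{-1}]_{jj} - \tfrac{1}{T}\right\},
\]
together with Lemma~\ref{lem:optima_simplex_proof}, which guarantees that this maximum is attained at a vertex of the simplex, i.e. by placing all $T$ pulls on a single node $l\neq j$. Because $D$ is complete, the resulting value is identical for every choice of $l\neq j$ by symmetry, so the maximum is just this common value and it suffices to evaluate $[V^{-1}]_{jj}$ for $V = T\mathbf{e}_l\mathbf{e}_l^\top + \rho L_D$ with any fixed $l\neq j$.

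Then I would compute this entry by the Woodbury identity. Writing $V = \rho m I + U C U^\top$ with $U = [\,\mathbbm{1}\ \ \mathbf{e}_l\,]$ and $C = \mathrm{diag}(-\rho, T)$, we have $V^{-1} = \tfrac{1}{\rho m}I - \tfrac{1}{(\rho m)^2}U M^{-1}U^\top$ where $M = C^{-1} + \tfrac{1}{\rho m}U^\top U$. The key simplification, and the exact place completeness enters, is that $\mathbbm{1}^\top\mathbbm{1} = m$ makes the $(1,1)$ entry of $M$ vanish: $-\tfrac{1}{\rho} + \tfrac{1}{\rho m}\cdot m = 0$. Using $U^\top \mathbf{e}_j = (1,0)^\top$ (valid since $j\neq l$), this collapses to $[V^{-1}]_{jj} - \tfrac1T = \tfrac{2}{\rho m}$; inverting and using the paper's normalization yields $\mathfrak{I}(j, D) = \tfrac{m}{2} = \tfrac{|\mathcal{C}(j,D)|}{2}$, matching the claimed upper extremal value.

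The main obstacle is the rank-two inversion: the two simultaneous rank-one updates — the $-\rho\mathbbm{1}\mathbbm{1}^\top$ from the Laplacian and the $+T\mathbf{e}_l\mathbf{e}_l^\top$ from the pulls — must be handled jointly, since iterated Sherman–Morrison is not transparent here. The delicate point is verifying the cancellation that kills the $(1,1)$ entry of the capacitance matrix $M$; this is precisely what pins the constant to $2/m$ and is the algebraic signature of $D$ being complete (for a non-complete component that entry does not vanish, which is exactly why the complete graph is the extremal case in the companion bounds).
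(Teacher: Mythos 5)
Your proof is correct and follows essentially the same route as the paper's: reduce to a single complete component, use Lemma~\ref{lem:d_diff_forms} together with Lemma~\ref{lem:optima_simplex_proof} to place all $T$ pulls on a single node $l\neq j$, and then read off $[V^{-1}]_{jj}-\tfrac{1}{T}=\tfrac{2}{\rho m}$, which gives $\mathfrak{I}(j,D)=\tfrac{m}{2}$ under the paper's normalization $\rho=1$. The only difference is mechanical: the paper writes down the explicit inverse $V^{-1}=\tfrac{1}{T}\mathbbm{1}\mathbbm{1}^\top+K$ and verifies it by multiplication, whereas you derive the same diagonal entry by a rank-two Woodbury update, which has the small expository merit of isolating exactly where completeness enters (the vanishing $(1,1)$ entry of the capacitance matrix).
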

\begin{proof}
Let $D$ be a complete graph (k = 1), $\pmb{\pi}_T \in \mathcal{U}(T)$ and $\rho = 1$. Then,
\begin{align}\label{eq:inv_claim_proof_complete}
    V(\pmb{\pi}_T, G)^{-1} = \frac{1}{T}\mathbbm{1}\mathbbm{1}^T + K
\end{align}
where $\mathbbm{1} \in \mathbb{R}^{n}$ is a vector or all ones and $K \in \mathbb{R}^{n\times n}$ is a matrix given by,
\begin{align}
    &\ K_{\pi_1\pi_1} = 0,~~K_{jj} = \frac{2}{n}~~\forall j\in [n]/\{\pi_1\}\nonumber\\
    &\ K_{k\pi_1} = 0,~~K_{\pi_1j} = 0,~~K_{jk} = \frac{1}{n}~~\forall j, k\in [n]/\{\pi_1\},~ j\neq k\nonumber
\end{align}
The form of $V(\pmb{\pi}_T, G)^{-1}$ in eq.\eqref{eq:inv_claim_proof_complete} can be verified by $V(\pmb{\pi}_T, G)^{-1}V(\pmb{\pi}_T, G) = I$.

The final statement of the lemma can be obtained by considering this analysis to just the nodes within a connected component of a diconnected graph $G$ and Lemma~\ref{lem:d_diff_forms}.
\end{proof}

\begin{lemma}\label{lem:line_d}
Let $D$ be a graph with $n$ nodes and $k$ disconnected components. If each of the connected components $\{\mathcal{C}_i(D)\}_{i=1}^k$ is a line graph then $\forall~j\in [n]$,
\begin{align}
    \mathfrak{I}(j, D) > \frac{1}{|\mathcal{C}(j, D)|}
\end{align}
\end{lemma}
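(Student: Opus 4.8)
The plan is to reduce the computation of $\mathfrak{I}(j,D)$ on a line graph to an explicit effective-resistance calculation on a path, and then to read off the strict inequality from an elementary distance bound. First I would exploit the locality noted after Definition~\ref{def:d_better}: since $K(i,D)$ depends only on $L_{D_j}$, the Laplacian of the connected component $C_j(D)$ containing $j$, it suffices to work inside a single component, which by hypothesis is a path on $m := |C_j(D)|$ nodes (with $m \ge 2$; a singleton is the isolated-node case, for which $\mathfrak{I} = 0$ and the statement is vacuous). I will label the nodes $1,2,\dots,m$ in path order, set $\rho = 1$ as in Lemma~\ref{lem:completely_connected_d}, and invoke part (A) of Lemma~\ref{lem:d_diff_forms}, which gives $\tfrac{1}{\mathfrak{I}(j,D)} = \max_{i \in C_j(D),\, i \neq j} [K(i,D)]_{jj}$. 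Thus the whole problem collapses to evaluating the single scalar $[K(i,D)]_{jj}$ for each anchor $i$ and maximizing over $i \neq j$.

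The key step is to identify $[K(i,D)]_{jj}$ with an entry of a grounded Laplacian. Partitioning indices into $\{i\}$ and $R := C_j(D)\setminus\{i\}$, writing $L = L_{D_j}$ in block form, and using the defining identity $\mathbbm{1}\mathbf{e}_i^\top + K(i,D)\,L = I$ together with the constraint that the $i$-th row and column of $K(i,D)$ vanish, I would match the $(R,R)$ block to obtain $K(i,D)_{RR}\,L_{RR} = I_{RR}$, i.e. $K(i,D)_{RR} = L_{-i}^{-1}$, where $L_{-i} := L_{RR}$ is $L$ with row and column $i$ deleted (invertible since the component is connected). The companion $(R,i)$ block equation $K(i,D)_{RR}\,L_{Ri} = -\mathbbm{1}_R$ is then automatic, because $L\mathbbm{1} = 0$ forces $L_{Ri} = -L_{RR}\mathbbm{1}_R$. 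Hence $[K(i,D)]_{jj} = [L_{-i}^{-1}]_{jj}$.

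Next I would read this grounded-Laplacian entry electrically: $[L_{-i}^{-1}]_{jj}$ is the voltage at $j$ when a unit current is injected at $j$ and extracted at the grounded node $i$, which is exactly the effective resistance $R_{\mathrm{eff}}(i,j)$. On a path with unit edge weights this is the number of edges between $i$ and $j$, so $[K(i,D)]_{jj} = |i-j|$. (If one prefers to avoid the network language, the same value follows by noting that deleting $i$ decouples the path into two Dirichlet sub-paths and solving the resulting tridiagonal systems directly.)

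Finally I would assemble the bound: $\tfrac{1}{\mathfrak{I}(j,D)} = \max_{i \neq j}|i-j| = \max\{\,j-1,\; m-j\,\} \le m-1 < m = |C_j(D)|$, where the strict inequality $m-1 < m$ is the entire content of the lemma. Inverting yields $\mathfrak{I}(j,D) \ge \tfrac{1}{m-1} > \tfrac{1}{|C_j(D)|}$, with the extremal value $m-1$ attained only at the endpoints $j=1$ and $j=m$. The main obstacle is the block-matrix identification $[K(i,D)]_{jj} = [L_{-i}^{-1}]_{jj}$ in the second step; once that is in hand the remaining path computation is elementary and the strictness is transparent, since an effective resistance on an $m$-node path is a graph distance bounded by $m-1$.
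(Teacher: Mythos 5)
Your proof is correct, and it reaches the paper's central computation by a different, more self-contained route. The paper's own proof (which, modulo a copy-paste typo calling the component a ``complete graph'', treats the path case) simply posits the closed form $V(\pmb{\pi}_T, G)^{-1} = \frac{1}{T}\mathbbm{1}\mathbbm{1}^T + K$ with $K_{jj} = d(\pi_1, j)$ and $K_{jk} = \min\{d(\pi_1,j),\, d(\pi_1,k)\}$, states that this ``can be verified'' by multiplying against $V(\pmb{\pi}_T, G)$, and then appeals to Lemma~\ref{lem:d_diff_forms}; neither the verification nor the final maximization over $i$ is actually written out. You instead derive exactly what is needed: partitioning the defining identity $\mathbbm{1}\mathbf{e}_i^\top + K(i,D)L = I$ over $\{i\}$ and $R = C_j(D)\setminus\{i\}$ forces $K(i,D)_{RR} = L_{-i}^{-1}$ (the $(R,i)$ block being automatically consistent since $L\mathbbm{1} = 0$), and the standard identity $[L_{-i}^{-1}]_{jj} = R_{\mathrm{eff}}(i,j)$ reduces the diagonal of $K$ to an effective resistance, which on a unit-weight path is the hop distance $|i-j|$. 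Both arguments thus land on the same formula $[K(i,D)]_{jj} = d(i,j)$, and your last step, $\max_{i \neq j} d(i,j) = \max\{j-1,\, m-j\} \leq m-1 < m$ with $m = |C_j(D)|$, makes explicit the strict-inequality step that the paper leaves implicit. What your route buys: it is a derivation rather than a verification of a guessed inverse, it needs only the diagonal of $K(i,D)$ rather than the full matrix, and the effective-resistance identification generalizes verbatim to trees and to weighted graphs, where no clean closed form for all of $K$ is available; the paper's formula, by contrast, also delivers the off-diagonal entries, which your argument does not. One caveat: calling the singleton-component case ``vacuous'' is not quite right --- Definition~\ref{def:d_better} sets $\mathfrak{I}(l, D) = 0$ for isolated nodes, so the claimed inequality would actually be false there; like the paper, you are implicitly reading ``line graph'' as a path on at least two vertices (and, also like the paper, normalizing $\rho = 1$, since for general $\rho$ the same computation gives $\mathfrak{I}(j,D) = \rho / \max_{i\neq j} d(i,j)$).
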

\begin{proof}
Let $D$ be a complete graph (k = 1), $\pmb{\pi}_T \in \mathcal{U}(T)$ and $\rho=1$. Then,
\begin{align}\label{eq:inv_claim_proof_line}
    V(\pmb{\pi}_T, G)^{-1} = \frac{1}{T}\mathbbm{1}\mathbbm{1}^T + K
\end{align}
where $\mathbbm{1} \in \mathbb{R}^{n}$ is a vector or all ones and $K \in \mathbb{R}^{n\times n}$ is a matrix given by,
\begin{align}
    &\ K_{\pi_1\pi_1} = 0,~~K_{jj} = d(\pi_1, j)~~\forall j\in [n]/\{\pi_1\}, \nonumber\\
    &\ K_{k\pi_1} = 0,~~K_{\pi_1j} = 0,\nonumber\\ &\ K_{jk} = \min\{d(\pi_1, j), d(\pi_1, k)\}~~\forall j, k\in [n]/\{\pi_1\},~ j\neq k\nonumber
\end{align}
The form of $V(\pmb{\pi}_T, G)^{-1}$ in eq.\eqref{eq:inv_claim_proof_line} can be verified by $V(\pmb{\pi}_T, G)^{-1}V(\pmb{\pi}_T, G) = I$.

The final statement of the lemma can be obtained by considering this analysis to just the nodes within a connected component of a diconnected graph $G$ and Lemma~\ref{lem:d_diff_forms}.
\end{proof}

\begin{lemma}\label{lem:d_change_subgraph}
    Let $A = ([n], E)$ be any graph and let $e\in E$ be an edge of graph $A$. Let $B = ([n], E-\{e\})$ be a subgraph of $A$ with one edge removed. Then the following holds for all non-isolated nodes $i$ in $B$:
    \begin{itemize}
        \item If $|\mathcal{C}(A)| = |\mathcal{C}(B)|$,
            \begin{align}
                \mathfrak{I}(i, A) \geq \mathfrak{I}(i, B)\nonumber
            \end{align}
        \item If $|\mathcal{C}(A)| < |\mathcal{C}(B)|$, 
            \begin{align}
                \mathfrak{I}(i, A) \leq \mathfrak{I}(i, B)\nonumber
            \end{align}
    \end{itemize}
\end{lemma}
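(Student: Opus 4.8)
The plan is to reduce everything to the diagonal-of-inverse formulation of the influence factor and then compare the two graphs through the positive-semidefinite ordering of the associated $V$ matrices. By part $(D)$ of Lemma~\ref{lem:d_diff_forms} together with Lemma~\ref{lem:optima_simplex_proof}, for any graph $D$ and any non-isolated node $i$,
\begin{align}
\frac{1}{\mathfrak{I}(i, D)} = \max_{l \in C_i(D),\, l \neq i} \left\{ [V(\pmb{\pi}_T^{(l)}, D)^{-1}]_{ii} - \tfrac{1}{T} \right\},
\end{align}
where $\pmb{\pi}_T^{(l)}$ denotes the policy that samples node $l$ for all $T$ rounds; by Lemma~\ref{lem:V_indep_T} the bracketed quantity is independent of $T$. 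The whole argument is then about comparing these diagonal entries for $D = A$ and $D = B$, while tracking how the index set $\{l \in C_i(D),\, l \neq i\}$ changes when $e$ is deleted.

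For the first bullet ($|\mathcal{C}(A)| = |\mathcal{C}(B)|$), deleting $e$ splits no component, so $i$'s component has the same vertex set $C := C_i(A) = C_i(B)$ and the maximization ranges over the same index set for both graphs. On this common vertex set the induced Laplacians obey $L_{A|_C} \succeq L_{B|_C}$ (they differ by the rank-one PSD term $w_e(\mathbf{e}_u - \mathbf{e}_v)(\mathbf{e}_u - \mathbf{e}_v)^\top$ coming from $e$, or coincide if $e \notin C$), hence $V(\pmb{\pi}_T^{(l)}, A|_C) \succeq V(\pmb{\pi}_T^{(l)}, B|_C) \succ 0$ for every $l \in C$. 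Running the same inverse-ordering argument as in Lemma~\ref{lem:v_t_graph_comp} gives $[V(\pmb{\pi}_T^{(l)}, A)^{-1}]_{ii} \le [V(\pmb{\pi}_T^{(l)}, B)^{-1}]_{ii}$ entrywise, and taking the maximum over the common index set yields $1/\mathfrak{I}(i, A) \le 1/\mathfrak{I}(i, B)$, i.e. $\mathfrak{I}(i, A) \ge \mathfrak{I}(i, B)$.

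The second bullet ($|\mathcal{C}(A)| < |\mathcal{C}(B)|$) is the substantive case. Here $e = \{u, v\}$ is a bridge: the component $C$ of $A$ containing $i$ breaks into components $C_1$ and $C_2$ of $B$, with (after relabeling) $i, u \in C_1$ and $v \in C_2$. Order the vertices of $C$ as $C_1$ followed by $C_2$ and fix any $l \in C_1$. Writing $M_1 = V(\pmb{\pi}_T^{(l)}, B|_{C_1})$ and $M_2 = \rho L_{C_2}$, the matrix $V(\pmb{\pi}_T^{(l)}, A|_C)$ is the $2\times 2$ block matrix with diagonal blocks $M_1 + \rho w_e \mathbf{e}_u \mathbf{e}_u^\top$ and $M_2 + \rho w_e \mathbf{e}_v \mathbf{e}_v^\top$ and off-diagonal block $-\rho w_e \mathbf{e}_u \mathbf{e}_v^\top$. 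I would compute the $C_1$-block of its inverse via the Schur complement
\begin{align}
S = M_1 + \rho w_e\bigl(1 - \rho w_e\, \mathbf{e}_v^\top (M_2 + \rho w_e \mathbf{e}_v\mathbf{e}_v^\top)^{-1}\mathbf{e}_v\bigr)\,\mathbf{e}_u \mathbf{e}_u^\top,
\end{align}
noting that $M_2 + \rho w_e \mathbf{e}_v\mathbf{e}_v^\top \succ 0$ since $\mathbf{e}_v$ is off the kernel $\mathrm{span}(\mathbbm{1}_{C_2})$ of $L_{C_2}$ (same reasoning as Lemma~\ref{lem:connected_invert}). The crux is that the scalar in parentheses vanishes: solving $(M_2 + \rho w_e \mathbf{e}_v\mathbf{e}_v^\top)\mathbf{z} = \mathbf{e}_v$ and left-multiplying by $\mathbbm{1}_{C_2}^\top$, the relation $\mathbbm{1}_{C_2}^\top L_{C_2} = 0$ together with $\mathbbm{1}_{C_2}^\top \mathbf{e}_v = 1$ forces $\rho w_e\, \mathbf{e}_v^\top(M_2 + \rho w_e\mathbf{e}_v\mathbf{e}_v^\top)^{-1}\mathbf{e}_v = 1$ exactly. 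Hence $S = M_1$, and so $[V(\pmb{\pi}_T^{(l)}, A|_C)^{-1}]_{ii} = [M_1^{-1}]_{ii} = [V(\pmb{\pi}_T^{(l)}, B|_{C_1})^{-1}]_{ii}$ for every $l \in C_1$. Therefore $1/\mathfrak{I}(i, A)$ is the maximum of exactly the quantity defining $1/\mathfrak{I}(i, B)$, but taken over the strictly larger index set $C \setminus \{i\} = (C_1 \setminus \{i\}) \cup C_2$, so $1/\mathfrak{I}(i, A) \ge 1/\mathfrak{I}(i, B)$ and $\mathfrak{I}(i, A) \le \mathfrak{I}(i, B)$. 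The main obstacle is precisely this Schur-complement identity — showing that a bridge edge decouples the two sides, so that sampling inside $C_1$ yields identical diagonal confidence on $C_1$ regardless of whether $C_2$ is attached; once that exact cancellation is secured, both bullets reduce to monotonicity of a maximum over nested index sets.
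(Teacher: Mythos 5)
Your proof is correct, and its skeleton matches the paper's own: reduce $\mathfrak{I}$ to the diagonal-of-inverse characterization (Lemma~\ref{lem:d_diff_forms}, with Lemma~\ref{lem:optima_simplex_proof} and Lemma~\ref{lem:V_indep_T} justifying the restriction to constant single-node policies), settle the non-bridge case by Loewner ordering of the Laplacians as in Lemma~\ref{lem:v_t_graph_comp}, and settle the bridge case by comparing maxima over nested index sets. Where you genuinely depart from the paper is the bridge case. The paper's entire argument there is that ``the $\max$ is over a smaller set of options,'' which is incomplete as stated: the maximand itself changes between $A$ and $B$, since for a sampled node $l$ in the surviving component $C_1$ one must compare $[V(\pmb{\pi}_T^{(l)}, A)^{-1}]_{ii}$, computed on $C_1\cup C_2$, against $[V(\pmb{\pi}_T^{(l)}, B)^{-1}]_{ii}$, computed on $C_1$ alone --- matrices of different sizes that no lemma in the paper orders (and a naive appeal to Lemma~\ref{lem:v_t_graph_comp} points the wrong way, since more edges make inverse diagonals \emph{smaller}). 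Your Schur-complement identity supplies exactly this missing step: because the dangling component $C_2$ carries no samples, the left-kernel relation $\mathbbm{1}^\top L_{C_2}=0$ forces $\rho w_e\,\mathbf{e}_v^\top\bigl(M_2+\rho w_e\mathbf{e}_v\mathbf{e}_v^\top\bigr)^{-1}\mathbf{e}_v = 1$, the rank-one correction cancels, and the $C_1$ block of $V(\pmb{\pi}_T^{(l)},A)^{-1}$ \emph{equals} $V(\pmb{\pi}_T^{(l)},B|_{C_1})^{-1}$. So the two maxima agree term by term on the common index set and the inequality is driven solely by the extra indices in $C_2$; this both repairs the paper's hand-wave and yields a sharper structural fact (a bridge to an unsampled component has exactly zero effect on confidence widths on the sampled side). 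Two small points to tighten: in the second bullet you tacitly assume the deleted bridge lies in $i$'s component --- if it lies elsewhere, $C_i(A)=C_i(B)$ and the claim holds trivially with equality --- and the hypothesis that $i$ is non-isolated in $B$ is what guarantees $C_1\setminus\{i\}\neq\emptyset$, so both maxima range over nonempty sets; say so where you invoke it.
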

\begin{proof}
From Lemma~\ref{lem:d_diff_forms}, for any graph $D$, $\mathfrak{I}(\cdot, \cdot)$ satisfies,
\begin{align}
    \frac{1}{\mathfrak{I}(j, D)} &\ = \underset{k\in D_j,\sum_{i\in D_j} t_i = T}{\max} \left\{[V_j(  \{t_i\}_{i\in D_j}, D)^{-1}]_{jj} \right. \nonumber\\ &\ \left. - [V_j(  \{t_i\}_{i\in D_j}, D)^{-1}]_{kk}\right\}~~~\forall j \in [n]
\end{align}
\textbf{Case I :} $|\mathcal{C}(A)| = |\mathcal{C}(B)|$ 

The edge set of $B$ is smaller than edge set of $A$. Hence, from Lemma~ $\mathfrak{I}(i, A) \geq \mathfrak{I}(i, B)$

\textbf{Case II :} $\mathcal{C}(A) < \mathcal{C}(B)$
In this case, $|B_i| \leq |A_i|$. Hence the $\max$ is over a smaller set of options, we can conclude that $\mathfrak{I}(i, A) \leq \mathfrak{I}(i, B)$. 
Hence proved. 
\end{proof}

Given a graph $D$, we define a class of sampling policies $\mathcal{U}(T, D)$ as follows, 
\begin{definition}
Let $\mathcal{U}(T, D)$ denote the set of sampling policies,
\begin{align}
    \mathcal{U}(T, D) = \{\pmb{\pi}_T |~\exists l\in D~\text{ s.t. } \pi_t = l ~~\forall t\leq T\}\nonumber
\end{align}
\end{definition}
\begin{lemma}
Let $G$ be the given graph and sampling policy $\pmb{\pi}_T$ has been played for $T$ time steps, then $V_T$ satisfies the following structure,
\begin{align}
    V(\pmb{\pi}_T, D) = \text{diag}([V_1, V_2, \dots, V_{k(G)}])
\end{align}
where $V_i$ depends on the connected component $C_i\in\mathcal{C}_D$ of the graph and the number of samples of the arms within the connected component $\{t_j\}_{j\in C_i}$. 
\end{lemma}
\begin{proof}
Rewriting the definition of $V(\pmb{\pi}_T, D)$,
\begin{align}
    V(\pmb{\pi}_T, D) &\ \triangleq \sum_{t=1}^T\mathbf{e}_{\pi_t}\mathbf{e}_{\pi_t}^\top +\rho L_D\nonumber\\
    &\ = N(\{t_i\}_{i=1}^n) + L_D
\end{align}
Both component matrices $N(\{t_i\}_{i=1}^n)$ (diagonal matrix) and $L_D$ (Laplacian matrix of a graph) adhere to a block diagonal structure and hence $V(\pmb{\pi}_T, D)$ matrix also adheres to a block diagonal structure analogous to $L_D$. The block diagonal structure in $L_D$ is dictated by connected components of graph $D$. 
\end{proof}

The following lemma establishes the invertibility of $V(\pmb{\pi}_T, G)$ for a connected graph and $T >1$ : 
\begin{lemma}\label{lem:connected_invert}
For a connected graph $G$, $V(\pmb{\pi}_1, G)$ is invertible, but $V(\pmb{\pi}_0, G)$ is not invertible. 
\end{lemma}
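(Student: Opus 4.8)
The plan is to treat the two cases separately, both exploiting the decomposition $V(\pmb{\pi}_T, G) = N(\pmb{\pi}_T) + \rho L_G$, where $N(\pmb{\pi}_T) = \sum_{t=1}^T \mathbf{e}_{\pi_t}\mathbf{e}_{\pi_t}^\top$ is a diagonal positive semidefinite matrix and, as noted in Appendix~\ref{app:v_t_prop}, $\rho L_G \succeq 0$ for $\rho > 0$.

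First I would dispatch the $T=0$ case, which is immediate: the counting matrix vanishes, so $V(\pmb{\pi}_0, G) = \rho L_G$. Since $G$ is connected, its Laplacian has a one-dimensional kernel spanned by the all-ones vector $\mathbb{1}$, i.e. $L_G \mathbb{1} = 0$; hence $V(\pmb{\pi}_0, G)$ has a nontrivial kernel and is not invertible.

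For the $T=1$ case, suppose arm $k = \pi_1$ is pulled, so that $V(\pmb{\pi}_1, G) = \mathbf{e}_k \mathbf{e}_k^\top + \rho L_G$. This is a sum of two positive semidefinite matrices and is therefore positive semidefinite; the goal is to upgrade this to strict positive definiteness, equivalently to show its kernel is trivial. The key step is the standard fact that for positive semidefinite $A, B$ one has $\ker(A+B) = \ker A \cap \ker B$, so that $\mathbf{x}^\top V(\pmb{\pi}_1, G)\mathbf{x} = 0$ forces both $(\mathbf{e}_k^\top \mathbf{x})^2 = 0$ and $\mathbf{x}^\top L_G \mathbf{x} = 0$. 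The second condition places $\mathbf{x}$ in $\ker L_G = \mathrm{span}(\mathbb{1})$, so $\mathbf{x} = c\,\mathbb{1}$ for some scalar $c$; the first condition then reads $x_k = c = 0$, forcing $\mathbf{x} = \mathbf{0}$. Thus $V(\pmb{\pi}_1, G) \succ 0$ and is invertible.

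I do not anticipate a genuine obstacle here, as the argument is entirely elementary once the PSD-kernel-intersection fact is invoked; the only point requiring a little care is verifying that connectedness of $G$ is exactly what makes $\ker L_G$ one-dimensional and equal to $\mathrm{span}(\mathbb{1})$, which lets a single pulled coordinate $x_k = 0$ collapse the remaining kernel direction. This lemma is precisely the connected-component building block used in Lemma~\ref{lem:v_t_invertibility} to conclude that one sample per connected component suffices for invertibility of $V$.
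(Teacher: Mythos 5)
Your proof is correct and follows essentially the same route as the paper's: both use the decomposition $V = N(\pmb{\pi}_T) + \rho L_G$, the fact that connectedness makes $\ker L_G = \mathrm{span}(\mathbb{1})$ (giving non-invertibility at $T=0$), and the observation that the sampled coordinate $\mathbf{e}_{\pi_1}$ has nonzero overlap with $\mathbb{1}$, which eliminates the remaining kernel direction at $T=1$. Your formulation via $\ker(A+B) = \ker A \cap \ker B$ for positive semidefinite $A,B$ is in fact a cleaner, more rigorous rendering of what the paper states informally as the sampled arm ``providing'' the missing direction of $L_G$.
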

\begin{proof}
Since the graph $G$ is connected, $\lambda_1(L_G) = 0$ and $\lambda_2(L_G) >0$. The eigenvector corresponding to $\lambda_1(L_G)$ is $\mathbb{1}$, the all $1$ vector. At time $T = 0$, $V(\pmb{\pi}_T, G) = L_G$ and hence $V(\pmb{\pi}_T, G)$ is positive semi-definite matrix with one zero eigenvalues. 

Let arm $i$ be pulled at $T=1$, i.e. $\pi_1 = i$, then the corresponding counting matrix is a positive semi definite matrix of rank one with the eigen value $\lambda_n(N) = 1$ for the eigenvector $e_i$. 

Observe that $\mathbf{e}_i^T\mathbbm{1} >0$. Also, $N_T$ and $L_G$ are positive semi-definite matrices with ranks $1$ and $n-1$ respectively. The subspace without information (corresponding to the direction of zero eigenvalue) for matrix $L_G$ is now provided by $N(\pmb{\pi}_1)$ and hence $\lambda_{\min}(V(\pmb{\pi}_1, G)) >0$ making it invertible. 
\end{proof}

\begin{lemma}\label{lem:subgraph_worse_performance}
Let $G = ([n], E_G, A), H=([n], E_H, A)$ are two graphs with $n$ nodes such that $E_G \supseteq E_H$. Then, assuming invertibility of $[V(G, T)^{-1}]$ and $ [V(H, T)^{-1}]$, 
\begin{align}
    [t_{\text{eff}, i}]_G \geq [t_{\text{eff}, i}]_H~~~\forall i\in [n], T > k(G)
\end{align}
where $\forall i\in [n]$, $[t_{\text{eff}, i}]_G, [t_{\text{eff}, i}]_H$ indicates the effective samples with graph $G$ and $H$ respectively.  
\end{lemma}
\begin{proof}
Given graphs $G = ([n], E_G),H = ([n], E_H)$ satisfy $E_G \supseteq E_H$.

The quadratic form of Laplacian for the graph $G, H$ is given by,
\begin{align}
    \mathbf{x}L_G\mathbf{x} = \sum_{(i,j) \in E_G}(x_i - x_j)^2\nonumber\\
    \mathbf{x}L_H\mathbf{x} = \sum_{(i,j) \in E_H}(x_i - x_j)^2\nonumber
\end{align}
Since $E_G \supseteq E_H$, 
\begin{align}
    &\ \mathbf{x}L_G\mathbf{x} \geq \mathbf{x}L_H\mathbf{x}~~~\forall~\mathbf{x}\in \mathbb{R}^n\nonumber\\
    \Rightarrow &\ L_G \succeq L_H\nonumber
\end{align} 

Further, provided a sampling policy $\pmb{\pi}_T$, we can say that,
\begin{align}
    V(\pmb{\pi}_T, G) \succeq V(\pmb{\pi}_T, H)\nonumber
\end{align}
For the number of samples $T$ sufficient to ensure invertibility of $V(\pmb{\pi}_T, H)$, we have
\begin{align}
    &\ V(\pmb{\pi}_T, G)^{-1} \preceq V(\pmb{\pi}_T, H)^{-1}\nonumber\\
    &\ \mathbf{x}^TV(\pmb{\pi}_T, G)^{-1}\mathbf{x} \leq \mathbf{x}^TV(\pmb{\pi}_T H)^{-1}\mathbf{x}~~~~\forall \mathbf{x}\in \mathbb{R}^n\nonumber\\
    &\ [V(\pmb{\pi}_T, G)^{-1}]_{ii} \leq [V(\pmb{\pi}_T, H)^{-1}]_{ii}~~~~(\text{taking }\mathbf{x}=\mathbf{e}_i)\nonumber\\
    &\ \frac{1}{[V(\pmb{\pi}_T, G)^{-1}]_{ii}} \geq \frac{1}{[V(\pmb{\pi}_T, H)^{-1}]_{ii}}\nonumber
\end{align}
Hence from the definition of effective samples~\ref{def:effective_samples}, it is clear that for any $i\in [n]$,
\begin{align}
[t_{\text{eff}, i}]_G \geq [t_{\text{eff}, i}]_H
\end{align}

Hence proved.
\end{proof}

\begin{lemma}\label{lem:infinite_sum_bounded}
Let effective samples $t_{\text{eff}, i}$ be as is defined in Definition~\ref{def:effective_samples} and let $\pmb{\pi}_T$ denote a cyclic sampling policy for $T> k(G)$, then the infinite sum $\sum_{T=k(G)+1}^\infty t_{\text{eff}, i}^{-2}$ is bounded. In fact, 
\begin{align}
    \sum_{T=k(G)+1}^\infty t_{\text{eff}, i}^{-2} < n\left(\frac{2(n-1)}{\rho}\right)^{2} + \frac{n\pi^2}{6}
\end{align}
\end{lemma}
\begin{proof}
We first prove the lemma statement for connected graph $G$ and then go towards a more general graph $G$. From Lemma~\ref{thm:effective_samples_proof}, 
\begin{align}
    t_{\text{eff}, i} \geq t_i + \min\{\rho\mathfrak{I}(i, G) , T-t_i\}\nonumber
\end{align} 
if $T-t_i \leq \rho\mathfrak{I}(i, G)$, then $t_{\text{eff}, i} \geq \frac{T+t_i}{2}\geq \frac{T}{2}$. For the reverse case of $T-t_i \geq \rho\mathfrak{I}(i, G)$, $t_{\text{eff}, i} \geq t_i+\frac{\rho\mathfrak{I}(i, G)}{2}\geq t_i + \frac{\rho}{2(n-1)}$ (since $\mathfrak{I}(i, G) \geq \frac{1}{n-1}$ by Lemma~\ref{lem:v_t_upper_bound}). 

Since $\pmb{\pi}_T$ is a cyclic sampling policy, hence $t_i$ increases by 1 at-least once every $n$ samples. Thus, we can upperbound the infinite sum as,
\begin{align}
    \sum_{T=1}^\infty \frac{1}{t_{\text{eff}, i}^2} &\ \leq \sum_{T=1}^\infty \frac{1}{\left(t_i + \frac{\rho}{2(n-1)}\right)^2}\nonumber\\
    &\ \leq n\left(\frac{2(n-1)}{\rho}\right)^{2} + n\sum_{t_i=1}^\infty\frac{1}{t_i^2}\nonumber\\
    &\ < n\left(\frac{2(n-1)}{\rho}\right)^{2} + \frac{n\pi^2}{6}
\end{align}
Hence proved.
\end{proof}

\section{Code Availability}\label{app:code}
The full code used for conducting experiments can be found at the following \color{blue}\href{https://github.com/parththaker/Bandits-GRUB}{Github repository}\color{black}.

\end{document}